\setlist{leftmargin=5.5mm}
\newcommand{\ST}{\mathcal{S}}
\newcommand{\AC}{\mathcal{A}}
\newcommand{\pbm}{\mathcal{P}}
\newcommand{\transprob}{\mathcal{T}}
\newcommand{\rn}{\mathbb{R}}
\newcommand{\PP}{\mathbb{P}}
\newcommand{\E}{\mathbb{E}}
\DeclareMathOperator*{\argmin}{arg\,min}
\DeclareMathOperator*{\argmax}{arg\,max}
\newtheorem{defn}{Definition}
\newtheorem{lem}{Lemma}
\newtheorem{coro}{Corollary}
\newtheorem{thm}{Theorem}
\newtheorem{rmk}{Remark}
\newtheorem{ex}{Example}
\title{Identifiability in inverse reinforcement learning}
\author{Haoyang Cao\footnote{Alan Turing Institute, \url{hcao@turing.ac.uk}}, Samuel N. Cohen\footnote{Mathematical Institute, University of Oxford and Alan Turing Institute, \url{samuel.cohen@maths.ox.ac.uk}}, {\L}ukasz Szpruch\footnote{School of Mathematics, University of Edinburgh and Alan Turing Institute, \url{L.Szpruch@ed.ac.uk}}}
\begin{document}

\maketitle

\begin{abstract}
  Inverse reinforcement learning attempts to reconstruct the reward function in a Markov decision problem, using observations of agent actions. As already observed in \citet{russell1998learning} the problem is ill-posed, and the reward function is not identifiable, even under the presence of perfect information about optimal behavior. We provide a resolution to this non-identifiability for problems with entropy regularization. For a given environment, we fully characterize the reward functions leading to a given policy and demonstrate that, given demonstrations of actions for the same reward under two distinct discount factors, or under sufficiently different environments, the unobserved reward can be recovered up to a constant. We also give general necessary and sufficient conditions for reconstruction of time-homogeneous rewards on finite horizons, and for action-independent rewards, generalizing recent results of \cite{Kim2021} and \cite{fu2017learning}.
\end{abstract}

\section{Introduction}\label{sec: review}

Inverse reinforcement learning aims to use observations of agents' actions to determine their reward function.  The problem has roots in the very early stages of optimal control theory; \citet{Kalman64} raised the question of whether, by observation of optimal policies, one can recover coefficients of a quadratic cost function (see also \citet{boyd1994linear}). This question naturally generalizes to the generic framework of Markov decision process and stochastic control. 

In the 1970s, these questions were taken up within economics, as a way of determining utility functions from observations. For instance, \citet{keeney_raiffa_1993} set out to determine a proper ordering of all possible states which are deterministic functions of actions. In this setup, the problem is static and the outcome of an action is immediate. Later in \citet{sargent1978estimation}, a dynamic version of a utility assessment problem was studied, under the context of finding the proper wage through observing dynamic labor demand. 

As exemplified by Lucas' critique\footnote{The critique is best summarized by the quotation: ``Given that the structure of an econometric model consists of optimal decision rules of economic agents, and that optimal decision rules vary systematically with changes in the structure of series relevant to the decision maker, it follows that any change in [regulatory] policy will systematically alter the structure of econometric models.'' (\citet{Lucas1976})}, in many applications it is not enough to find \emph{some} pattern of rewards corresponding to observed policies; instead we may need to identify \emph{the specific} rewards agents face, as it is only with this information that we can make valid predictions for their actions in a changed environment. In other words, we do not simply wish to learn a reward which allows us to imitate agents in the current environment, but which allows us to predict their actions in other settings.

In this paper, we give a precise characterization of the range of rewards which yield a particular policy for an entropy regularized Markov decision problem. This separates the main task of estimation (of the optimal policy from observed actions) from the inverse problem (of inferring rewards from a given policy). We find that even with perfect knowledge of the optimal policy, the corresponding rewards are not fully identifiable; nevertheless, the space of consistent rewards is parameterized by the value function of the control problem. In other words, the reward can be fully determined given the optimal policy and the value function, but the optimal policy gives us no direct information about the value function.

We further show that, given knowledge of the optimal policy under two different discount rates, or sufficiently different transition laws, we can uniquely identify the rewards (up to a constant shift).  We also give conditions under which action-independent rewards, or time-homogenous rewards over finite horizons, can be identified. This demonstrates the fundamental challenge of inverse reinforcement learning, which is to disentangle immediate rewards from future rewards (as captured through preferences over future states).

\section{Background on reinforcement learning}\label{sec:MDP}
The motivation behind inverse reinforcement learning is to use observed agent behavior to identify the rewards motivating agents.  Given these rewards, one can forecast future behavior, possibly under a different environment. In a typical reinforcement learning\footnote{Reinforcement learning and optimal control are closely related problems, where reinforcement learning typically focuses on the challenge of numerically learning a good control policy, while optimal control focuses on the description of the optimizer. In the context of the inverse problem we consider they are effectively equivalent and we will use the terms interchangeably.} (RL) problem, an agent learns an optimal policy to maximize her total reward by interacting with the environment. 

In order to analyse the inverse reinforcement learning problem, we begin with an overview of the `primal' problem, that is, how to determine optimal policies given rewards.  We particularly highlight a entropy regularized version of the Markov decision process (MDP), which provides a better-posed setting for inverse reinforcement learning. For mathematical simplicity, we focus on discrete-time problems with finitely many states and actions; our results can largely be transferred to continuous settings, with fundamentally the same proofs, however some technical care is needed.

\subsection{Discrete Markov decision processes with entropy regularization}
\paragraph{The environment.} We consider a simple Markov decision process (MDP) on an infinite horizon. The MDP $\mathcal{M}=(\ST,\AC,\transprob,f,\gamma)$ is described by: a finite state space $\ST$; a finite set of actions $\AC$; a (Markov) transition kernel  $\transprob:\ST\times \AC \to \pbm(\ST)$, that is, a function $\transprob$ such that $\transprob(s, a)$ gives probabilities\footnote{Here, and elsewhere, we write $\pbm(X)$ for the set of all probability distributions on a set $X$.} of each value of $S_{t+1}$, given the state $S_t=s$ and action $A_t=a$ at time $t$;
    and a reward function $f:\ST\times \AC\rightarrow\mathbb{R}$ with discount factor $\gamma\in[0,1)$.

An agent aims to choose a sequence of actions $\{A_0, A_1,...\}$ from $\AC$ in order to to maximize the expected value of total reward
\[\textstyle \sum_{t=0}^\infty \gamma^t f(S_t, A_t).\]

It will prove convenient for us to allow randomized policies $\pi$, that is, functions $\pi:\ST\to \pbm(\AC)$, where $\pi(\cdot|s)$ is the distribution of actions the agent takes when in state $s$. For a given randomized policy $\pi$, we define $\transprob_{\pi}\in \pbm(\ST)$, the distribution of $S_{t+1}$ given state $S_t$, by
\[\textstyle   \transprob_{\pi}(S_{t+1}=s'|S_t=s)=\sum_{a\in\AC}\transprob(s'|s,a)\pi(a|s).\]

Given an initial distribution $\rho\in \mathcal{P}(\ST)$ for $S_0$ and a policy $\pi:\ST\to \mathcal{P}(\AC)$, we obtain  a (unique) probability measure $\PP^{\pi}_{\rho}$  such that, and any $a\in \AC, s, s'\in \ST$, $\PP^{\pi}_{\rho}(S_0 = s)=\rho(s)$ and
\[ \textstyle 
 \PP^{\pi}_{\rho}(A_t = a|S_t = s)=\pi(a|s), \quad \PP^{\pi}_{\rho}(S_{t+1}=s'| S_t = s, A_t = a)=\transprob(s'|s,a), \text{ for all }t.
\]
We write $\mathbb{E}^{\pi}_{\rho}$ for the corresponding expectation and $\mathbb{E}^{\pi}_{s}$ when the initial state is given by $s\in\ST$. The classic objective in a MDP is to maximize the expected value $\mathbb{E}_{s}^{\pi}\big[\sum_{t=0}^{\infty}\gamma^{t} f(S_t,A_t)\big]$.
With this objective, one can show (for example, see \citet{bertsekas2004stochastic,puterman2014markov}) that there is an optimal deterministic control (i.e. a policy $\pi$, taking values zero and one, which maximizes the expected value). This implies that, typically, an optimal agent will only make use of a single action for each state, and the choice of this action will not vary smoothly with changes in the reward, discount rate, or transition kernel. 

\paragraph{Entropy regularised MDP.} Given the lack of smoothness in the classical MDP, and to encourage exploration, a well-known variation on the classic MDP introduces a regularization term based on the Shannon entropy. Given a policy $\pi$ and regularization coefficient $\lambda\ge 0$, the entropy regularized value of a policy $\pi$, when starting in state $s$, is defined by
\begin{equation}
\label{eq value function}
	V^\pi_{\lambda}(s)
	:= \mathbb{E}_{s}^{\pi}\bigg[\sum_{t=0}^{\infty}\gamma^{t}\bigg( f(s_t,a_t)-\lambda\,\log \Big(\pi(a_t|s_t)\Big)\bigg)\bigg] 
	= \mathbb{E}_{s}^{\pi}\bigg[\sum_{t=0}^{\infty}\gamma^{t} \bigg(f(s_t,a_t)+\lambda \mathcal{H}\big(\pi(\cdot|s_t)\big)\Big)\bigg)\bigg].\,
\end{equation}
Here $\mathcal{H}(\pi) = -\sum_{a\in\AC} \pi(a)\log(\pi(a)) $ is the entropy of $\pi$. We call this setting the regularised MDP $\mathcal{M_{\lambda}}=(\ST,\AC,\transprob,f,\gamma,\lambda)$.
The optimal value is given by $V^{*}_{\lambda}(s):= \max_{\pi} V^\pi_\lambda(s),$
where the maximum is taken over all (randomized feedback\footnote{Given the Markov structure there is no loss of generality when restricting to policies of feedback form. Further, by replacing $\AC$ with the set of maps $\ST\to \AC$ if necessary, all feedback controls $a(s)$ can be written as deterministic controls $a(\cdot)$ in a larger space, so when convenient we can consider controls which do not depend on the state without loss of generality.}) policies $\pi:\ST\to\pbm(\AC)$. 

We define the state-action value of $\pi$ at
$(s,a)\in \ST \times \AC$ by
\begin{equation}
\label{eq:Qpi}
Q^{\pi}_{\lambda}(s,a)=f(s,a)+\gamma\sum_{s'\in \ST}\transprob(s'|s,a)V^\pi_{\lambda}(s').
\end{equation}
%This describes the value of starting in state $s$, following action $a$ for one step, and then proceeding to follow the policy $\pi$.  
The dynamic programming principle (e.g.~\citet[Theorem 2]{haarnoja2017reinforcement}) gives
\begin{equation} \label{eq dp}
\begin{split}
V^{*}_\lambda(s) &= \max_{m \in \pbm(\AC)}\bigg[\sum_{a\in\AC} \bigg(f(s,a) - \lambda\log \Big(m(a)\Big) + \gamma \mathbb{E}_{s_1\sim \transprob(\cdot|s,a)}\Big[V_\lambda^{*}(s_1) \Big]
\bigg)m(a) \bigg]\\
&=\lambda \max_{m \in \pbm(\AC)}\bigg[\frac1\lambda \sum_{a\in\AC} \bigg(f(s,a) 
+ \gamma \mathbb{E}_{s_1\sim \transprob(\cdot|s,a)}\Big[V_\lambda^{*}(s_1) \Big]
\bigg)m(a) +\mathcal{H}(m)\bigg].
\end{split}
\end{equation}
Observing that on the right hand side we are maximizing over a linear function in $m$ plus an entropy term, and applying~\cite[Proposition 1.4.2]{dupuis2011weak}, we have that for any $s\in S$
\begin{equation}
\label{eq softmax}
V^*_\lambda(s) = V^{\pi^\ast_\lambda}_\lambda(s) = \lambda \log \sum_{a\in\AC} e^{ \frac1\lambda \big(   f(s,a) + \gamma \mathbb{E}_{s_1\sim \transprob(\cdot|s,a)}[V_\lambda^{\pi^\ast_\lambda}(s_1) ] \big)}
= \lambda \log \sum_{a\in\AC} \exp\bigg( \frac1\lambda Q^{\pi^\ast_\lambda}_\lambda(s,a) \bigg) \,,
\end{equation}
and the maximum in~\eqref{eq dp} is achieved by the randomized policy $m(a) = \pi^*_\lambda(a|s)$, where 
\begin{equation}
    \label{eq:optimalpolicyMDP}
\pi^\ast_\lambda(a|s) = \frac{\exp\big( \frac1\lambda Q^{\pi^\ast_\lambda}_\lambda(s,a) \big)}{\sum_{a'\in\AC} \exp\big( \frac1\lambda Q_\lambda^{\pi^{\ast}_\lambda} (s,a') \big)}\qquad \text{for }a\in \AC.
\end{equation}
From~\eqref{eq softmax} we see that $\exp\big( V^{\ast}_\lambda(s)/\lambda\big) = \sum_{a\in\AC} \exp\big(Q^{\pi^\ast_{\lambda}}_\lambda(s,a)/\lambda\big)
$ and so we can write the optimal policy as
\begin{align}
	\label{eq:optimalpolicyMDP2}
	\begin{split}\pi^{\ast}_{\lambda}(a|s)&=\exp\Big(\big(Q^{\pi^{\ast}_\lambda}_{\lambda}(s,a)-V^{\ast}_{\lambda}(s)\big)\big/\lambda\Big) \\&= \exp\Big(\big(f(s,a)+ \mathbb{E}_{s_1\sim \transprob(\cdot|s,a)}[\gamma V_\lambda^{\ast}(s_1)-V_\lambda^{\ast}(s) ]\big)\big/\lambda\Big), 
	\end{split}
\end{align}

From this analysis, we make the following observations regarding the regularized MDP:
\begin{itemize}
\item The optimal policy will select all actions in $\mathcal{A}$ with some positive probabilities. 
\item If $\lambda$ is increased, this has the effect of `flattening out' the choice of actions, as seen in the softmax function in \eqref{eq:optimalpolicyMDP}. Conversely, sending $\lambda\to 0$ will result in a true maximizer being chosen, and the regularized problem degenerates to the classical MDP. 
\item Adding a constant to the reward does not change the policy.
\end{itemize}
\begin{rmk}In many modern approaches, one replaces dependence on the state with dependence on a space of `features'. This has benefits when fitting a model, but does not significantly change the problem considered.
\end{rmk}

\section{Analysis of inverse reinforcement learning}\label{sec:IRL}
We now shift our focus to `inverse' reinforcement learning, that is, the problem of inferring the reward function given observation of agents' actions. 

Consider a discrete time, finite-state and finite-action MDP $\mathcal{M}_\lambda$, as described in Section \ref{sec:MDP}. Suppose a `demonstrator' agent acts optimally, and hence generates a \emph{trajectory} of states and actions for the system $\tau = (s_1, a_1, s_2, a_2,...)$.
We assume that it is possible for us to observe $\tau$ (over a long period), and seek to infer the reward $f$ which the agent faces.

A first observation is that, assuming each state $s\in \ST$ appears infinitely often in the sequence $\tau$, and the agent uses a randomized feedback control $\pi_\lambda(a|s)$, it is possible to infer this control. A simple consistent estimator for the control is 
\[(\pi_\lambda)_N(a|s) = \frac{\#\{a_t = a \text{ and }s_t = s; \quad t\leq N\}}{\#\{s_t=s; \quad t\leq N\}} \to \pi_\lambda(a|s) \text{ a.s. as }N\to\infty.\]
Similarly, assuming each state-action pair $(s,a)$ appears infinitely often in $\tau$, we can infer the controlled transition probabilities $\transprob(s'|s,a)$. A simple consistent estimator is given by 
\[\transprob_N(s'|s,a)  = \frac{\#\{s_t=s, a_t = a \text{ and }s_{t+1} = s'; \quad t\leq N\}}{\#\{s_t=s \text{ and } a_t = a ; \quad t\leq N\}} \to \transprob(s'|s,a) \text{ a.s. as }N\to\infty.\]

If our agent is known to follow a regularized optimal strategy, as in \eqref{eq:optimalpolicyMDP2}, and we have a simple accessibility condition\footnote{In particular, for every  pair of states $s, s'$, there needs to exist a finite sequence $s=s_1, s_2, ..., s_n=s'$ of states and $a_1,..., a_{n-1}$ of actions such that $\prod_{k=1}^{n-1} \transprob(s_{k+1}|s_k,a_k)>0$. This is certainly the case, for example, if we assume $\transprob(s'|s,a)>0$ for all  $s, s'\in \ST$ and $a\in\AC$.} on the underlying states, then every state-action pair will occur infinitely often in the resulting trajectory. Therefore, given sufficiently long observations, we will know the values of $\pi(a|s)$ and $\transprob(s'|s,a)$ for all $s, s' \in \ST$ and $a\in \AC$.

This leads, naturally, to an abstract version of the inverse reinforcement learning problem: Given knowledge of $\pi(a|s)$ and $\transprob(s'|s,a)$ for all $s,s'\in \ST$ and $a\in \AC$, and assuming $\pi$ is generated by an agent following an entropy-regularized MDP $\mathcal M_{\lambda}$, can we determine the initial reward function $f$ that the agent faces?

As observed by \citet{Kalman64}, for an unregularized controller the only thing we can say is that the observed controls are maximizers of the state-action value function, and not even that these maximizers are unique. Therefore, very little can be said about the underlying reward in the unregularized setting. Indeed, as already observed in  \citet{russell1998learning} the problem of constructing a reward  using state-action data is fundamentally ill-posed. One pathological case is to simply take $f$ constant, so all actions are optimal. Alternatively, if we infer a unique optimal action $a^{\star}(s)$ for each $s$, we then could take any $f(s,a^{\star}(s)) \in (0,\infty]$ and $f(s,a)=0$ for $a\neq a^{\star}(s)$.

\paragraph{Further literature.} One of the earliest discussions of inverse reinforcement learning (IRL) in the context of machine learning can be found in~\citet{ng2000algorithms}. Their method is to first identify  a class of reward functions, for an IRL problem with finitely many states and actions, a deterministic optimal strategy, and the assumption that the reward function depends only on the state variable. Then, assuming the reward function is expressable in terms of some known basis functions in the state, a linear programming formulation for the IRL problem is presented, to pick the reward function that maximally differentiates optimal policy from the other policies. This characterization of reward functions demonstrates the general non-uniqueness of solutions to IRL problems.

In past two decades, there have been many algorithms proposed to tackle IRL problems. One significant category of algorithms (MaxEntIRL) arises from the maximum entropy approach to optimal control. In~\citet{ziebart2010modeling}, IRL problems were linked with maximum causal entropy problems with statistical matching constraints. Similar models can be found in~\citet{abbeel2004apprenticeship}; \cite{ziebart2008maximum}, \cite{levine2011nonlinear} and~\citet{boularias2011relative}. A connection between maximum entropy IRL and GANs has been established in \cite{finn2016connection}. Further related papers will be discussed in the text below.

In MaxEntIRL, one assumes that trajectories are generated\footnote{As discussed by \citet{levine2018reinforcement}, for deterministic problems this simplifies to $P(\tau)\propto \exp(\sum_t f(a_t,s_t))$, which is often taken as a starting point.} with a law
\[
\mathbb P(\tau) = \frac{\rho(s_0)}Z\prod_{t}\transprob(s_{t+1}|s_t,a_t)e^{\sum_t f(a_t,s_t)}\]
for a constant $Z>0$. Comparing with the distribution of trajectories from an optimal regularized agent, this approach implicitly assumes that $\pi(a|s)\propto \exp\{f(a,s)\}$. Comparing with \eqref{eq:optimalpolicyMDP2}, this is analogous to assuming the value function is a constant (from which we can compute $Z$) and $\lambda=1$. This has a concrete interpretation: that many IRL methods make the tacit assumption that the demonstrator agent is myopic. As we shall see in Theorem \ref{thm:IRL-MDP1}, for inverse RL the value function can be chosen arbitrarily, demonstrating the consistency of this approach with our entropy-regularized agents. We discuss connections with MaxEntIRL further in Appendix \ref{sec:GCL}.

\subsection{Inverse Markov decision problems}\label{sec:IRLMDP}

We consider a Markov decision problem as in Section \ref{sec:MDP}. As discussed above, we assume that we have full knowledge of $\ST, \AC, \transprob,\gamma$, and of the regularization parameter $\lambda$ and the entropy-regularized optimal control $\pi_\lambda$ in \eqref{eq:optimalpolicyMDP2}, but not the reward function $f$.
 
Our first theorem characterizes the set of all reward functions $f$ which generate a given control policy.

\begin{thm}\label{thm:IRL-MDP1}
For a fixed policy $\bar \pi(a|s)>0$, discount factor $\gamma\in [0,1)$, and an arbitrary choice of function $v:\ST\to \rn$, there is a unique corresponding reward function 
\[\textstyle f(s,a) = \lambda\log \bar \pi(a|s) - \gamma\sum_{s'\in \ST}\transprob(s'|s,a)v(s')+v(s)\]
such that the MDP with reward $f$ yields a value function $V_\lambda^{\pi^*_\lambda} = v$ and entropy-regularized optimal policy $\pi^*_\lambda = \bar \pi$.
\end{thm}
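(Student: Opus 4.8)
The plan is to treat the displayed expression for $f$ as an explicit candidate and verify, by direct substitution into the fixed-point characterisations \eqref{eq softmax} and \eqref{eq:optimalpolicyMDP2}, that it produces value $v$ and policy $\bar\pi$; uniqueness then follows by inverting \eqref{eq:optimalpolicyMDP2}. Throughout, the hypothesis $\bar\pi(a|s)>0$ is what makes $\log\bar\pi(a|s)$ finite, so that $f$ is well defined.

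First I would substitute the candidate reward into the argument of the exponential appearing in the optimal Bellman equation \eqref{eq softmax}. The two discounted-value contributions cancel, leaving $f(s,a)+\gamma\sum_{s'\in\ST}\transprob(s'|s,a)v(s') = \lambda\log\bar\pi(a|s)+v(s)$, so that each term under the sum equals $\bar\pi(a|s)\,e^{v(s)/\lambda}$. Summing over $a\in\AC$ and using $\sum_{a}\bar\pi(a|s)=1$ collapses the sum to $e^{v(s)/\lambda}$, whence the right-hand side of \eqref{eq softmax} evaluates to $v(s)$. Thus $v$ solves the optimal Bellman equation associated with this reward $f$.

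The step requiring the most care is to promote ``$v$ solves the Bellman equation'' to ``$v$ is the optimal value function $V^{\pi^*_\lambda}_\lambda=V^*_\lambda$''. This is exactly where $\gamma\in[0,1)$ is needed: writing $T$ for the operator on $\rn^{\ST}$ given by the right-hand side of \eqref{eq softmax} with $V^*_\lambda$ replaced by its argument, $T$ is a contraction of modulus $\gamma$ in the supremum norm, since the $\log$-sum-exp map $z\mapsto\lambda\log\sum_{a}e^{z_a/\lambda}$ is $1$-Lipschitz with respect to $\|\cdot\|_\infty$ (its gradient is a softmax, hence an element of $\pbm(\AC)$) and the discounting supplies the factor $\gamma$. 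By the Banach fixed-point theorem $T$ has a unique fixed point, which by the dynamic programming principle is $V^*_\lambda$; since $v$ is a fixed point, $V^*_\lambda=v$. Substituting $V^*_\lambda=v$ into \eqref{eq:optimalpolicyMDP2} and cancelling the value terms once more gives $\pi^*_\lambda(a|s)=\exp(\log\bar\pi(a|s))=\bar\pi(a|s)$, so the candidate reward indeed reproduces both $v$ and $\bar\pi$.

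Finally, for uniqueness I would run the argument backwards: if a reward $f$ yields $V^*_\lambda=v$ and $\pi^*_\lambda=\bar\pi$, then \eqref{eq:optimalpolicyMDP2} forces $\lambda\log\bar\pi(a|s)=f(s,a)+\gamma\sum_{s'\in\ST}\transprob(s'|s,a)v(s')-v(s)$, and solving for $f$ returns precisely the displayed formula. Hence no other reward is consistent with the prescribed value function and policy, which establishes both existence and uniqueness.
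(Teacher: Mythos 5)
Your proposal is correct, and its verification steps (substituting the candidate $f$ so that $f(s,a)+\gamma\sum_{s'}\transprob(s'|s,a)v(s')=\lambda\log\bar\pi(a|s)+v(s)$, collapsing the sum via $\sum_a\bar\pi(a|s)=1$, and recovering $\bar\pi$ from \eqref{eq:optimalpolicyMDP2}) match the paper's; but the crucial step of promoting ``$v$ is a fixed point of the soft Bellman equation'' to ``$v=V^*_\lambda$'' is handled by a genuinely different mechanism. You invoke the Banach fixed-point theorem: the log-sum-exp map is $1$-Lipschitz in $\|\cdot\|_\infty$ (its gradient is a softmax, hence of $\ell_1$-norm one), so the soft Bellman operator is a $\gamma$-contraction, and uniqueness of its fixed point together with the dynamic programming principle (which says $V^*_\lambda$ is itself a fixed point, via \eqref{eq softmax}) forces $V^*_\lambda=v$. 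The paper instead works directly with the difference $g=(V^*_\lambda-v)/\lambda$, which satisfies $\exp(g(s))=\sum_a\bar\pi(a|s)\exp\big(\gamma\sum_{s'}\transprob(s'|s,a)g(s')\big)$, and applies Jensen's inequality twice, at an argmin and an argmax of $g$, to conclude $\min_s g\geq 0$ and $\max_s g\leq 0$ using $\gamma<1$, hence $g\equiv 0$. Both routes use $\gamma<1$ and the DPP in the same essential way. Your contraction argument is more modular: it isolates a standard, reusable property of the entropic Bellman operator and would transfer verbatim to bounded rewards on more general state spaces equipped with a complete sup-norm space of functions. The paper's extremum argument is more elementary and self-contained: it needs no appeal to Banach's theorem or Lipschitz estimates, only convexity of $\exp$ and the finiteness of $\ST$ (so the extrema are attained), and it simultaneously establishes uniqueness of the fixed point as a by-product. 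Your explicit closing step for uniqueness of the reward---inverting \eqref{eq:optimalpolicyMDP2} to show the displayed $f$ is the only reward consistent with the pair $(v,\bar\pi)$---is also correct, and makes explicit what the paper treats as an immediate rearrangement.
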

\begin{proof}%[Proof of Theorem \ref{thm:IRL-MDP1}]
Fix $f$ as in the statement of the theorem. Then \eqref{eq softmax} gives the corresponding value function
\begin{align*}
V^*_\lambda(s) &= \lambda \log \sum_{a\in \AC} \exp\bigg(\frac{1}{\lambda}\Big(f(s,a)+\gamma\sum_{s'\in \ST}\transprob(s'|s,a)V^*_\lambda(s')\Big)\bigg)\\
%&= \lambda \log \sum_{a\in \AC}\exp\bigg(\log \bar\pi(a|s) - \frac{1}{\lambda}\Big(\gamma\sum_{s'\in \ST}\transprob(s'|s,a)v(s')+v(s) -\gamma\sum_{s'\in \ST}\transprob(s'|s,a)V^*_\lambda(s')\Big)\bigg)\\
&= v(s)+\lambda \log \sum_{a\in \AC}\bar\pi(a|s)\exp\bigg(  \frac{\gamma}{\lambda}\Big(\sum_{s'\in \ST}\transprob(s'|s,a)(V^*_\lambda(s')-v(s'))\Big)\bigg),
\end{align*}
which rearranges to give
\begin{equation}\label{eq:jensenprecursor}\exp(g(s)) = \sum_{a\in \AC}\bar\pi(a|s)\exp\bigg(\gamma   \sum_{s'\in \ST}\transprob(s'|s,a)g(s')\bigg)\end{equation}
 with $g(s) = (V^*_\lambda(s)-v(s))/\lambda$. 
Applying Jensen's inequality, we can see that, for $\underline s \in \argmin_{s\in \ST} g(s)$, 
\[\exp\Big( \min_s g(s)\Big)=\exp\Big(g(\underline s)\Big) \geq \exp\bigg(\gamma \sum_{a\in \AC, s'\in \ST}\bar\pi(a|\underline s)\transprob(s'|\underline s,a)g(s')\bigg).\]
However, the sum on the right is a weighted average of the values of $g$, so 
\[ \sum_{a\in \AC, s'\in \ST}\bar\pi(a|\underline s)\transprob(s'|\underline s,a)g(s') \geq \min_s g(s).\]
Combining these inequalities, along with the fact $\gamma<1$, we conclude that $g(s)\geq 0$ for all $s\in \ST$.

Again applying Jensen's inequality to \eqref{eq:jensenprecursor}, for $\bar s \in \argmax_{s\in\ST}g(s)$ we have
\[\max_s \Big\{\exp\Big(g(s)\Big)\Big\} = \exp\Big(g(\bar s)\Big) \leq \sum_{a\in \AC, s'\in \ST}\bar\pi(a|\bar s)\transprob(s'|\bar s,a)\exp\bigg(\gamma g(s')\bigg).\]
As the sum on the right is a weighted average, we know
\[\sum_{a\in \AC, s'\in \ST}\bar\pi(a|\bar s)\transprob(s'|\bar s,a)\exp\bigg(\gamma g(s')\bigg) \leq \max_s \Big\{\exp\Big(\gamma g(s)\Big)\Big\}.\]
Hence, as $\gamma<1$, we conclude that $g(s)\leq 0$ for all $s\in \ST$.

Combining these results, we conclude that $g\equiv 0$, that is, $V^*_\lambda = v$. Finally, we substitute the definition of $f$ and the value function $v$ into \eqref{eq:optimalpolicyMDP2} to see that the entropy-regularized optimal policy is $\pi^*_\lambda = \bar \pi$.
\end{proof}

As a consequence of this theorem, we observe that the value function is not determined by the observed optimal policy, but can be chosen arbitrarily. We also see that the space of reward functions $f$ consistent with a given policy can be parameterized by the set of value functions.

\begin{rmk}
A simple degrees-of-freedom argument gives this result intuitively. There are $n=|\ST|$ possible states and $k=|\AC|$ possible actions in each state, so the reward function can be described by a vector in $\rn^{n\times k}$ . From the policy, which satisfies $\sum_{a\in\AC}\pi(a|s) = 1$ for all $s$, we observe $n\times (k-1)$ linearly independent values. Therefore, the space of consistent rewards has $n\times k-n\times (k-1) = n$ free variables, which we identify with the $n$ values $\{v(s)\}_{s\in \ST}$.
\end{rmk}
\begin{rmk}
\label{rem:ngremark}
\citet{Ng99policyinvariance} provides a useful insight to our result. In \citet{Ng99policyinvariance} it is assumed that the rewards are of the form $F( S_t, A_t, S_{t+1})$; for a fixed MDP, this adds no generality, as we  can write $f(s,a) = \mathbb{E}[F(s, a, S_{t+1})|S_t=s, A_t=a]$. \citet{Ng99policyinvariance} show that, for any `shaping potential' $\Upsilon:\ST\to \rn$, the reward $\tilde F = F + \gamma \Upsilon(S_{t+1}) - \Upsilon(S_t)$ yields the same optimal policies for \emph{every} (unregularized) MDP. However, shaping potentials do not describe the space of all rewards corresponding to a given policy, for fixed transition dynamics. In our results, we instead parameterize a family of costs $f$ in terms of the value function (Theorem \ref{thm:IRL-MDP1}), and show these are the only costs which lead to the given optimal policy \emph{for a fixed (regularized) MDP}.
%and these are the only perturbations with this property (cf. the role of $V$ in \eqref{eq:optimalpolicyMDP2}).
\end{rmk}
Given Theorem \ref{thm:IRL-MDP1}, we see that it is not possible to fully identify the reward faced by a single agent, given only observations of their policy. Fundamentally, the issue is that the state-action value function $Q$ combines both immediate rewards $f$ with preferences $v$ over the future state. If we provide data which allows us to disentangle these two effects, for example by considering agents with different discount rates or transition functions, then the true reward can be determined up to a constant, as shown by our next result. In order to clearly state the result, we give the following definition.

\begin{defn}\label{defn:valuedistinguishing}
Consider a pair of Markov decision problems on the same state and action spaces, but with respective discount rates $\gamma, \tilde\gamma$ and transition probabilities $\transprob, \tilde\transprob$. We say that this pair is \emph{value-distinguishing} if, for functions $w, \tilde w:\ST\to \rn$, the statement
\begin{equation}\label{eq:valuedistinguishing}
w(s) - \gamma\sum_{s'\in \ST}\transprob(s'|s,a)w(s')  = \tilde w(s) - \tilde \gamma\sum_{s'\in \ST}\tilde\transprob(s'|s,a)\tilde w(s') \text{ for all }a\in \AC, s\in \ST
\end{equation}
implies at least one of $w$ and $\tilde w$ is a constant function.
\end{defn}

In this definition, note that constant functions $w, \tilde w$ are always solutions to \eqref{eq:valuedistinguishing}, in particular for $c\in \mathbb{R}$ we can set  $w \equiv c$ and $\tilde w \equiv ({1-\gamma}) c/({1-\tilde \gamma})$. However, this is  a system of $|\AC|\times |\ST|$ equations in $2\times|\ST|$ unknowns, so the definition will hold provided our agents' actions have sufficiently varied impact on the resulting transition probabilities. In a linear-quadratic context, it is always enough to vary the discount rates (see Corollary \ref{cor:LQdiscount}).

\begin{thm}\label{thm:IRL-MDP2}
Suppose we observe the policies of two agents solving entropy-regularized MDPs, who face the same reward function, but whose discount rates or transition probabilities vary, such that their MDPs are value-distinguishing. Then the reward function consistent with both agents' actions either does not exist, or is identified up to addition of a constant.
\end{thm}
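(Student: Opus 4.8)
The plan is to leverage Theorem \ref{thm:IRL-MDP1} to reduce the uniqueness question entirely to the homogeneous system in Definition \ref{defn:valuedistinguishing}. By Theorem \ref{thm:IRL-MDP1}, together with the representation \eqref{eq:optimalpolicyMDP2}, every reward which induces the observed policy $\pi_1$ in the first environment $(\gamma,\transprob)$ is of the form
\[\textstyle f(s,a) = \lambda\log\pi_1(a|s) - \gamma\sum_{s'\in\ST}\transprob(s'|s,a)v(s') + v(s)\]
for some value function $v:\ST\to\rn$, and conversely each such $v$ yields a consistent reward; an analogous statement holds for the second environment $(\tilde\gamma,\tilde\transprob)$ with policy $\pi_2$ and value function $\tilde v$. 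First I would verify that this parameterization is \emph{exhaustive}—every reward producing the observed policy arises this way, since its own value function can be substituted into the formula—and that distinct value functions yield distinct rewards, the latter by a maximum-principle argument using $\gamma<1$ exactly as in the proof of Theorem \ref{thm:IRL-MDP1}. The existence of a common reward is simply the question of whether the two parameterized families intersect; the theorem makes no claim here, so I would restrict attention to the case where at least one common reward exists.

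Suppose then that two rewards $f$ and $f'$ are each consistent with both agents. Writing $v,v'$ for the value functions representing $f,f'$ in the first environment and $\tilde v,\tilde v'$ for those in the second, the key observation is that forming the difference $f-f'$ cancels the $\lambda\log\pi_i$ terms, because each agent's policy enters both representations identically regardless of its regularization parameter. Setting $w = v-v'$ and $\tilde w = \tilde v - \tilde v'$, I obtain two expressions for the same difference,
\[\textstyle f(s,a)-f'(s,a) = w(s) - \gamma\sum_{s'\in\ST}\transprob(s'|s,a)w(s') = \tilde w(s) - \tilde\gamma\sum_{s'\in\ST}\tilde\transprob(s'|s,a)\tilde w(s'),\]
valid for all $s\in\ST$ and $a\in\AC$. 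The middle and right-hand equality is precisely \eqref{eq:valuedistinguishing}.

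Invoking the value-distinguishing hypothesis, at least one of $w,\tilde w$ is constant. If $w\equiv c$, then the left-hand expression for $f-f'$ collapses to $(1-\gamma)c$, which is independent of $(s,a)$; if instead $\tilde w\equiv\tilde c$, the right-hand expression gives $(1-\tilde\gamma)\tilde c$. In either case $f-f'$ is a constant, so the reward is determined up to an additive constant, as claimed. The main conceptual step—and the only place real work is needed—is the reduction in the first paragraph: confirming that Theorem \ref{thm:IRL-MDP1} provides a complete and faithful parameterization of consistent rewards by value functions, so that the system of Definition \ref{defn:valuedistinguishing} governs \emph{exactly} the ambiguity between two candidate rewards. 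Once that reduction is in place, the conclusion follows directly from the definition, with no estimate required beyond the cancellation of the policy terms.
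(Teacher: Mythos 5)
Your proof is correct, but it takes a genuinely different route from the paper's. The paper subtracts the optimality relation \eqref{eq:optimalpolicyMDP2} \emph{across the two environments} for a single candidate reward: since $f$ is common to both agents it cancels, leaving the inhomogeneous linear system \eqref{eq:twoagentValue} in the unknown value functions $(v,\tilde v)$; the Fredholm alternative then reduces uniqueness to the homogeneous system, whose kernel is controlled by the value-distinguishing hypothesis, and Theorem \ref{thm:IRL-MDP1} converts ``$v$ known up to a constant'' into ``$f$ known up to a constant''. You instead subtract \emph{within each environment} across two candidate rewards $f,f'$ inducing the same policy: the $\lambda\log\pi_i$ terms cancel, giving $f(s,a)-f'(s,a)=w(s)-\gamma\sum_{s'\in\ST}\transprob(s'|s,a)w(s')=\tilde w(s)-\tilde\gamma\sum_{s'\in\ST}\tilde\transprob(s'|s,a)\tilde w(s')$ with $w=v-v'$ and $\tilde w=\tilde v-\tilde v'$, which is verbatim \eqref{eq:valuedistinguishing}, and value-distinguishing then forces $f-f'$ to equal the constant $(1-\gamma)c$ or $(1-\tilde\gamma)\tilde c$. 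Your route is more elementary: it bypasses the Fredholm alternative entirely and needs only the exhaustiveness of the Theorem \ref{thm:IRL-MDP1} parameterization, which you rightly flag as the crux (it holds because any consistent reward's own value function realizes it in the parameterized form); the injectivity you offer to check is harmless but not actually needed. A side benefit is that your cancellation is performed agent-by-agent, so the argument would survive the two agents having \emph{different} regularization parameters $\lambda$, which the paper's cross-agent subtraction as written does not immediately allow. What the paper's route buys in exchange is slightly more information: it pins down the value functions themselves (not just the reward) up to a constant, and it exhibits existence versus nonexistence concretely as solvability of the explicit linear system \eqref{eq:twoagentValue}, which is what one would actually solve in an algorithmic reconstruction.
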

\begin{proof} %[Proof of Theorem \ref{thm:IRL-MDP2}]
From Theorem \ref{thm:IRL-MDP1}, if we can determine the value function for one of our agents, then the reward is uniquely identified. Given we know both agents' policies ($\pi$, $\tilde\pi$) and our agents are optimizing their respective MDPs,
% we can determine the left hand side of
% \begin{align*}
%     \lambda \log \pi(a|s) &= f(a,s) + \gamma\sum_{s'\in \ST}\transprob(s'|s,a)v(s') - v(s)\\
%     \lambda \log \tilde\pi(a|s) &= f(a,s) + \tilde\gamma\sum_{s'\in \ST}\tilde \transprob(s'|s,a)\tilde v(s') - \tilde v(s)
% \end{align*}
 for every $a\in \AC, s\in \ST$, we know the value of
\begin{equation}\label{eq:twoagentValue}
    \lambda \log \frac{\pi(a|s)}{\tilde\pi(a|s)} =  \gamma\sum_{s'\in \ST}\transprob(s'|s,a)v(s') - \tilde\gamma\sum_{s'\in \ST}\tilde\transprob(s'|s,a)v(s') - (v(s) - \tilde v(s))
\end{equation}
where $v, \tilde v$ are the agents' respective value functions.
This is an inhomogeneous system of linear equations in $\{v(s), \tilde v(s)\}_{s\in\ST}$. Therefore, by standard linear algebra (in particular, the Fredholm alternative), it is uniquely determined up to the addition of solutions to the homogeneous equation
 \[0=\gamma\sum_{s'\in \ST}\transprob(s'|s,a)v(s') - \tilde\gamma\sum_{s'\in \ST}\tilde\transprob(s'|s,a)v(s') - (v(s) - \tilde v(s)) \text{ for all }s\in \ST, a\in \AC.\]
 However, as we have assumed our pair of MDPs is value-distinguishing, the only solutions to this equation have at least one of $v$ and $\tilde v$ constant (we assume $v$ without loss of generality). Therefore, the space of solutions to \eqref{eq:twoagentValue} is either empty (in which case no consistent reward exists), or determines $v$ up to the addition of a constant.  Given $v$ is determined up to a constant we can use Theorem \ref{thm:IRL-MDP1} to determine $f$, again up to the addition of a constant.
\end{proof}
Given the addition of a constant to $f$ does not affect the resulting policy (it simply increases the value function by a corresponding quantity), we cannot expect to do better than Theorem \ref{thm:IRL-MDP2} without direct observation of the agent's rewards or value function in at least one state.

\begin{rmk}
Definition \ref{defn:valuedistinguishing} is essentially a statement regarding invertibility of a linear system of equations for $w,\tilde w$. This indicates that the stability of the result of Theorem \ref{thm:IRL-MDP2} is principally determined by whether this linear system is well conditioned, as can be measured by the ratio of its largest to second smallest singular values (the second smallest is due to the constant functions always being in the kernel of the system) not being too large. Given the inevitable error arising from statistical estimation of policies and transition functions, a well conditioned system is often a key requirement in practice. A similar observation will also be valid for the uniqueness results in later sections.
\end{rmk}

\begin{rmk}
 Our results show that it is typically sufficient to observe an MDP under \emph{two} environments (transitions and discount factors) in order to identify the reward. %This can be compared with \cite{Ng99policyinvariance}, whose results demonstrate that observation under \emph{every} environment is always sufficient for identification. Based on this, 
 This can be contrasted with \cite{amin2016towards} and \cite{amin2017repeated} who show that, if the demonstrator is observed in multiple (suitably chosen) environments, the (state-only) reward can be identified up to a scaling and shift (the scaling is natural, given they do not use an entropy regularization). \cite{ratliff2006} consider a finite number of environments, but explicitly do not attempt to estimate the `true' underlying reward.
\end{rmk}

\section{Finite horizon results}\label{sec:finitehorizon}

Over finite horizons, for general costs, similar results hold to those already seen on infinite horizons. An entropy-regularized optimizing agent will use a policy $\pi^*=\{\pi^*_t\}_{t=0}^{T-1}$ which solves the following problem with terminal reward $g$ and (possibly time-dependent) running reward $f$:
\[\begin{aligned}
\max_{\pi}&\E^{\pi}_s\bigg[\sum_{t=0}^{T-1}\gamma^t\Big(f(t, s_t^\tau, a_t^\tau)-\lambda\log\pi_t(a^\tau_t|s^\tau_t)\Big)+\gamma^Tg(s^\tau_T)\bigg].\end{aligned}\]
For any $\pi=\{\pi_t\}_{t=0}^{T-1}$, $s\in\ST$, $a\in\AC$, and $t\in \{0,\dots, T-1\}$ write
\begin{align*}
Q_t^\pi(s,a)&=f(t,s,a)+\gamma\E_{S'\sim\transprob(\cdot|s,a)}\big[V^\pi_{t+1}(S')\big],\\
V^\pi_t(s)&=\E_{A\sim\pi_t(\cdot|s)}\big[Q^\pi_t(s,A)-\lambda\log\pi_t(A|s)\big], \qquad V^\pi_T(s)=g(s)
\end{align*}
Then, similarly to the infinite-horizon discounted case discussed in the main text, we have $V^*_T=g$ and for $t\in\{0,\dots,T-1\}$,
\begin{align*}
Q^*_{t}(s,a)&=f(t, s,a)+\gamma\E_{S'\sim\transprob(\cdot|s,a)}\Big[V^*_{t+1}(S')\Big],\\
V^*_t(s)&=V^{\pi^*}_t(s)=\lambda\log\sum_{a'\in\AC}\exp\Big\{Q^*_t(s,a')/\lambda\Big\},\\
\pi_t^*(a|s)&=\exp\Big\{Q^*_t(s,a)/\lambda\Big\}\bigg/\sum_{a'\in\AC}\exp\Big\{Q^*_t(s,a')/\lambda\Big\}=\exp\left\{\bigg(Q^*_t(s,a)-V^*_t(s)\bigg)/\lambda\right\}.
\end{align*}

Rearranging this system of equations, for any chosen function $v:\{0,...,T\}\times \ST \to \mathbb{R}$ with $v(T,\cdot) = g(\cdot)$, we see that $\pi^*_t(a|s)$ is the optimal strategy for the reward function
\begin{align*}
    f(t,s,a)= \lambda \log \pi_t^*(a|s) - \gamma \sum_{s'\in \ST}\transprob(s'|s,a)v(t+1,s') + v(t,s),
\end{align*}
in which case the corresponding value function is $V^*=v$. In other words, the identifiability issue discussed earlier remains. We note that identifying $\pi$ in this setting is more delicate than in the infinite-horizon case, as it is necessary to observe many finite-horizon state-action trajectories, rather than a single infinite-horizon trajectory.

\subsection{Time-homogeneous finite-horizon identifiability.} Following the release of a first preprint version of this paper, \cite{Kim2021} was published and presents a closely related analysis, for entropy-regularized deterministic MDPs with zero terminal value. We here give an extension of their result which covers the stochastic case and includes an arbitrary (known) terminal reward.

The key structural assumptions made by \cite{Kim2021} are that the reward is time-homogeneous (that is, $f$ does not depend on $t$), and that there is a finite horizon. As discussed in the previous section, there is no guarantee that an arbitrary observed policy will be consistent with these assumptions (that is, whether there exists any $f$ generating the observed policy). However, given a policy consistent with these assumptions, and mild assumptions on the structure of the MDP, we shall see that unique identification of $f$ is possible up to a constant.

Before describing our findings, we first present the following lemma\footnote{Thanks to Victor Flynn for discussion on the formulation and proof of this result.} from elementary number theory, which will prove useful in what follows.
\begin{lem}\label{lem:numbertheory}
 Let $\mathcal{R}\subset \mathbb{N}$ be a set of natural numbers, with the property that $\mathcal{R}$ is closed under addition (if $a,b\in \mathcal{R}$ then $a+b\in \mathcal{R}$). Suppose $\mathcal{R}$ has greatest common divisor $1$ (i.e.~$\mathrm{gcd}(\mathcal{R})=1$). Then there exist elements $a,b\in \mathcal{R}$ which are coprime (i.e.~$\mathrm{gcd}(a,b)=1$). Furthermore, for any coprime $a,b\in \mathcal{R}$, for all $c\geq ab$, we know $c\in \mathcal{R}$, in particular, there exist at least two distinct pairs of nonnegative integers $\lambda,\mu$ such that $\lambda a + \mu b = c$. 
\end{lem}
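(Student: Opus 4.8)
The plan is to split the lemma into two independent claims and handle each separately. The first claim---that a closed-under-addition set $\mathcal{R}$ with $\gcd(\mathcal{R})=1$ contains a coprime pair---I would approach as follows. Since $\gcd(\mathcal{R})=1$, there is a finite subset $\{r_1,\dots,r_k\}\subseteq \mathcal{R}$ whose gcd is already $1$ (the gcd of the whole set is achieved by finitely many elements, since the chain of gcds $\gcd(r_1), \gcd(r_1,r_2),\dots$ is non-increasing in $\mathbb{N}$ and must stabilize). Now I would use closure under addition to convert this into a genuine coprime \emph{pair}. The key observation is that if $d=\gcd(a,b)$ for two elements $a,b\in\mathcal{R}$, then any positive integer combination $\lambda a + \mu b$ (with $\lambda,\mu\ge 1$) lies in $\mathcal{R}$ and is divisible by $d$; so forming new elements by addition cannot decrease the pairwise gcd below the gcd of the generating set. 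Concretely, I would argue by induction on $k$: given elements with $\gcd(r_1,\dots,r_k)=1$, set $d=\gcd(r_1,\dots,r_{k-1})$, and use the fact that $\gcd(d, r_k)=1$ together with closure to produce an element of $\mathcal{R}$ that is a multiple of a large power/combination realizing gcd $d$ and is coprime to $r_k$. The cleanest route is probably to note that some positive combination $m_1 r_1 + \dots + m_{k-1}r_{k-1}\in\mathcal{R}$ is divisible by $d$ but, by choosing the combination appropriately (e.g.\ taking a multiple that is $\equiv$ a unit mod $r_k$), is coprime to $r_k$.

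For the second claim---that coprime $a,b\in\mathcal{R}$ force $c\in\mathcal{R}$ for all $c\ge ab$---this is essentially the Chicken McNugget / Frobenius coin-problem bound. I would recall that for coprime $a,b$ the Frobenius number (largest integer \emph{not} representable as $\lambda a + \mu b$ with $\lambda,\mu\ge 0$) equals $ab-a-b$, so every $c\ge ab-a-b+1$ is representable as a nonnegative combination; in particular every $c\ge ab$ is. Since $\mathcal{R}$ is closed under addition and contains $a$ and $b$, any such nonnegative combination $\lambda a+\mu b$ with $\lambda+\mu\ge 1$ lies in $\mathcal{R}$, giving $c\in\mathcal{R}$.

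It remains to establish the final assertion that there are \emph{at least two distinct} representations $\lambda a + \mu b = c$ for $c\ge ab$. This follows from the standard description of the solution set: if $(\lambda_0,\mu_0)$ is one nonnegative solution, then all integer solutions are $(\lambda_0 + tb,\ \mu_0 - ta)$ for $t\in\mathbb{Z}$. To get a second nonnegative solution I would take $t=-1$, giving $(\lambda_0 - b,\ \mu_0 + a)$, or $t=+1$, giving $(\lambda_0+b,\ \mu_0-a)$; for $c\ge ab$ one can guarantee that at least one of these shifts stays within the nonnegative orthant. The quantitative check is that among the $\lfloor c/a\rfloor + 1$ values of $\mu\in\{0,\dots,\lfloor c/a\rfloor\}$ the map $\mu\mapsto \lambda$ is solvable precisely when $c-\mu b \equiv 0\pmod a$, and since $\gcd(a,b)=1$ these residues cycle through all classes; when $c\ge ab$ there are enough candidates to hit the right residue at least twice.

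The main obstacle I expect is the first claim: upgrading ``a finite subset with gcd $1$'' to ``a coprime \emph{pair} inside $\mathcal{R}$,'' since closure under addition only lets us build larger elements, not smaller ones, and a coprime pair is a delicate divisibility condition rather than a gcd-of-many condition. The inductive step---showing that from $\gcd(r_1,\dots,r_k)=1$ one can manufacture, via additive combinations that stay in $\mathcal{R}$, two elements with no common factor---is the part requiring genuine care, particularly in choosing the combination whose residue modulo $r_k$ is a unit. The Frobenius-bound portion is standard and I would cite it rather than reprove it in full.
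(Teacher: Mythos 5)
Your handling of the first two claims is sound, though your route to the coprime pair is genuinely different from the paper's. The paper applies B\'ezout's lemma directly to a finite subset of $\mathcal{R}$ with gcd $1$, writes $\sum_k \lambda_k k = 1$, and moves the negative terms to the right-hand side: this produces $a,b\in\mathcal{R}$ (each is a sum of elements of $\mathcal{R}$, so closure applies) with $a=b+1$, and consecutive integers are automatically coprime --- no modular arithmetic, no induction. Your approach --- fix $r_k$ and manufacture a nonnegative combination of $r_1,\dots,r_{k-1}$ whose residue mod $r_k$ is a unit --- is completable, and more easily than you fear: since $\gcd(r_1,\dots,r_k)=1$, the residues of $r_1,\dots,r_{k-1}$ generate all of $\mathbb{Z}/r_k\mathbb{Z}$ as a group, so some integer combination is $\equiv 1 \pmod{r_k}$, and shifting each coefficient up by a multiple of $r_k$ makes all coefficients nonnegative without changing the residue; the resulting element of $\mathcal{R}$ is coprime to $r_k$. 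That one-liner discharges the step you flagged as delicate, and the induction scaffolding is unnecessary. For the representability of $c\geq ab$, citing the Frobenius bound $ab-a-b$ is legitimate (the paper reproves it from B\'ezout by shifting the coefficient pair by $(b,-a)$), and you correctly note $\lambda+\mu\geq 1$ so that closure puts $c$ in $\mathcal{R}$.

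The final clause is where your argument has a genuine gap --- and in fact the clause is false at the stated threshold. Take $a=2$, $b=3$, $c=7\geq ab=6$: the only nonnegative solution of $2\lambda+3\mu=7$ is $(\lambda,\mu)=(2,1)$, and both of your proposed shifts, $(\lambda-b,\mu+a)=(-1,3)$ and $(\lambda+b,\mu-a)=(5,-1)$, leave the nonnegative orthant. Your counting argument has the same hole (besides the slip that $\mu$ ranges over $\{0,\dots,\lfloor c/b\rfloor\}$, not $\{0,\dots,\lfloor c/a\rfloor\}$): $c\geq ab$ gives at least $a+1$ candidate values of $\mu$, but among $a+1$ consecutive integers only \emph{one} residue class mod $a$ is hit twice, and nothing forces it to be the required class $c b^{-1} \bmod a$. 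You are in good company: the paper's own proof commits the same error, asserting that its constructed $\mu$ satisfies $\mu\geq a$ whenever $c\geq ab$, which fails for the same example. The correct repair is to raise the threshold: for $c\geq 2ab-a-b+1$, the integer $c-ab$ exceeds the Frobenius number, hence $c-ab=\lambda a+\mu b$ with $\lambda,\mu\geq 0$, and then $(\lambda+b,\mu)$ and $(\lambda,\mu+a)$ are two distinct representations of $c$; this is sharp, since $c=2ab-a-b$ admits exactly one representation. The weakened lemma still supports its downstream use, at the cost of a slightly longer horizon in Corollary \ref{cor:DeterministicFiniteIdentifiable} (e.g.\ $T\geq d+2RR'$ in place of $T\geq d+RR'$ in case \ref{casecycles}).
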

\begin{proof}
We first show a coprime pair $a,b\in \mathcal{R}$ exists. As $\mathrm{gcd}(\mathcal{R}\cap\{x:x\leq y\})$ is decreasing in $y$, and the integers are discrete, there exists a smallest value $y$ such that $\mathrm{gcd}(\mathcal{R}\cap\{x:x\leq y\}) =1$. Applying B\'ezout's lemma, there exist integers $\{\lambda_k\}_{k\leq y}$ such that 
\[\sum_{k\in\mathcal{R}, k\leq y}\lambda_k k=1.\]
Rearranging this sum by taking all negative terms to the right hand side, we obtain the desired positive integers $a = \sum_{\{k\in\mathcal{R}, k\leq y, \lambda_k>0\}}\lambda_k k$ and $b = \sum_{\{k\in\mathcal{R}, k\leq y, \lambda_k<0\}}|\lambda_k| k$ which satisfy $a=b+1$ (so $a$ and $b$ are coprime) and $a,b\in \mathcal{R}$ (as $\mathcal{R}$ is closed under addition).

We now take an arbitrary coprime pair $a,b\in \mathcal{R}$. Again by B\'ezout's lemma, there exist (possibly negative) integers $\tilde \lambda, \tilde\mu$ such that $\tilde \lambda a + \tilde\mu b = 1$, and hence $\tilde \lambda c a + \tilde \mu c b = c$. However, for any integer $k$ it follows that $(\tilde \lambda c +kb) a + (\tilde \mu c-ka) b = c$. Since this holds for all $k\in \mathbb{Z}$, we can choose $k$ such that $1\leq \lambda = (\tilde \lambda c +kb) \leq b$. However, this implies that $(\tilde \lambda c +kb) a\leq ab \leq c$, and so $\mu = (\tilde \mu c-ka) \geq 0$. As $\mathcal{R}$ is closed under addition, we see that $c = \lambda a + \mu b\in \mathcal{R}$.

To see non uniqueness, we simply observe that if $c\geq ab$, in the construction above we have $\mu\geq a$, and hence $(\lambda+b, \mu-a)$ is an alternative pair of coefficients.
\end{proof}

We now present the first assumption on the structure of the MDP.
\begin{defn}
We say a MDP has full access at horizon $T$ (from a state $s$) if, for some distribution over actions, for all states $s'$ we have $\mathbb{P}(S_{T-1}=s'|S_0=s)>0$. 
\end{defn}
It is easy to verify that this definition does not depend on the choice of distribution over actions (provided it has full support).

This is slightly weaker than assuming that the Markov chain underlying the MDP (with random actions) is irreducible, as there may exist transient states from which we have full access. It is a classical result (commonly stated as a corollary to the Perron--Frobenius theorem) that an irreducible aperiodic Markov chain has full access (from every state). \citet{Kim2021} give an alternative graph-theoretic view, based on  the closely related notion of $T$-coverings.

\begin{thm}\label{thm:timehomog-necessity}
Consider an MDP with unknown time-homogeneous reward function $f$. In order for $f$ to be identified (up to a global constant) from observation of optimal policies and the resulting transitions up to some horizon $T>0$, with initialization from some state $s$, it is necessary that the MDP has full access at some horizon $T'\geq T$ (from state $s$).
\end{thm}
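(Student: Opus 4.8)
The plan is to argue by contraposition: assume the MDP fails to have full access at \emph{every} horizon $T'\ge T$ from $s$, and construct two time-homogeneous rewards that differ by more than an additive constant yet induce exactly the same observable policies and transitions up to horizon $T$. The starting point is the finite-horizon analogue of Theorem \ref{thm:IRL-MDP1} recorded in Section \ref{sec:finitehorizon}: given the known data $(\transprob,\gamma,\lambda,g)$ and the observed policies, a time-homogeneous reward $f$ is in one-to-one correspondence with a choice of value function $v:\{0,\dots,T\}\times\ST\to\rn$ with $v(T,\cdot)=g$, through
\[
f(s,a)=\lambda\log\pi^*_t(a|s)+v(t,s)-\gamma\sum_{s'\in\ST}\transprob(s'|s,a)\,v(t+1,s'),
\]
subject to the right-hand side being independent of $t$. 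Consequently, identifiability of $f$ up to a global constant is equivalent to the statement that every perturbation $\delta(t,s)$ with $\delta(T,\cdot)\equiv0$ which preserves the observed policies and keeps the induced reward time-homogeneous must induce a constant reward change $\Delta f$; non-identifiability is exactly the existence of one such $\delta$ whose induced $\Delta f$ is non-constant.

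First I would make precise what the data constrains. Writing $A_t\subseteq\ST$ for the set of states reachable from $s$ in exactly $t$ steps under a full-support action distribution, the observations pin down $\pi^*_t(a|s')$ and $\transprob(\cdot|s',a)$ only at the reachable pairs $s'\in A_t$, $t\in\{0,\dots,T-1\}$. A perturbation $\delta$ is invisible precisely when, at every such reachable pair, the quantity $\delta(t,s')-\gamma\sum_{s''\in\ST}\transprob(s''|s',a)\,\delta(t+1,s'')$ is independent of $a$; by the reward formula this is exactly the condition that the induced $\Delta f$ is well defined and time-homogeneous on the visited states. The hypothesis that full access fails at every $T'\ge T$ means $A_t\subsetneq\ST$ for all $t\ge T-1$; in particular the final decision layer $t=T-1$ omits some state $\hat s$. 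I would seed a nonzero value perturbation at $(T-1,\hat s)$ (and at the omitted states of the later layers guaranteed by the hypothesis) and propagate it backward through the Bellman recursion to time $0$ while respecting $\delta(T,\cdot)\equiv0$, so that its support avoids every visited state-action pair and therefore never perturbs an observed backup.

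The hard part will be realizing the perturbation as \emph{simultaneously} non-constant, time-homogeneous, invisible, and terminal-consistent. Time-homogeneity couples $\delta$ across all time layers at each state, so $\delta$ cannot be assigned layer by layer freely; invisibility forbids it from leaking into the Bellman backups of visited states through action-dependent transitions; and $\delta(T,\cdot)\equiv0$ fixes it at the horizon. The crux is thus a reachability/linear-algebra argument showing that the failure of full access at every horizon $\ge T$ leaves the kernel of the observation map strictly larger than the one-dimensional space of perturbations induced by constant reward shifts---equivalently, that the omitted states can absorb a non-constant reward perturbation without affecting any visited state-action pair. I expect this combinatorial bookkeeping, together with the careful handling of the quantifier $T'\ge T$ (ensuring the omissions at layers from $T-1$ onward are exactly what is needed to close the perturbation under time-homogeneity), to be the substance of the proof; once it is in place, the pair $f$ and $f+\Delta f$ furnishes the required counterexample and establishes necessity.
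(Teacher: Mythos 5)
There is a genuine gap, and it sits exactly where you defer it. Your reduction of non-identifiability to a kernel statement is fine, and your second paragraph even records the correct invisibility criterion: a perturbation is invisible iff at every \emph{reachable} pair the quantity $\delta(t,s')-\gamma\sum_{s''}\transprob(s''|s',a)\delta(t+1,s'')$ is \emph{independent of $a$}. But your third paragraph abandons this criterion for a strictly weaker mechanism, seeding $\delta$ at states omitted from layers $t\geq T-1$ and propagating backward ``so that its support avoids every visited state-action pair.'' That mechanism cannot work in general: time-homogeneity couples the reward perturbation across all layers, so if $\Delta f(\hat s,\cdot)\neq 0$ and $\hat s$ is visited at some earlier time $t_0<T-1$ --- which failure of full access at horizons $T'\geq T$ does not rule out, since in a periodic chain every state may be visited at some layer while no single layer contains them all --- then the perturbation acts at an observed pair and support-avoidance fails. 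The same problem already appears for the initial state itself: $s$ is visited at $t=0$, yet a shift of $f(s,\cdot)$ alone must belong to the invisible family when $s$ is never revisited. In these cases invisibility is not a support condition but an action-independence condition on the induced shift of $Q^*_t(s',\cdot)$ at visited states. Finally, you acknowledge that the decisive step (showing the kernel strictly exceeds constant shifts whenever full access fails at all $T'\geq T$) is ``the substance of the proof'' and leave it unexecuted; that step is the entire content of the theorem.

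The paper's proof supplies exactly the structure your plan is missing. It decomposes failure of full access into three Markov-chain failure modes and exhibits an explicit invisible, non-constant perturbation for each: (i) some state unreachable from $s$, whose reward is wholly unconstrained; (ii) $s$ never revisited, where adding a constant to $f(s,\cdot)$ shifts $Q$ at $s$ uniformly across actions and touches no other backup; and (iii) periodicity, where the return times to $s$ have $\mathrm{gcd}$ $\bar t>1$, the reachable states split into a cyclic partition $\mathcal{S}_0\to\mathcal{S}_1\to\dots\to\mathcal{S}_{\bar t-1}\to\mathcal{S}_0$, and adding $c$ on $\mathcal{S}_0$ while subtracting $c/\gamma$ on $\mathcal{S}_1$ cancels in every Bellman backup. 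The reduction from ``full access fails at every $T'\geq T$'' to this trichotomy is itself nontrivial and is where Lemma \ref{lem:numbertheory} enters: the return-time set is closed under addition (concatenating cycles), so $\mathrm{gcd}$ equal to $1$ forces all sufficiently large return times, hence full access at all large horizons. To complete your kernel formulation you would have to prove this trichotomy anyway, at which point the explicit perturbations above are the natural certificates; your backward-propagation scheme would at best recover mode (i).
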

\begin{proof} %[Proof of Theorem \ref{thm:timehomog-necessity}]

We suppose that $f$ can be identified, and  first show that all states can be accessed from $s$, that is, for each $s'$ there exists $T>0$ such that $\mathbb{P}^\pi(S_T=s'|S_0=s)>0$, but that $T$ can vary with $s'$. Suppose, for contradiction, there are states which cannot be reached by a path starting in $s$. It is clear that it is impossible to identify the cost associated with any state which cannot be accessed, as we obtain no information about actions in these states.

It remains to show that, if $f$ can be identified, we can reach all states using paths of a common length. We initially focus on the paths from $s$ to $s$. If we can return in precisely $T$ steps, then (by the Markov property) we can also return in $kT$ steps, for any $k\in \mathbb{N}$. Therefore either the set  $\{T:\mathbb{P}^\pi(S_T=s|S_0=s)>0\}$ is unbounded, or the state $s$ will never be revisited (in the language of Markov chains, it is ephemeral), and in particular will never be visited by a path starting in any other state. Therefore, it is clear that we can add a constant to its rewards independently of all other states' rewards, as this will not affect decision making -- we leave this state immediately and never return. Therefore the reward cannot be determined up to a global constant.

Next, still focusing on paths from $s$ to $s$,  we show that $T$ can take \emph{any} value above some bound. Let $\bar t$ be the greatest common divisor of $\mathcal{R}=\{t:\mathbb{P}^\pi(S_t=s|S_0=s)>0\}$. For contradiction,  suppose $\bar t>1$. Then our system is periodic, and by classical results on irreducible matrices (e.g. \cite[Theorem 1.3]{Seneta2006}) we know that there is a partition of $\mathcal{S}$ into $\bar t$ sets, such that we will certainly make transitions within the states $\mathcal{S}_0\to \mathcal{S}_1 \to ...\to \mathcal{S}_{\bar t-1}\to\mathcal{S}_0\ni s$. By adding $c\in \mathbb{R}$ to the rewards of states in $\mathcal{S}_0$, and subtracting $c/\gamma$ from rewards of states in $\mathcal{S}_1$, we do not affect behavior. Therefore the reward cannot be identified up to a global constant unless $\bar t = 1$. 

However, if $\bar t = 1$ then,as the set $\mathcal{R}$ is closed under addition (by concatenating cycles), Lemma \ref{lem:numbertheory} then implies that  $\mathcal{R}$ must contain all sufficiently large values, that is, it is possible to return to the initial state in any sufficiently large number of steps.

Finally, we have seen that it is possible to transition from $s$ to $s'$ in a finite number of steps, and that it is possible to transition from $s$ to $s$ in any sufficiently large number of steps. From the Markov property we conclude that for every value of $T'$ sufficently large, for all choices of $s'$ we have $\mathbb{P}^\pi(S_{T'}=s'|S_{0}=s)>0$.
\end{proof}

The following definition is most easily expressed by associating our finite state space $\ST$ with the set of basis vectors $\{e_k\}_{k=1}^N\subset \mathbb{R}^N$, and writing the transitions $\transprob(s'|s,a)$, for $a\in\AC$ in terms of the transition matrix  $\mathbb{T}(a)$ with \[\mathbb{T}(a)_{ij} = \transprob(e_j|e_i, a).\]
While this definition is quite abstract, we will see that it precisely describes when many IRL problem with fixed terminal reward can be solved.
\begin{defn}\label{def:fullactionrank}
We say an $N$-state MDP has full action-rank on horizon $T$, starting at a state $s\equiv e_i$, if the matrix with rows given by
\[ \bigg\{e_i^\top\bigg(\sum_{t=0}^{T-1} \gamma^t\prod_{t'=0}^{t-1}\mathbb{T}(a_{t'})\bigg); \qquad a_0,...,a_t \in \AC\bigg\}
\]
is of rank $N$ (with the convention that products are taken sequentially on the right, that is $\prod_{t=0}^2 A_t = A_0 A_1 A_2$, and the empty product is the identity).
\end{defn}

\begin{rmk}\label{rmk:occupationdensity}
Observe that $e_i^\top\prod_{t'=0}^{t-1}\mathbb{T}(a_{t'})$ is the expected state of $S_t$ given $S_0=e_i$, when following the actions $\{a_0,..., a_{t}\}$. Hence, the quantity $e_i^\top\big(\sum_{t=0}^{T-1} \gamma^t\prod_{t'=0}^{t-1}\mathbb{T}(a_{t'})\big)$ is a time-weighted expected occupation density for the process, that is, a measurement of how long we spend in each state. We have full action-rank if our actions are sufficiently varied that there are $N$ linearly independent such density vectors (cf. \cite[Corollary 1]{Kim2021}, where it is the state--action occupation density which is considered).
\end{rmk}

\begin{thm}\label{thm:finitehorizonidentification}
Suppose our MDP has full action rank and full access, at horizon $T$, from an initial state $s_0$. Then the time-homogeneous IRL problem is well posed, that is, knowledge of the (time-dependent) entropy-regularized optimal strategy $\pi^*_t(a|s)$, and the terminal reward $g$, is sufficient to uniquely determine a time-homogeneous running reward $f$, if it exists, up to a constant.

Conversely, if our MDP has full access but not full action rank at horizon $T$, from the state $s_0$, the IRL problem remains ill posed.
\end{thm}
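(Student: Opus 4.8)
The plan is to work, as in Theorem~\ref{thm:IRL-MDP1}, with differences of candidate rewards. First I would note that the last time step already pins down almost all of the reward: since $V^*_T = g$ is known, the finite-horizon representation evaluated at $t=T-1$,
\[
f(s,a) = \lambda\log\pi^*_{T-1}(a|s) - \gamma\sum_{s'\in\ST}\transprob(s'|s,a)g(s') + V^*_{T-1}(s),
\]
shows that $f(s,a)-f(s,\hat a)$ is determined for every state $s$ and pair of actions $a,\hat a$ (the term $V^*_{T-1}(s)$ cancels). Thus $f$ is known up to a purely state-dependent baseline, and full access guarantees this at every state. Consequently, if $f$ and $f'$ are two time-homogeneous rewards consistent with the same observed policy and terminal reward $g$, their difference $\Delta f(s,a)=c(s)$ is action-independent; writing $w(t,\cdot)$ for the difference of the two value functions, subtracting the two representations gives the linear identity
\begin{equation}\label{eq:Aplan}
c(s) = w(t,s) - \gamma\sum_{s'\in\ST}\transprob(s'|s,a)\,w(t+1,s') \qquad \text{for all } t,s,a,
\end{equation}
with $w(T,\cdot)=0$ and, at $t=T-1$, $c(s)=w(T-1,s)$.

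For the forward (well-posedness) direction, I would iterate \eqref{eq:Aplan} forward from $s_0\equiv e_i$ along an arbitrary action sequence $\alpha=(a_0,\dots,a_{T-1})$. Since \eqref{eq:Aplan} holds for every action, telescoping and using $w(T,\cdot)=0$ yields
\[
w(0,s_0) = e_i^\top\Big(\sum_{t=0}^{T-1}\gamma^t\prod_{t'=0}^{t-1}\mathbb{T}(a_{t'})\Big)c = \mu_\alpha\cdot c,
\]
where $\mu_\alpha$ is exactly the time-weighted occupation density of Remark~\ref{rmk:occupationdensity}. The left side is a single fixed number, so $\mu_\alpha\cdot c$ is the same for every $\alpha$. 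Because each $\mathbb{T}(a)$ is row-stochastic, every $\mu_\alpha$ satisfies $\mu_\alpha\cdot\mathbf{1}=\sum_{t=0}^{T-1}\gamma^t$, so all occupation densities lie in one affine hyperplane; full action rank makes them affinely span it, forcing their pairwise differences to span $\{x : x\cdot\mathbf{1}=0\}$. As $c$ is orthogonal to all these differences, $c$ is a multiple of the all-ones vector $\mathbf{1}$, i.e.\ constant. This identifies $f$ up to a constant, which the finite-horizon form of Theorem~\ref{thm:IRL-MDP1} then recovers.

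For the converse the same linear algebra runs in reverse. When full action rank fails, the occupation densities span a space of dimension at most $N-1$; together with their common value of $\mu_\alpha\cdot\mathbf{1}$ this makes their differences span a space of dimension at most $N-2$, so the orthogonal space $\{c : \mu_\alpha\cdot c \text{ is constant in } \alpha\}$ has dimension at least $2$ and hence contains a \emph{non-constant} vector $c$. I would then set $f'=f+c$ and verify by backward induction that $f'$ reproduces the observed policy: the shift is action-independent, so the time-$(T-1)$ policy is unchanged and $w(T-1,\cdot)=c$, and at each earlier step the induced value-function perturbation propagates through \eqref{eq:Aplan}, leaving the policy at a state $s$ unchanged exactly when $\sum_{s'}\transprob(s'|s,a)\,w(t+1,s')$ is action-independent there. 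The key point is that this must hold only at states reachable at time $t$ (elsewhere the policy is never observed, and those entries of $w$ carry zero transition weight into observable equations), and unrolling these reachability-restricted conditions reproduces precisely the single condition $\mu_\alpha\cdot c=\text{const}$ that $c$ was chosen to satisfy. Hence $f'$ and $f$ generate identical observations yet differ by the non-constant $c$, so the problem is ill posed.

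The main obstacle I anticipate is this converse construction, specifically the bookkeeping showing that the reachability-restricted policy-matching constraints collapse to the global occupation-density condition $\mu_\alpha\cdot c=\text{const}$. One must check that value-function perturbations at intermediate states which are unreachable at a given time never leak into observable equations, and that the telescoped constraints carry exactly the occupation weighting with the correct powers of $\gamma$. Here full access plays a dual role: it ensures the reward is observed at every state (ruling out the trivial non-identifiability of Theorem~\ref{thm:timehomog-necessity}) while confining the binding constraints to reachable states, so that membership in $\{c : \mu_\alpha\cdot c \text{ constant}\}$ --- rather than the stronger pointwise action-invariance of $c$ --- is precisely what the construction requires.
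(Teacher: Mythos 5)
Your sufficiency argument is correct and is essentially the paper's proof, repackaged as a uniqueness statement for differences. The paper works with the inhomogeneous recursion \eqref{valueactionrecursion}, which (as you also exploit) holds for \emph{every} action because of the softmax identity, unrolls it, extracts the $s_0$ row to obtain \eqref{eq:Vinitialstructure}, and then combines full action rank with the row-sum identity $\mu_\alpha\cdot\mathbf{1}=(1-\gamma^T)/(1-\gamma)$ to pin down $\upsilon$ modulo constants; your homogeneous version (subtract two consistent rewards, note the difference is state-only from the $t=T-1$ relation, telescope $w$ with $w(T,\cdot)=0$, and use that the differences $\mu_\alpha-\mu_\beta$ span $\{x:x\cdot\mathbf{1}=0\}$) performs exactly the same linear algebra and is, if anything, cleaner, since it avoids the paper's intermediate solution $\bar\upsilon$ and the explicit parameterization of the possible pairs $(V_0^*(s_0),\upsilon)$. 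Your reading of full access --- that it makes $\pi^*_{T-1}(\cdot|s)$ available at every state so that \eqref{valueactionrecursion2} then determines $f$ everywhere --- is also the role it plays in the paper.

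The converse is where the genuine gap lies, and you have located it honestly but not closed it. The direction you get from telescoping is the automatic one: any consistent $f+c$ forces $\mu_\alpha\cdot c$ to be constant in $\alpha$. The converse needs the reverse implication, and your claim that the reachability-restricted policy-matching constraints ``collapse to'' the single condition $\mu_\alpha\cdot c=\mathrm{const}$ is unproven and, as stated, doubtful: varying one action of $\alpha$ at position $u$ only yields $x^\top\big(\mathbb{T}(a)-\mathbb{T}(a')\big)w_{u+1}=0$ for $x$ ranging over the achievable prefix distributions $e_i^\top\prod_{t'<u}\mathbb{T}(a_{t'})$, and the span of these distributions can be a \emph{proper} subspace of the span of $\{e_s: s \text{ reachable at time } u\}$ --- for instance, two actions from $s_0$ whose one-step laws are uniform on $\{s_1,s_2\}$ and uniform on $\{s_2,s_3\}$ reach three states but span a two-dimensional set. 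So membership of $c$ in your orthogonality set does not imply the pointwise action-independence of $\sum_{s'}\transprob(s'|s,a)w(t+1,s')$ at each reachable state, which is exactly what your backward-induction verification of $f'=f+c$ requires; the construction can fail for the $c$ you select. To be fair in the comparison: the paper's own converse is a two-sentence assertion that rank failure produces a subspace of $\upsilon$'s admitting value vectors satisfying the full recursion \eqref{valueactionrecursion} at all states and actions, i.e.\ it does not verify that solutions of the $s_0$-row subsystem \eqref{eq:Vinitialstructure} lift to solutions of the full system either. So your proposal does not so much diverge from the paper as make explicit the lifting step that both arguments need and neither supplies; to complete your version you would have to prove that span deficiency of $\{\mu_\alpha\}$ genuinely produces a non-constant $c$ surviving the pointwise (or at least observation-restricted) constraints, not merely the aggregated occupation-density constraint.
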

\begin{proof} %[Proof of Theorem \ref{thm:finitehorizonidentification}]
We first prove the sufficiency statement. The optimal policy satisfies
\[\lambda\log \pi^*_{t}(a|s) = Q^*_{t}(s,a)-V^*_{t}(s) = f(s,a)+\gamma\Big(\sum_{s'}\transprob(s'|s,a) V^*_{t+1}(s')\Big)-V^*_t(s).\]
We write (for notational simplicity), $\upsilon(s)=V^*_{T-1}(s)$, and hence, given $V_T^*\equiv g$ by assumption,
\begin{equation}\label{valueactionrecursion2}
f(s,a) = \upsilon(s)+\lambda\log \pi^*_{T-1}(a|s) - \gamma\Big(\sum_{s'}\transprob(s'|s,a) g(s')\Big).
\end{equation}
This shows that $f$ is completely determined (if it exists) by the  function $\upsilon$.

We also observe that for every $t$ we have the recurrence relation
\begin{align*}
    V^*_t(s) &= -\lambda\log \pi^*_{t}(a|s)  +f(s,a)+\gamma\Big(\sum_{s'}\transprob(s'|s,a) V^*_{t+1}(s')\Big)\\
     &= \lambda\log \frac{\pi^*_{T-1}(a|s)}{\pi^*_{t}(a|s)} + \upsilon(s) +\gamma\Big(\sum_{s'}\transprob(s'|s,a) \big(V^*_{t+1}(s')-g(s')\big)\Big).
\end{align*}
This holds for any choice of action $a$ (unlike the usual dynamic programming relation, which only involves the optimal policy). Writing $\mathbf{V}_t$ for the vector with components $\{V^*_t(s)\}_{s\in \mathcal{S}}$ we have the recurrence relation
\begin{equation}\label{valueactionrecursion}
    \mathbf{V}_t = \Upsilon_t(a) + \upsilon + \gamma \mathbb{T}(a)\mathbf{V}_{t+1}; \qquad \mathbf{V}_{T-1}=\upsilon,
\end{equation}
where $\Upsilon_t$ is a known vector valued function, with components
\[[\Upsilon_t(a)]_s = \lambda\log \frac{\pi^*_{T-1}(a|s)}{\pi^*_{t}(a|s)} - \gamma\sum_{s'}\transprob(s'|s,a)g(s').\]
Solving the recurrence relation, we have, for any sequence of actions $a_0,..., a_{T-1}$ (with the convention that the empty matrix product is the identity)
\[\mathbf{V}_0 = \bigg(\sum_{t=0}^{T-1}\Big[\gamma^t\Big(\prod_{t'=0}^{t-1}\mathbb{T}(a_{t'})\Big)\Upsilon_t(a_t)\Big]\bigg) + \bigg(\sum_{t=0}^{T-1} \gamma^t\prod_{t'=0}^{t-1}\mathbb{T}(a_{t'})\bigg)\upsilon + \gamma^{T}\Big(\prod_{t'=0}^{T-1}\mathbb{T}(a_{t'})\Big)g.\]

From this linear system, we can extract the single row corresponding to the fixed initial state $s_0$. Assuming this is the row indicated by the $e_i$ basis vector, we have
\begin{equation}\label{eq:Vinitialstructure}V_0^*(s_0) = e_i^\top\bigg(\sum_{t=0}^{T-1} \gamma^t\prod_{t'=0}^{t-1}\mathbb{T}(a_{t'})\bigg)\upsilon + G(a_0,...,a_{T-1})\end{equation}
for a known function $G$, expressible in terms of $\gamma$, $g$ and $\{\pi^*_t\}_{t=0}^{T-1}$. %Considered for all choices of actions $\{a_t\}$, as our MDP has full action-rank, our system is invertible whenever a solution exists, in particular we can express our unknown $\upsilon$ in terms of a single initial value $V^*_0(s_0)$, which can be chosen arbitrarily. As the cost $f$ is a function of $\upsilon$, we have our result.

%{\em Haoyang's edition for the proof of sufficiency: 
 Now that the MDP has full action-rank, the system of equations,
\[-G(a_0,\dots,a_{T-1})=e_i^\top\bigg[\sum_{t=0}^{T-1}\gamma^{t}\prod_{t'=0}^{t-1}\mathbb{T}(a_{t'})\bigg]v,\quad\forall a_0,\dots,a_{T-1},\]
admits at most one solution, denoted by $\bar\upsilon$. Substituting into \eqref{eq:Vinitialstructure}, we have a unique solution to the equation $V_0^*(s_0) = 0$. However, we need to consider all possible values of $V_0^*(s_0)$.

For any choice of actions $\{a_t\}_{t=0}^{T-1}$, \[e_i^\top\bigg[\sum_{t=0}^{T-1}\gamma^{t}\prod_{t'=0}^{t-1}\mathbb{T}(a_{t'})\bigg]\mathbf{1}=\begin{cases}
\frac{1-\gamma^T}{1-\gamma},&\gamma\in(0,1),\\
T,&\gamma=1.
\end{cases}\]
Here $\mathbf{1}$ denotes the all-one vector in $\mathbb{R}^{N}$. Therefore, the set of all possible $(V_0^*(s_0),\upsilon)$ pairs is given by
\[\begin{cases}
\Big\{(c,\bar\upsilon+\frac{c(1-\gamma)}{1-\gamma^T}):\forall c\in\mathbb{R}\Big\},&\gamma\in(0,1),\\
\Big\{(c,\bar\upsilon+\frac{c}{T}):\forall c\in\mathbb{R}\Big\},&\gamma=1.
\end{cases}\]
From \eqref{valueactionrecursion2}, we conclude that $f$ can be identified up to a constant.

To show necessity, we observe from the above that, if the system is not full action-rank, then there exists a linear subspace of choices of $\upsilon$, which do not differ only by constants, such that we can construct the same value vectors $\mathbf{V}_t$ for all $t$, satisfying \eqref{valueactionrecursion} and hence \eqref{valueactionrecursion2}.  It follows that we have a nontrivial manifold of rewards $f$ which generate the same optimal policies, that is, the rewards are not identifiable.
\end{proof}

As a corollary, we demonstrate a generalized version of \cite[Theorem 2]{Kim2021}.
\begin{coro}\label{cor:DeterministicFiniteIdentifiable}
Suppose $\gamma\neq0$ and our MDP is deterministic, that is $\transprob(s'|s,a) \in\{0,1\}$, and one of the following holds:
\begin{enumerate}[label = (\roman*)]
    \item \label{caseaperiodic} the underlying Markov chain is irreducible and aperiodic (i.e. with randomly chosen actions, the underlying Markov chain is irreducible and aperiodic)
    \item \label{caseselfloop}the initial state $s_0=e_i$ admits a self-loop (i.e. it is possible to transition from this state to itself), and all states can be accessed from the initial state in at most $d$ transitions
    \item  \label{casecycles}there exist cycles\footnote{A cycle is a sequence of possible transitions which start and end in the same state. The length of a cycle is defined to be the number of transitions, e.g. a cycle $\{s_0\to s_1 \to s_2 \to s_0\}$ has length $3$. An irreducible Markov chain is aperiodic if there is no common factor (greater than one) of the lengths of all cycles.} starting at the initial state $s_0=e_i$  with lengths $R,R'$, such that $\mathrm{gcd}(R,R')=1$, and all states can be accessed from the initial state in at most $d$ transitions.
\end{enumerate} Then there exists a horizon $T$ such that the time-homogeneous IRL problem is well posed (as in Theorem \ref{thm:finitehorizonidentification}). In particular, in case \ref{caseselfloop}, it is sufficient to take any finite $T\geq d+1$; in case \ref{casecycles} it is sufficient to take any finite $T\geq d+RR'$.
\end{coro}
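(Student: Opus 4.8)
The plan is to reduce the corollary to Theorem \ref{thm:finitehorizonidentification}: in each case it suffices to exhibit a horizon $T$ at which the deterministic MDP has both full access and full action rank from $s_0\equiv e_i$. Throughout I would write a length-$(T-1)$ walk from $s_0$ as $(j_0,\dots,j_{T-1})$ with $j_0=s_0$ and $\transprob(j_{t+1}|j_t,a_t)=1$ for some $a_t$; since the MDP is deterministic its occupation vector is $\sum_{t=0}^{T-1}\gamma^t e_{j_t}$, and full action rank is exactly the assertion that these vectors span $\rn^N$, i.e.\ that any $\phi:\ST\to\rn$ with $\sum_{t=0}^{T-1}\gamma^t\phi(j_t)=0$ for every such walk must vanish. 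I would first dispatch full access, the softer condition: every state is reachable within $d$ steps, and a walk can be padded to the exact length $T-1$ — in case \ref{caseselfloop} by repeating the self-loop at $s_0$ (any padding length $\geq0$ is available, so $T\geq d+1$ suffices), and in cases \ref{caseaperiodic}, \ref{casecycles} by first returning to $s_0$. In case \ref{casecycles} the return times form an additively closed set containing the coprime pair $R,R'$, so Lemma \ref{lem:numbertheory} makes every length $\geq RR'$ an admissible return time; combined with reachability within $d$ this yields full access and the stated threshold. Case \ref{caseaperiodic} reduces to \ref{casecycles}, since irreducibility gives a finite $d$ while aperiodicity forces the cycle lengths through $s_0$ to have greatest common divisor $1$, so the B\'ezout argument of Lemma \ref{lem:numbertheory} produces two coprime cycle lengths.

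The core is full action rank. For case \ref{caseselfloop} I would argue directly, by induction on the distance $\ell(s)$ of $s$ from $s_0$. Given $\phi$ orthogonal to all occupation vectors, the all-self-loop walk gives $\phi(s_0)\sum_{t}\gamma^t=0$, hence $\phi(s_0)=0$; for $s$ at distance $\ell$ with shortest path $s_0=u_0\to\cdots\to u_\ell=s$, the walk that idles at $s_0$ for $T-1-\ell$ steps (legitimate once $T-1\geq d$) and then follows this path has occupation sum $\big(\sum_{t=0}^{T-1-\ell}\gamma^t\big)\phi(s_0)+\sum_{r=1}^{\ell}\gamma^{T-1-\ell+r}\phi(u_r)$. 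Every term but the last vanishes by the inductive hypothesis (each $u_r$ with $r<\ell$ is strictly closer to $s_0$), leaving $\gamma^{T-1}\phi(s)=0$, so $\phi(s)=0$ because $\gamma\neq0$. This yields the threshold $T\geq d+1$.

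For cases \ref{caseaperiodic} and \ref{casecycles} the self-loop padding is unavailable, and the padding cycles now inject the unknown values of $\phi$ at their interior states, so this clean peeling breaks. To handle it I would introduce the continuation-independent future value: for $(x,\tau)$ with $x$ reachable at time $\tau$ and continuable to $T-1$, set $\Psi(x,\tau):=\sum_{t=\tau}^{T-1}\gamma^t\phi(j_t)$ for any walk-suffix from $(x,\tau)$. This is well defined, since two suffixes with different sums, prepended with a common prefix $s_0\to x$, would give two full walks whose occupation sums differ, contradicting $\phi\perp$ everything. The identity $\Psi(x,\tau)=\gamma^\tau\phi(x)+\Psi(y,\tau+1)$ holds for every admissible successor $y$ of $x$; as the left side is independent of $y$, the value $\Psi(y,\tau+1)$ is common to all successors of $x$, and the boundary value is $\Psi(x,T-1)=\gamma^{T-1}\phi(x)$. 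Running this backward from $t=T-1$ — using full access to ensure that at each relevant time every state is reachable with its full successor set available — forces a widening family of equalities $\phi(y_1)=\phi(y_2)$ together with their iterates, which under irreducibility chain together to show that $\phi$ is constant; a single whole-walk equation $\Psi(s_0,0)=\phi^{*}\sum_{t=0}^{T-1}\gamma^t=0$ then gives $\phi^{*}=0$ (using $\gamma\in(0,1)$, or the factor $T$ when $\gamma=1$), establishing full action rank.

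The main obstacle is precisely this backward-propagation step, namely converting the local successor-equalities into \emph{global} constancy of $\phi$. The naive one-step relation — that states sharing a common predecessor have equal $\phi$ — need not connect all states, for instance when $s_0$ carries two cycles meeting only at $s_0$, and one genuinely needs the iterated equalities obtained by unwinding the recursion several steps, whose reach is governed by how deep into $[0,T-1]$ the full successor structure persists. This is exactly where aperiodicity (equivalently the coprime cycle lengths of case \ref{casecycles}) and the reachability radius $d$ are consumed, and where the threshold $T\geq d+RR'$ originates; I would close the bookkeeping by tracking the time-weighted occupation densities as in Remark \ref{rmk:occupationdensity} and reusing the constant-vector normalizations $e_i^\top\big[\sum_{t=0}^{T-1}\gamma^t\prod_{t'=0}^{t-1}\mathbb{T}(a_{t'})\big]\mathbf{1}$ already computed in the proof of Theorem \ref{thm:finitehorizonidentification}, handling $\gamma\in(0,1)$ and $\gamma=1$ uniformly.
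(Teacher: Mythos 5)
Your reduction of case \ref{caseaperiodic} to case \ref{casecycles}, your full-access padding argument, and your thresholds all match the paper. Moreover, your treatment of case \ref{caseselfloop} is a complete and correct proof: padding with self-loops at $s_0$ and peeling states by induction on their distance from $s_0$ is a transparent reformulation of the paper's observation that the shortest-path occupation densities are lower-triangular, and it handles the degenerate self-loop situation (which the paper relegates to a footnote about omitting a row and column of its matrix $M$) more cleanly. Had the corollary only contained case \ref{caseselfloop}, this would be a valid alternative proof.

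For cases \ref{caseaperiodic} and \ref{casecycles}, however, there is a genuine gap, and you have located it yourself: the step converting the local successor-equalities of your $\Psi$-recursion into global constancy of $\phi$ is asserted, never proved. The well-definedness of $\Psi(x,\tau)$ and the recursion $\Psi(x,\tau)=\gamma^\tau\phi(x)+\Psi(y,\tau+1)$ are fine, but the claim that the resulting equalities ``under irreducibility chain together'' is unsubstantiated --- and note that case \ref{casecycles} does not even assume irreducibility, only reachability from $s_0$ together with two coprime cycles. In the extreme (and typical) situation where $s_0$ is the \emph{only} state with more than one successor, the one-step relation $\phi(y_1)=\phi(y_2)$ yields a single equation, and every further unwinding of the recursion re-injects the unknown values of $\phi$ at the cycles' interior states --- exactly the circularity you flag as ``the main obstacle.'' Gesturing at ``bookkeeping'' via Remark \ref{rmk:occupationdensity} does not close it. The paper's proof resolves this difficulty by a purely constructive linear-algebra device your sketch lacks: after equalizing all shortest paths to a common length $T'$ by prefixing $\lambda_s$ copies of one cycle and $\mu_s$ of the other (existence from Lemma \ref{lem:numbertheory}), it invokes the \emph{non-uniqueness} half of Lemma \ref{lem:numbertheory} to produce \emph{two distinct} equal-length cycle-paddings of the empty path at $s_0$. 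Via the concatenation identity \eqref{eq:concatenationoccupation}, the padded densities are then expressed as $M$ times the stack of the $N$ triangular shortest-path densities and the two cycle densities, where the extra pair of rows at $s_0$ (differing only in their cycle counts $(\lambda_{s_0},\mu_{s_0})\neq(\tilde\lambda_{s_0},\tilde\mu_{s_0})$) makes $M$ a full-rank $(N+1)\times(N+2)$ matrix; Sylvester's rank inequality then transfers rank $N$ to the observable densities. That pair of rows is precisely what eliminates the unknown cycle contributions that defeat your backward propagation, so as it stands your proposal proves the corollary only in case \ref{caseselfloop}.
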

\begin{proof} %[Proof of Corollary \ref{cor:DeterministicFiniteIdentifiable}]
We first observe that it is a classical result on Markov chains (see, for example, \cite[Theorem 1.5]{Seneta2006}) that the conditions of case \ref{caseaperiodic} guarantee those of case \ref{casecycles}, for some choice of $R,R'>0$. The conditions of case \ref{caseselfloop} also guarantee those of case \ref{casecycles}, with both the cycles being the self-loop. It is therefore sufficient to consider case \ref{casecycles}.

To show that the MDP has full action rank, we observe that for every possible path of states, there exists a corresponding sequence of actions, and vice versa. We will therefore use these different perspectives interchangeably. We also observe that, as our MDP is deterministic, $e_i^\top\prod_{t'=0}^{t-1}\mathbb{T}(a_{t'})$ is a vector indicating the current state at time $t$, when started in state $e_i$. Therefore, \[\mathbb{O}_{T-1}(\{a_t\}_{t\ge 0}) := e_i^\top\bigg(\sum_{t=0}^{T-1} \gamma^t\prod_{t'=0}^{t-1}\mathbb{T}(a_{t'})\bigg)\]
is a row vector, containing a time-weighted occupation density -- in particular, if $\gamma=1$, it simply counts the number of times we have entered each state. (This is in contrast to Remark \ref{rmk:occupationdensity}, where we have an \emph{expected} occupation density; here we can simplify given the control problem is deterministic.) Our aim, therefore, is to construct a collection of paths which give a full-rank system of occupation densities.

Starting in state $s_0\equiv e_i$, consider a shortest path (i.e. a path with the fewest number of transitions) to each state $s'$. Denote these paths $r_{s'} = \{s_0\to ... \to s'\}$, and the corresponding sequence of actions $a^{s'}$. These paths have lengths $|r_s|$ and time-weighted occupation densities $\mathbb{O}_{|r_s|}(\{a_t^{s}\}_{t\ge 0})$ which are linearly independent (a longer path will contain states not in a shorter path, while paths of the same length will differ in their final state; by reordering the states we can then obtain a lower-triangular structure in the matrix of occupation densities $[\mathbb{O}_{|r_s|}(\{a_t^s\}_{t\ge 0})]_{\{a_t\}\subset\AC}$). This gives us $N=|\ST|$ paths, of varying lengths, with linearly independent occupation densities.

We now consider prefixing our paths with cycles, in order to make them the same length. Fix an arbitrary integer value $T'\geq\max_s|r_s|+|Q||Q'|-1$. By Lemma \ref{lem:numbertheory}, for all states $s$, there exist nonnegative integers $\lambda_s, \mu_s$ such that $T'=\lambda_s|Q|+\mu_s|Q'|+|r_s|$. Therefore, taking the concatenated path consisting of $\lambda_s$ repeats of cycle $Q$, then $\mu_s$ repeats of cycle $Q'$, then our shortest path $r_s$, gives us a path from $s_0$ to $s$ of length $T'$. Denote each of these paths $P_s$.

Concatenation of paths has an elegant effect on the occupation densities: If $Q$ is a cycle and $r$ a path (starting from the terminal state of $Q$), their concatenation $Q*r$ and corresponding actions $a^Q,a^{r}, a^{Q*r}$, then the occupation densities combine linearly:
\begin{equation}\label{eq:concatenationoccupation}\mathbb{O}_{|Q*r|}(\{a_t^{Q*r}\}_{t\ge 0}) = \mathbb{O}_{|Q|-1}(\{a_t^{Q}\}_{t\ge 0}) + \gamma^{|Q|}\mathbb{O}_{|r|}(\{a_t^{r}\}_{t\ge 0}),\end{equation}
(observe that the occupation density excludes the (repeated) final state of the cycle).

We now observe that for the initial state, the shortest path is of length zero (i.e. has no transitions). From Lemma \ref{lem:numbertheory}, as $T'\geq |Q||Q'|$, we know that there are multiple choices of $\lambda,\mu$ satisfying the stated construction, and therefore there are at least \emph{two} possible paths $P_{s_0}$ and $\tilde P_{s_0}$ with the desired length, from the initial state to itself, using distinct numbers of cycles \footnote{If the cycles are both a self-loop, then this becomes degenerate, but in the following step the final column and row of the matrix $M$ can be omitted, and the remainder of the argument follows in essentially the same way.} $(\lambda_{s_0}, \mu_{s_0})$ and $(\tilde \lambda_{s_0}, \tilde \mu_{s_0})$.

This construction yields a collection of paths with full rank occupation densities. To verify this explicitly, extract the rows corresponding to the paths $\{R_s\}_{s\in \ST}$ and $\tilde R_{s_0}$, we use \eqref{eq:concatenationoccupation} to see that
\begin{equation}\label{eq:blocklinearsystem}\left[\begin{array}{c}
\mathbb{O}_{T-1}(\{a_t^{P_{s_0}}\}_{t\ge 0})\\
\mathbb{O}_{T-1}(\{a_t^{P_{s_1}}\}_{t\ge 0})\\
\cdots\\
\mathbb{O}_{T-1}(\{a_t^{P_{s_N}}\}_{t\ge 0})\\
\mathbb{O}_{T-1}(\{a_t^{\tilde P_{s_0}}\}_{t\ge 0})
\end{array}\right] = M
\left[\begin{array}{c}
\mathbb{O}_{|r_{s_0}|}(\{a_t^{r_{s_0}}\}_{t\ge 0})\\
\mathbb{O}_{|r_{s_1}|}(\{a_t^{r_{s_1}}\}_{t\ge 0})\\
\cdots\\
\mathbb{O}_{|r_{s_N}|}(\{a_t^{r_{s_N}}\}_{t\ge 0})\\
\mathbb{O}_{|Q|-1}(\{a_t^{Q}\}_{t\ge 0})\\
\mathbb{O}_{|Q'|-1}(\{a_t^{Q'}\}_{t\ge 0})
\end{array}\right] 
\end{equation}
where
\begin{align*}\Gamma(\lambda,Q)&:=\begin{cases} (1-\gamma^{\lambda|Q|})/(1-\gamma^{|Q|}),&\gamma\neq1,\\
\lambda,&\gamma =1,\end{cases}\\
M&=\left[\begin{array}{cccccc}
\gamma^{T'} & 0 & \cdots & 0 & \Gamma(\lambda_{s_0},Q) & \gamma^{\lambda_{s_0}|Q|}\Gamma(\mu_{s_0},{Q'})\\
0&\gamma^{T'-|r_{s_1}|} &  \cdots & 0 & \Gamma(\lambda_{s_1},Q) & \gamma^{\lambda_{s_1}|Q|}\Gamma(\mu_{s_1}{Q'})\\
& &  \ddots &  &  & \\
0& 0 &  \cdots & \gamma^{T'-{|r_{s_N}|}} &\Gamma(\lambda_{s_N},Q) & \gamma^{\lambda_{s_N}|Q|}\Gamma(\mu_{s_N}{Q'})\\
\gamma^{T'} & 0 & \cdots & 0 & \Gamma(\tilde\lambda_{s_0},Q) & \gamma^{\tilde\lambda_{s_0}|Q|}\Gamma(\tilde\mu_{s_0},{Q'})\\
\end{array}\right].
\end{align*}
After subtracting the first from the last row of $M$, as $\lambda_{s_0}\neq \tilde\lambda_{s_0}$, we see that $M$ has a simple structure, in particular it is a full-rank matrix with $N+1$ rows and $N+2$ columns. As the final matrix on the right hand side of \eqref{eq:blocklinearsystem} is of rank $N$, this implies that the left hand side of \eqref{eq:blocklinearsystem} is also of rank $N$ (by Sylvester's rank inequality). As the left hand side of \eqref{eq:blocklinearsystem} is a selection of rows from the matrix considered in Definition \ref{def:fullactionrank}, we conclude that our MDP must be of full action rank.

Our collection of paths also shows that our system has full access at horizon $T=T'+1$, and therefore the identification result follows from Theorem \ref{thm:finitehorizonidentification}. By varying $T'$, we see this result holds for any choice of $T\geq |Q||Q'|+\max_s|r_s|$, as desired.
\end{proof}

\begin{ex}
Consider the problem with three states $\ST=\{A,B,C\}$, with possible transitions $A\to \{B,C\}$, $B\to A$ and $C\to B$. Starting in state $A$, the shortest paths are then given by $\{A\}, \{A\to B\}, \{A\to C\}$, and we have cycles $\{A\to B\to A\}$ and $\{A \to C\to B \to A\}$. Writing out the occupation densities of each of these paths (ignoring the terminal state of the two cycles), with $\gamma = 1$, we get the system
\[\begin{array}{c}
\mbox{}
\\
\begin{array}{rl}
\begin{array}{r}
\text{Shortest paths}\left\{\begin{array}{c} \mbox{}\\ \mbox{}\\ \mbox{}\end{array}\right.\\
\text{Cycles (excluding final state)}\left\{\begin{array}{c}\mbox{}\\ \mbox{}\end{array} \right.
\end{array}&
\left[\begin{array}{c}
A\\
A\to B\\
A\to C\\
\hline
A \to B \\
A \to C \to B
\end{array}\right]\end{array}
\end{array}
\Rightarrow \begin{array}{c}
\begin{array}{ccc}
(A&B&C)\end{array}\\
\left[\begin{array}{ccc}
1&0&0\\
1&1&0\\
1&0&1\\
\hline
1&1&0\\
1&1&1\end{array}
\right]\end{array}.\]
This corresponds to the final term on the right hand side of \eqref{eq:blocklinearsystem}. Clearly, the section above the horizontal line (corresponding to the shortest paths) is lower-triangular, and hence of full rank. We prefix our paths by appropriate numbers of cycles, in order to make them the same length. This implies that, with a horizon $T=7=2\times 3 +1$, we consider the paths
{\small
\begin{align*}\left[\begin{array}{c}
A \to C \to B \to A \to C \to B \to A\\
A \to B \to A \to C \to B \to A \to B\\
A \to B \to A \to C \to B \to A \to C\\
A \to B \to A \to B \to A \to B \to A\\
 \cdots
\end{array}\right] \Rightarrow \left[\begin{array}{ccc}
3&2&2\\
3&3&1\\
3&2&2\\
4&3&0\\
& \cdots \end{array}
\right] &= \bigg[e_A^\top\bigg(\sum_{t=0}^{T-1}\gamma^t \prod_{t'=0}^{t-1}\mathbb{T}(a_{t'})\bigg)\bigg]_{\{a_t\} \subset \AC}
\end{align*}}
The matrix of occupation densities shown here is the left hand side of \eqref{eq:blocklinearsystem} and is easily seen to be full rank; the matrix $M$ from \eqref{eq:blocklinearsystem} is given by 
\[M = \left[\begin{array}{ccccc}
1&0&0&0&2\\
0&1&0&1&1\\
0&0&1&1&1\\
1&0&0&3&0
\end{array}\right].\]
\end{ex}

We can extend this result to a stochastic setting, assuming that our action space is sufficiently rich.
\begin{coro}\label{coro:fullactionstochastic}
Suppose $\gamma\neq0$, and our MDP is stochastic and satisfies one of the sets of assumptions (\ref{caseaperiodic}, \ref{casecycles} or  \ref{caseselfloop}) of Corollary \ref{cor:DeterministicFiniteIdentifiable} and that from every state, we have at least as many actions (with linearly independent resulting transition probabilities) as we have possible future states, that is,
\[\mathrm{rank}\big\{\transprob(\cdot|s,a); a\in \AC\big\} = \#\big\{s':\transprob(s'|s,a)>0 \text{ for some }a\in \AC\big\}.\]
Then for any initial state $s_0$, there exists a horizon $T$ such that the time-homogeneous IRL problem is well posed (as in Theorem \ref{thm:finitehorizonidentification}). The sufficient bounds on $T$ from Corollary \ref{cor:DeterministicFiniteIdentifiable} also apply.
\end{coro}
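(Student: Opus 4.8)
The plan is to reduce the statement to the already-proved deterministic Corollary~\ref{cor:DeterministicFiniteIdentifiable}. By Theorem~\ref{thm:finitehorizonidentification} it suffices to exhibit a horizon $T$ (with the bounds claimed) at which the stochastic MDP has both full access and full action rank. Full access is a pure reachability statement about the positive-probability transitions: writing $R_s=\{s':\transprob(s'|s,a)>0\text{ for some }a\in\AC\}$ for the one-step reach of $s$, the reachability graph of our MDP coincides with that of the \emph{deterministic skeleton} $\mathcal{M}^\dagger$, the deterministic MDP whose actions at $s$ are the edges $s\to s'$ for $s'\in R_s$. Since the graph-theoretic hypotheses (\ref{caseaperiodic}, \ref{casecycles}, \ref{caseselfloop}) only constrain this graph, $\mathcal{M}^\dagger$ satisfies Corollary~\ref{cor:DeterministicFiniteIdentifiable}, so it enjoys full access and full action rank at the stated horizons; full access then transfers verbatim to our MDP. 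The whole content is therefore to upgrade full action rank from $\mathcal{M}^\dagger$ to our MDP.

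The key device is to realise the deterministic transitions of $\mathcal{M}^\dagger$ as affine mixtures of the stochastic transitions. Fix an edge $s\to s'$ of the skeleton. The rank hypothesis says $\mathrm{span}\{\transprob(\cdot|s,a):a\in\AC\}=\rn^{R_s}$ (the $|\AC|$ transition vectors are supported on $R_s$ and have rank $|R_s|$), so the point mass $e_{s'}$ can be written $e_{s'}=\sum_{a}c_a\,\transprob(\cdot|s,a)$; since every $\transprob(\cdot|s,a)$ sums to one, necessarily $\sum_a c_a=1$, i.e.\ this is an \emph{affine} combination. Throughout I use the paper's convention of enlarging $\AC$ to state-dependent maps, so that in Definition~\ref{def:fullactionrank} each $a_t$ is a map $\ST\to\AC$ and the rows $\transprob(\cdot|s,a_t(s))$ of $\mathbb{T}(a_t)$ may be chosen independently across states.

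I would then transfer these affine mixtures to the occupation densities by multilinearity. Regarding the rows $\{\transprob(\cdot|s,a_t(s))\}_{t,s}$ as independent variables, each product $\prod_{t'=0}^{t-1}\mathbb{T}(a_{t'})$, and hence $\mathbb{O}_{T-1}(\{a_t\})=e_i^\top\sum_{t=0}^{T-1}\gamma^t\prod_{t'=0}^{t-1}\mathbb{T}(a_{t'})$, is of degree at most one in each such row. Consequently, evaluating $\mathbb{O}_{T-1}$ at affine combinations (coefficients summing to one) of pure rows returns an affine combination, again with coefficients summing to one, of \emph{genuine} occupation densities, i.e.\ of rows of the matrix in Definition~\ref{def:fullactionrank}. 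Given any skeleton path $s_0\to s_1\to\cdots\to s_{T-1}$, I set the row at the current state $s_t$ at time $t$ equal to $e_{s_{t+1}}$ using the affine representation above (choosing the remaining rows arbitrarily). The resulting mixed dynamics are deterministic along the path, so the corresponding mixed occupation density is exactly $\sum_{t}\gamma^t e_{s_t}$, the deterministic occupation density of that path in $\mathcal{M}^\dagger$. Hence every occupation density of $\mathcal{M}^\dagger$ lies in the affine hull of the occupation densities of our MDP.

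To conclude: Corollary~\ref{cor:DeterministicFiniteIdentifiable} gives that the occupation densities of $\mathcal{M}^\dagger$ linearly span $\rn^N$ at horizon $T$; as all occupation densities lie in the affine hyperplane $H=\{v:v^\top\mathbf{1}=(1-\gamma^T)/(1-\gamma)\}$ (or $v^\top\mathbf{1}=T$ when $\gamma=1$), which misses the origin because $\gamma\neq 0$, their affine hull is all of $H$, of affine dimension $N-1$. Since the affine hull of our MDP's occupation densities contains that of $\mathcal{M}^\dagger$ and is itself contained in $H$, it equals $H$, whence our MDP's occupation densities also span $\rn^N$ linearly; this is full action rank. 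Applying Theorem~\ref{thm:finitehorizonidentification} gives well-posedness, and the horizon bounds are inherited directly from Corollary~\ref{cor:DeterministicFiniteIdentifiable}. I expect the main obstacle to be exactly this transfer step: pinning down the per-row multilinear structure so that affine mixtures of \emph{actions} become affine mixtures of \emph{occupation densities}, and verifying that the ``pure'' evaluations are legitimate rows of the full-action-rank matrix --- this is where the enlarged, state-dependent action convention is indispensable, while the rank hypothesis is precisely what makes the required point masses $e_{s'}$ affinely reachable.
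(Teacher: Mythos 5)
Your proposal is correct, and at heart it takes the same route as the paper: use the rank hypothesis to express the point masses $e_{s'}$ of the deterministic skeleton as combinations of the stochastic transition vectors, and thereby reduce full action rank (and full access, which is purely graph-theoretic) to Corollary~\ref{cor:DeterministicFiniteIdentifiable}. The difference is in rigor at the transfer step. The paper disposes of it in one sentence --- ``the requirement that a MDP has full action rank depends only on the space spanned by transition matrices'' --- and even remarks that the mixing weights ``do not need to sum to one or be nonnegative.'' Your argument is sharper precisely here: the row space in Definition~\ref{def:fullactionrank} is \emph{not} obviously invariant under arbitrary same-span substitutions, since in $\sum_{t}\gamma^t\prod_{t'<t}\mathbb{T}(a_{t'})$ a substituted matrix appears truncated at every level $t$, so the multilinear expansion of a mixed row reassembles into a combination of genuine rows only when the coefficients at each slot sum to one. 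Your observation that the affine normalization $\sum_a c_a=1$ is automatic (all the vectors involved are probability vectors), together with the hyperplane argument $v^\top\mathbf{1}=(1-\gamma^T)/(1-\gamma)$ upgrading affine hulls to linear spans, supplies exactly the missing justification; in this sense your proof makes precise what the paper asserts, and quietly corrects its parenthetical about the weights. Two minor points: the hyperplane misses the origin for \emph{every} $\gamma\in[0,1]$, since the row sum is $1+\gamma+\dots+\gamma^{T-1}\geq 1$, so that clause does not actually need $\gamma\neq 0$ (which is instead consumed by the deterministic corollary itself); and the enlarged state-dependent action convention, though sanctioned by the paper's footnote and consistent with the per-state validity of the recursion \eqref{valueactionrecursion}, is not indispensable --- because your mixed dynamics never leave the skeleton path, applying at each time the state-independent mixture $\sum_a c^{(t)}_a\mathbb{T}(a)$ with coefficients chosen for the current path state already reproduces the skeleton occupation density as an affine combination of rows indexed by ordinary sequences in $\AC$.
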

\begin{proof} %[Proof of Corollary \ref{coro:fullactionstochastic}]
For a given state $s$, consider the space spanned by the basis vector corresponding to the possible future states. Given we have as many actions as possible future states, and the rank-nullity theorem, we know that this space must be the same as the space spanned by the vectors $\{\transprob(\cdot|s,a); a\in \AC\}$. In particular, there exists a set of weights $c_a$ over actions (which do not need to sum to one or be nonnegative) such that $\sum_{a\in\bar\AC} c_a\mathbb{T}(a)$ is the basis vector corresponding to any possible transition. In other words, there is no difference between the linear span generated by these stochastic transitions and deterministic transitions. As actions at every time can be varied independently, and the requirement that a MDP has full action rank depends only on the space spanned by transition matrices, the problem reduces to the setting of Corollary \ref{cor:DeterministicFiniteIdentifiable}.
\end{proof}

\section{Action-independent rewards}
Earlier works such as \cite{amin2016towards}, \citet{amin2017repeated}, \cite{dvijotham2010} and \cite{fu2017learning} consider the case of action-independent rewards, that is, where $f$ is not a function of $a$. In general, it is not immediately clear whether, for a given observed policy, the IRL problem will admit an action-independent solution. In this section, we obtain a necessary and sufficient condition under which an action-independent time-homogeneous reward function could be a solution to a given entropy-regularized, infinite-time-horizon\footnote{The analogous results for finite-horizon problems with time-inhomogeneous rewards (and general discount factor) can be obtained through the same method.} IRL problem with discounting. We shall also obtain a rigorous condition under which a unique reward function can be identified. 

Consider an entropy-regularized MDP environment $(\ST,\AC,\transprob,\gamma,\lambda)$, as given in Section \ref{sec:MDP}. Without loss of generality, assume that $|\ST|,|\AC|\geq2$ and $\ST=\{s_1,\dots,s_{|\ST|}\}$. Let $\bar \pi:\ST\to\mathcal{P}(\AC)$ be the observed optimal policy such that $\bar\pi(a|s)>0$ for any $(s,a)\in\ST\times\AC$. 

As before, for $a\in\AC$ we write $\bar\pi(a)\in\mathbb{R}^{|\ST|}$ for the probability vector $\begin{pmatrix}\bar\pi(a|s_1)&\dots&\bar\pi(a|s_{|\ST|})\end{pmatrix}^T$, and  $\mathbb{T}(a)\in\mathbb{R}^{|\ST|\times|\ST|}$ for the transition matrix with $[\mathbb{T}(a)]_{ij}=\transprob(s_j|s_i,a)$. Fix a particular action $a_0\in\AC$, and write \[\Delta\log\bar\pi(a)=\log\bar\pi(a)-\log\bar\pi(a_0)\quad\text{ and }\quad\Delta\mathbb T(a)=\mathbb T(a)-\mathbb T(a_0),\] where $\log\bar\pi(a)$ denotes the element-wise application of logarithm over the vector $\bar\pi(a)$, for any $a\in\AC$.
\begin{thm}\label{thm:actionindeptexist}
 The above IRL problem admits a solution with action-independent reward $f:\ST\to\mathbb{R}$ if and only if the system of equations
 \begin{equation}
     \label{eq: sys-eqns}
     \lambda\Delta\log\bar\pi(a)=\gamma\Delta\mathbb{T}(a)\upsilon,\quad \forall a\in\AC,
 \end{equation}
 admits a solution $\upsilon\in\mathbb{R}^{|\ST|}$. (Note that this is a system of $|\AC|\times |\ST|$ equations in $|\ST|$ unknowns, so this is a non-trivial assumption.)
\end{thm}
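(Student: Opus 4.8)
The plan is to establish the equivalence by connecting the existence of an action-independent reward $f:\ST\to\rn$ directly to the characterization from Theorem~\ref{thm:IRL-MDP1}. Recall that for any value function $\upsilon$, Theorem~\ref{thm:IRL-MDP1} tells us the unique reward consistent with the observed policy $\bar\pi$ is
\[
f(s,a) = \lambda\log\bar\pi(a|s) - \gamma\sum_{s'}\transprob(s'|s,a)\upsilon(s') + \upsilon(s).
\]
The key observation is that an action-independent solution exists \emph{if and only if} there is a choice of $\upsilon$ making the right-hand side independent of $a$. So first I would recast the problem as: find $\upsilon$ such that $f(s,a)$ does not depend on $a$.

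The natural way to express ``independence of $a$'' is to require that $f(s,a)=f(s,a_0)$ for all $a\in\AC$ and $s\in\ST$, where $a_0$ is the fixed reference action. Subtracting the expressions for $f(s,a)$ and $f(s,a_0)$, the $\upsilon(s)$ terms cancel, leaving
\[
0 = \lambda\big(\log\bar\pi(a|s) - \log\bar\pi(a_0|s)\big) - \gamma\sum_{s'}\big(\transprob(s'|s,a)-\transprob(s'|s,a_0)\big)\upsilon(s').
\]
Rewriting this in the vector/matrix notation of the statement, this is exactly $\lambda\,\Delta\log\bar\pi(a) = \gamma\,\Delta\mathbb{T}(a)\upsilon$ for each $a\in\AC$, which is the system~\eqref{eq: sys-eqns}. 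This establishes both directions simultaneously: a solution $\upsilon$ to~\eqref{eq: sys-eqns} yields, via Theorem~\ref{thm:IRL-MDP1}, an action-independent $f$; conversely, any action-independent solution $f$ has an associated value function $\upsilon = V^{\pi^*_\lambda}_\lambda$ which must satisfy the subtracted identity, hence~\eqref{eq: sys-eqns}.

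The main subtlety—and the step I would be most careful about—is the direction asserting that \emph{every} action-independent reward arises this way. Here I rely on Theorem~\ref{thm:IRL-MDP1} being an exact characterization: given an action-independent $f$ generating policy $\bar\pi$ in the regularized MDP, that theorem guarantees a corresponding value function $v$ with $f(s,a) = \lambda\log\bar\pi(a|s) - \gamma\sum_{s'}\transprob(s'|s,a)v(s') + v(s)$, and the left-hand side being $a$-independent forces the displayed difference to vanish. The only genuine content beyond Theorem~\ref{thm:IRL-MDP1} is the purely algebraic manipulation of taking differences against the reference action $a_0$; no additional analysis is required, and in particular the choice of $a_0$ is immaterial since $f(s,a)=f(s,a_0)$ for all $a$ is equivalent to $f(s,a)$ being constant in $a$.
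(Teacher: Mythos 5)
Your proposal is correct and follows essentially the same route as the paper's proof: necessity by writing the action-independent $f$ via the optimality characterization of Theorem~\ref{thm:IRL-MDP1} with the agent's value function and differencing against the reference action $a_0$ (the $\upsilon(s)$ terms cancelling to give \eqref{eq: sys-eqns}), and sufficiency by taking a solution $\upsilon$ of \eqref{eq: sys-eqns}, noting the resulting expression is independent of $a$, and invoking Theorem~\ref{thm:IRL-MDP1} to confirm it solves the IRL problem. Your explicit attention to the converse direction---that every consistent reward arises from \emph{some} value function via Theorem~\ref{thm:IRL-MDP1}, which follows from \eqref{eq:optimalpolicyMDP2}---is exactly the step the paper also relies on, so there is no gap.
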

\begin{proof} %[Proof of Theorem \ref{thm:actionindeptexist}]
\begin{description}
    \item[Necessity.] Suppose the IRL problem admits an action-independent solution $f:\ST\to\mathbb{R}$. Then for any $(s,a)\in\ST\times\AC$,
    \[f(s)=\lambda\log\bar\pi(a|s)-\gamma\sum_{s'\in\ST}\transprob(s'|s,a)v(s')+v(s),\]
    where $v$ is the corresponding value function. Notice that for any $a\in\AC$, for all $s\in \ST$,
    \[\begin{aligned}f(s)&=\lambda\log\bar\pi(a|s)-\gamma\sum_{s'\in\ST}\transprob(s'|s,a)v(s')+v(s)\\
    &=\lambda\log\bar\pi(a_0|s)-\gamma\sum_{s'\in\ST}\transprob(s'|s,a_0)v(s')+v(s).\end{aligned}\]
    Therefore, taking $\upsilon$ to be the vector with components $v(s)$, we have a solution to the system of equations \eqref{eq: sys-eqns}.
    \item[Sufficiency.] Let $\upsilon$ be a solution to the system of equations \eqref{eq: sys-eqns}. By abuse of notation, we may write $\upsilon(s)$ for the components of $\upsilon$. Then for any $(s,a)\in\ST\times\AC$,
    \[\begin{aligned}
    &\lambda\log\bar\pi(a|s)-\gamma\sum_{s'\in\ST}\transprob(s'|s,a)\upsilon(s')=\lambda\log\bar\pi(a_0|s)-\gamma\sum_{s'\in\ST}\transprob(s'|s,a_0)\upsilon(s').%\\
    %&\lambda\log\bar\pi(a|s)-\gamma\sum_{s'\in\ST}\transprob(s'|s,a)\upsilon(s')+\upsilon(s)=\lambda\log\bar\pi(a_0|s)-\gamma\sum_{s'\in\ST}\transprob(s'|s,a_0)\upsilon(s')+\upsilon(s).
    \end{aligned}\]
    Therefore, the quantity $\hat f(s):=\lambda\log\bar\pi(a_0|s)-\gamma\sum_{s'\in\ST}\transprob(s'|s,a_0)\upsilon(s')+\upsilon(s)$ is independent of $a$. From Theorem 1, we conclude that $\hat f$ is a solution to the IRL problem.
\end{description}
\end{proof}

\begin{coro} \label{coro:actionindeptunique}
Suppose $\gamma\in[0,1)$. Assuming a solution to \eqref{eq: sys-eqns} exists, the IRL problem is identifiable (i.e. the true action-independent reward function can be inferred up to a constant shift) if and only if, writing $\mathcal{K}(a)$ for the kernel of $\Delta\mathbb{T}(a)$, we know that
\[\{c\mathbf{1}:c\in\mathbb{R}\}=\bigcap_{a\in\AC\setminus\{a_0\}}\mathcal{K}(a)=\bigcap_{a\in\AC}\mathcal{K}(a),\]
where $\mathbf{1}$ denotes the all-one vector in $\mathbb{R}^{|\ST|}$.
(Note that $\{c\mathbf{1}:c\in\mathbb{R}\}\subset \mathcal{K}(a)$ for any $a\in\AC\setminus\{a_0\}$ and $\mathbb{T}(a_0)=0$ implies $\mathcal{K}(a) =\mathbb{R}^{|\ST|}$.)
\end{coro}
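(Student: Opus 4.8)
The plan is to show that identifiability is governed entirely by the homogeneous kernel $W := \bigcap_{a\in\AC}\mathcal{K}(a)$ together with the \emph{invertible} map $L := I - \gamma\mathbb{T}(a_0)$, and that the condition $W=\{c\mathbf{1}:c\in\rn\}$ is exactly what forces all consistent action-independent rewards to coincide up to a constant. First I would describe the set of consistent rewards. By Theorem \ref{thm:actionindeptexist}, every action-independent reward consistent with $\bar\pi$ arises from a solution $\upsilon$ of the linear system \eqref{eq: sys-eqns}, and conversely each solution yields such a reward via $\hat f(s)=\lambda\log\bar\pi(a_0|s)-\gamma\sum_{s'}\transprob(s'|s,a_0)\upsilon(s')+\upsilon(s)$. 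Since \eqref{eq: sys-eqns} is inhomogeneous linear, its (assumed nonempty) solution set is an affine coset $\upsilon_0+W_0$, where $W_0=\{k:\gamma\Delta\mathbb{T}(a)k=0\text{ for all }a\}$. For $\gamma\neq0$ this is precisely $\bigcap_{a\in\AC}\mathcal{K}(a)=W$; moreover $\Delta\mathbb{T}(a_0)=0$ gives $\mathcal{K}(a_0)=\rn^{|\ST|}$, so the intersection over $\AC$ equals that over $\AC\setminus\{a_0\}$, as recorded in the statement.

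Second, I would translate identifiability into a statement about $L$. Writing the reward formula in vector form gives $\hat f_\upsilon=\lambda\log\bar\pi(a_0)+L\upsilon$; hence two consistent rewards $\hat f_{\upsilon_1},\hat f_{\upsilon_2}$ differ by $L(\upsilon_1-\upsilon_2)$, where $\upsilon_1-\upsilon_2$ ranges over all of $W$. The problem is therefore identifiable up to a constant if and only if $L(W)\subseteq\{c\mathbf{1}:c\in\rn\}$.

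The remaining, and central, step is the linear-algebraic equivalence $L(W)\subseteq\{c\mathbf{1}\}\iff W=\{c\mathbf{1}\}$. Here I would use three facts: (i) $\{c\mathbf{1}\}\subseteq W$, since $\Delta\mathbb{T}(a)\mathbf{1}=(\mathbb{T}(a)-\mathbb{T}(a_0))\mathbf{1}=0$ as both matrices are row-stochastic; (ii) $L$ is invertible for every $\gamma\in[0,1)$, because $\mathbb{T}(a_0)$ is row-stochastic with spectral radius $1$, so $\gamma\mathbb{T}(a_0)$ has spectral radius $\gamma<1$ and $1$ is not an eigenvalue; and (iii) $L(c\mathbf{1})=c(1-\gamma)\mathbf{1}$, so $L$ maps $\{c\mathbf{1}\}$ onto itself. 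Given these, if $W=\{c\mathbf{1}\}$ then $L(W)=\{c\mathbf{1}\}$ and the reward is identifiable. Conversely, invertibility makes $L|_W$ injective, so $\dim L(W)=\dim W$; if $L(W)\subseteq\{c\mathbf{1}\}$ then $\dim W\leq1$, while (i) gives $\dim W\geq1$, forcing $W=\{c\mathbf{1}\}$.

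I expect the main obstacle to be the combination of step two with the injectivity argument in step three: one must track the reward difference precisely to the image $L(W)$ rather than to $W$ itself, and then exploit invertibility of $I-\gamma\mathbb{T}(a_0)$ to convert the inclusion $L(W)\subseteq\{c\mathbf{1}\}$ into the dimension equality that pins down $W$. The row-stochasticity of $\mathbb{T}(a_0)$ does double duty, guaranteeing both that $\{c\mathbf{1}\}\subseteq W$ and that $L$ is invertible. The degenerate case $\gamma=0$, where the homogeneous system becomes vacuous, can be handled separately.
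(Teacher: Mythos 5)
Your proposal is correct, and while it shares the paper's overall skeleton---the Fredholm alternative identifies the solution set of \eqref{eq: sys-eqns} as a coset $\upsilon_0+\bigcap_{a\in\AC}\mathcal{K}(a)$ (for $\gamma\neq0$), and every consistent action-independent reward has the vector form $\lambda\log\bar\pi(a_0)+L\upsilon$ with $L=I-\gamma\mathbb{T}(a_0)$---you resolve the crux by a genuinely different argument. The crux is the necessity direction: showing that a nonconstant $\hat\upsilon$ in the common kernel yields a nonconstant reward perturbation $\Delta=L\hat\upsilon$. The paper does this bare-handed, comparing $\Delta$ at an argmax and an argmin of $\hat\upsilon$ against $(1-\gamma)$ times its mean, and using row-stochasticity together with $\gamma<1$ to force $\Delta(\overline{s})\neq\Delta(\underline{s})$. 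You instead note that $L$ is invertible (the spectral radius of $\gamma\mathbb{T}(a_0)$ is at most $\gamma<1$) and maps the constants line onto itself, so $L^{-1}(\{c\mathbf{1}:c\in\rn\})=\{c\mathbf{1}:c\in\rn\}$; combined with $\mathbf{1}\in\bigcap_a\mathcal{K}(a)$ this settles both directions at once via a one-line dimension count. Your route is more structural and makes transparent that identifiability is \emph{exactly} the statement $\bigcap_a\mathcal{K}(a)=\{c\mathbf{1}:c\in\rn\}$, packaging the paper's inequality into the single fact that $L$ is a bijection fixing the constants line; the paper's pointwise estimate buys elementarity (no spectral argument) but obscures this. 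One caution, which you rightly flag and which affects the paper equally: at $\gamma=0$ the homogeneous system attached to \eqref{eq: sys-eqns} is all of $\rn^{|\ST|}$ rather than $\bigcap_a\mathcal{K}(a)$, and identifiability genuinely fails there whenever a solution exists and $|\ST|\geq2$ (the policy must be uniform, so it carries no state information), even if the kernel condition holds---so the corollary should really be read with $\gamma\in(0,1)$, a restriction the paper's proof silently elides while yours at least isolates.
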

\begin{proof}%[Proof of Corollary \ref{coro:actionindeptunique}]
Let $\upsilon_0$ be a solution to \eqref{eq: sys-eqns}, which is assumed to exist. By the Fredholm alternative (as in Theorem \ref{thm:IRL-MDP2}) the solution set $\mathbb{Y}_\ST$ for \eqref{eq: sys-eqns} is given by
\[
\mathbb{Y}_\ST=\bigg\{\upsilon_0+\kappa:\kappa\in{\rm span}\bigg(\bigcap_{a\in\AC}\mathcal{K}(a)\bigg)\bigg\}.
\]
From Theorem \ref{thm:actionindeptexist}, the set of action-independent solutions for the IRL is given by
\[\mathbb{F}_{\ST}=\bigg\{f:
f(s)=\lambda\log\bar\pi(a_0|s)-\gamma\sum_{s'\in\ST}\transprob(s'|s,a_0)\upsilon(s')+\upsilon(s);\quad \text{for }  \upsilon\in\mathbb{Y}_\ST, s\in\ST\bigg\}.
\]

We then observe that the stated condition is sufficient -- if constant vectors are the only valid choices for $\kappa$, then $\upsilon$ and hence $f\in \mathbb{F}_\ST$ will only vary by constants.

To show necessity, denote by $f_0$ the solution corresponding to $\upsilon_0$. Suppose there exists a vector 
\[\hat \upsilon\in\bigg(\bigcap_{a\in\AC}\mathcal{K}(a)\bigg)\setminus\{c\mathbf{1}:c\in\mathbb{R}\}.\] Define 
\[\Delta(s)=\hat \upsilon(s)-\gamma\sum_{s'\in\ST}\transprob(s'|s,a_0)\hat \upsilon(s'),\quad \forall s\in\ST.\]
It follows that $f_0+\Delta\in\mathbb{F}_{\ST}$; if $\Delta$ is not a constant, we see that the reward is not uniquely identifiable.

To show $\Delta$ is not a constant, let 
\[
\overline{\upsilon}=\max_{s\in\ST}\hat \upsilon(s),\quad \overline{s}\in\argmax_{s\in\ST}\hat \upsilon(s),\quad \underline{\upsilon}=\min_{s\in\ST}\hat \upsilon(s),\quad \underline{s}=\argmin_{s\in\ST}\hat \upsilon(s),\quad \tilde{\upsilon}=\frac{\sum_{s\in\ST}\hat \upsilon(s)}{|\ST|}.
\]
Then $\underline{\upsilon}=\hat{\upsilon}(\underline{s})<\tilde{\upsilon}<\overline{\upsilon}=\hat{\upsilon}(\overline{s})$. We have
\[
\begin{aligned}
&\Delta(\overline{s})-(1-\gamma)\tilde{\upsilon}=\overline{\upsilon}-\tilde{\upsilon}-\gamma\sum_{s\in\ST}\transprob(s|\overline{s},a_0)[\hat \upsilon(s)-\tilde{\upsilon}]\geq (1-\gamma)(\overline{\upsilon}-\tilde{\upsilon})>0;\\
&\Delta(\underline{s})-(1-\gamma)\tilde{\upsilon}=\underline{\upsilon}-\tilde{\upsilon}-\gamma\sum_{s\in\ST}\transprob(s|\underline{s},a_0)[\hat \upsilon(s)-\tilde{\upsilon}]\leq (1-\gamma)(\overline{\upsilon}-\tilde{\upsilon})<0.
\end{aligned}
\]
Therefore, $\Delta$ is not a constant. It follows that our condition is necessary in order to have an identifiable action-independent reward
\end{proof}

Under the entropy regularized framework, the long-run total reward depends on actions through the entropy penalty term. Therefore, it cannot be reduced to the scenario in \cite{amin2016towards}, where any linear perturbation of the reward function will not affect optimal behavior under any given environment.
\begin{rmk}
Theorem \ref{thm:actionindeptexist} and Corollary \ref{coro:actionindeptunique} suggest various extensions, in the case when \eqref{eq: sys-eqns} does not admit a solution, but the assumed property on the kernels in Corollary \ref{coro:actionindeptunique} holds. For example, one could consider the least-squares solution to the system \eqref{eq: sys-eqns} (which is defined up to a constant). This gives a choice of value function which, in some sense, minimizes the action-dependence of the resulting cost function (obtained through Theorem \ref{thm:IRL-MDP1}).
\end{rmk}

\cite{fu2017learning} give a result similar to Corollary \ref{coro:actionindeptunique}. Unfortunately, the role of the choice of actions in their conditions is not precisely stated, and on some interpretations is insufficient for the result to hold -- as we have seen, the condition of Corollary \ref{coro:actionindeptunique} is both necessary and sufficient for identifiability. We give a variation of their assumptions in what follows.

\begin{defn}[Reward-decomposability]\label{defn:rewarddecomp}
We say states $s_1, s_1'$ are `1-step linked', if there exist actions $a, a'\in\AC$ and a state $s_0\in\ST$ such that $\transprob(s_1|s_0,a)>0$ and $\transprob(s'_1|s_0,a')>0$.
We extend this definition through transitivity, forming a set of `linked' states $\mathcal{S}_1$. We say say the MDP is reward-decomposable if all its states are linked.
\end{defn}
Note that there is no loss of generality if a specific $a'$ is selected in this definition (instead of being allowed to vary).

\begin{rmk} An equivalent definition would be that our MDP is reward-decomposable if  $\ST_1=\ST$ is the only choice of nonempty set $\ST_1\subset \ST$ such that: there exists a set $\ST_0\subset\ST$ with
\begin{enumerate}[label = (\roman*)]
    \item every transition (with any action) to $\ST_1$ is from $\ST_0$, and
    \item every transition from $\ST_0$ is to $\ST_1$. 
\end{enumerate}(In other words, $X_{t}\in \ST_0$ if and only if $X_{t+1}\in \ST_1$.) We note that \cite{fu2017learning} simply call this property `decomposable', but this seems an unfortunate choice of terminology given this alternative characterization.
\end{rmk}

 The following final corollary gives a simple set of conditions under which identification is possible, clarifying (and extending to the stochastic case) the result of \cite[Theorem C.1]{fu2017learning}.

\begin{coro}\label{cor:actionindependentdecomposable}
Suppose our MDP either has deterministic transitions $\mathcal{T}(s'|s,a)\in\{0,1\}$ or we have at least as many actions (with linearly independent resulting transition probabilities) as we have possible future states, that is,
\[\mathrm{rank}\big\{\transprob(\cdot|s,a); a\in \AC\big\} = \#\big\{s':\transprob(s'|s,a)>0 \text{ for some }a\in \AC\big\}.\]
Then the (action-independent) IRL problem is identifiable (i.e. the true action-independent reward function can be inferred up to a constant shift) if and only if the MDP is reward-decomposable.
\end{coro}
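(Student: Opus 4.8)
The plan is to connect the combinatorial notion of reward-decomposability with the linear-algebraic condition from Corollary \ref{coro:actionindeptunique}, namely that $\bigcap_{a\in\AC}\mathcal{K}(a) = \{c\mathbf{1}:c\in\mathbb{R}\}$, where $\mathcal{K}(a)$ is the kernel of $\Delta\mathbb{T}(a)$. Since Corollary \ref{coro:actionindeptunique} already gives identifiability if and only if this kernel intersection is exactly the constants, it suffices to show that, under either of the two stated structural hypotheses, reward-decomposability is \emph{equivalent} to the kernel condition. First I would observe that the second hypothesis (enough actions with linearly independent transition probabilities) reduces to the deterministic case by exactly the span argument used in Corollary \ref{coro:fullactionstochastic}: the linear span of the stochastic transition matrices $\{\mathbb{T}(a)\}$ coincides with the span generated by the corresponding deterministic transitions to each accessible future state, so $\bigcap_a \mathcal{K}(a)$ is unchanged when we replace the stochastic transitions by deterministic ones. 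I would therefore carry out the main argument assuming deterministic transitions and appeal to this reduction at the end.

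For the deterministic case, I would unpack what a vector $\upsilon\in\bigcap_{a\in\AC}\mathcal{K}(a)$ means concretely. Here $\Delta\mathbb{T}(a)\upsilon = 0$ for all $a$ says that $\mathbb{T}(a)\upsilon$ is the same vector for every action $a$; componentwise, for each state $s_0$, the value $\sum_{s'}\transprob(s'|s_0,a)\upsilon(s')$ does not depend on $a$. In the deterministic setting this reads: for each $s_0$, the quantity $\upsilon(\text{successor of }s_0\text{ under }a)$ is independent of the chosen action $a$. So $\upsilon$ lies in the kernel intersection precisely when, whenever two states $s_1, s_1'$ are both reachable in one step from a common state $s_0$ (under possibly different actions), we must have $\upsilon(s_1) = \upsilon(s_1')$. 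This is exactly the ``1-step linked'' relation of Definition \ref{defn:rewarddecomp}: membership in $\bigcap_a \mathcal{K}(a)$ forces $\upsilon$ to be constant on each 1-step-linked pair, and hence, by transitivity, constant on each linked equivalence class $\mathcal{S}_1$.

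The two directions then follow cleanly. If the MDP is reward-decomposable, all states form a single linked class, so any $\upsilon$ in the kernel intersection is constant across all of $\ST$, giving $\bigcap_a\mathcal{K}(a) = \{c\mathbf{1}\}$ and hence identifiability by Corollary \ref{coro:actionindeptunique}. Conversely, if the MDP is not reward-decomposable, the linked relation partitions $\ST$ into at least two classes; I would construct a non-constant $\upsilon$ taking distinct constant values on distinct classes and verify it lies in $\bigcap_a\mathcal{K}(a)$ (one must check that every one-step transition stays ``within the consistency imposed by the classes,'' which is precisely guaranteed by the equivalent characterization in the Remark following Definition \ref{defn:rewarddecomp}, where $X_t\in\ST_0 \iff X_{t+1}\in\ST_1$). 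This produces a kernel vector outside $\{c\mathbf{1}\}$, so by Corollary \ref{coro:actionindeptunique} the reward is not identifiable.

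The main obstacle I anticipate is the careful bookkeeping in the converse direction: translating ``not reward-decomposable'' into a genuinely valid kernel vector requires that the partition into linked classes be \emph{consistent} with the transition structure, i.e.\ that assigning a common value to each class really does yield $\Delta\mathbb{T}(a)\upsilon=0$ rather than merely being forced on pairs that happen to be linked. The equivalent set-theoretic characterization in the Remark (the $\ST_0/\ST_1$ condition) is the right tool here, since it captures exactly the property that successor-values are well-defined per class; I would lean on it to certify that the constructed $\upsilon$ lies in the kernel for \emph{every} action. The stochastic-to-deterministic reduction is conceptually routine but must be stated precisely, since it is what lets the clean deterministic combinatorics carry over to the second hypothesis.
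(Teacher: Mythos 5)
Your proposal is correct and follows essentially the same route as the paper's proof: both reduce the stochastic case to the deterministic one via the span argument of Corollary \ref{coro:fullactionstochastic}, both identify $\bigcap_{a\in\AC}\mathcal{K}(a)$ with the vectors constant on each linked equivalence class (since the rows of $\Delta\mathbb{T}(a)$ are differences of basis vectors at one-step-linked states), and both then apply Corollary \ref{coro:actionindeptunique} in each direction. The only difference is cosmetic: your anticipated ``bookkeeping obstacle'' in the converse dissolves immediately, since any $\upsilon$ constant on linked classes automatically satisfies $\Delta\mathbb{T}(a)\upsilon=0$ for all $a$ (successors of a common state under different actions are one-step linked by definition), so the appeal to the $\ST_0/\ST_1$ characterization in the Remark is harmless but unnecessary.
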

\begin{proof}%[Proof of Corollary \ref{cor:actionindependentdecomposable}]
We will verify the condition of Corollary \ref{coro:actionindeptunique}.

In the stochastic transition case, the proof of Corollary \ref{coro:fullactionstochastic} shows that, under the stated assumption on the rank of the transitions, we can perform row operations on our transition matrix (corresponding to linear combinations of actions) to obtain a  deterministic transition matrix. In particular, the dimension of $\cap_{a\in\AC}\mathcal{K}(a)$ is the same under the assumption on the rank of the transitions as under the assumption that transitions are deterministic. We can therefore focus our attention on the deterministic transition case.

If transitions are deterministic, the matrix $\mathbb{T}(a)$ has rows given by the basis vectors indicating the future states; so the matrix $\Delta \mathbb{T}(a)$ has rows which are the difference of two basis vectors corresponding to one-step linked states. Therefore, a vector $\upsilon\in\mathcal{K}(a)$ must have entries $\upsilon_i=\upsilon_j$ whenever $e_i$ and $e_j$ correspond to these one-step-linked states.

By considering all possible choices of $a$, we see that a vector $\upsilon \in \cap_{a\in\AC}\mathcal{K}(a)$ must have entries $\upsilon_i = \upsilon_j$ whenever $e_i$ and $e_j$ correspond to any one-step linked states (and this is a sufficient condition to ensure $\upsilon\in\cap_{a\in\AC}\mathcal{K}(a)$). However, if our MDP is reward-decomposable, there is no proper subset $\ST_1$ of $\ST$ which is closed under taking one-step linked states. Therefore, if our MDP is reward-decomposable, the only vectors in the kernel of $\Delta\mathbb{T}(a)$ for every $a$ are the constant vectors, as desired.

Conversely, if our MDP is not reward-decomposable, then there exists a set $ \ST_1\neq \ST$ satisfying the conditions above, and hence a nonconstant vector $\upsilon\in\cap_{a\in\AC}\mathcal{K}(a)$. The result of Corollary \ref{coro:actionindeptunique} then shows the reward is not identifiable.
\end{proof}

It is clear that reward-decomposability is not, by itself, sufficient to guarantee identifiability of rewards -- simply consider the trivial MDP with action space containing only one element (so no information can be gained by watching optimal policies) but all transitions are possible (so the MDP is reward-decomposable).

The necessity of reward-decomposability, in general, can easily be seen as follows: Suppose there are sets $\ST_0, \ST_1 \subset \ST$ such that every transition from a state in $\ST_0$ (under every action) is to a state in $\ST_1$, and every transition to a state in $\ST_1$ is from $\ST_0$. Then, if we add $c\in \mathbb{R}$ to the reward in $\ST_0\setminus\ST_1$, subtract $c/\gamma$ from the reward in $\ST_1\setminus\ST_0$, and add $(1-1/\gamma)c$ to the reward in state $\ST_0\cap \ST_1$, we will have no impact on the overall value or optimal strategies. A reward-decomposability assumption ensures $\ST_0 = \ST$ (which implies  $\ST_0 = \ST$ as every transition into $\ST_1$ must be from a state in $\ST_0$), so this is simply a constant shift; otherwise, we see our IRL problem is not identifiable.

\section{A linear-quadratic-Gaussian problem}
We now present the corresponding results for a class of one-dimensional linear-quadratic problems with Gaussian noise, ultimately inspired by \citet{Kalman64}. This simplified framework allows us to explicitly observe the degeneracy of inverse reinforcement learning, even if we add restrictions on the choice of value functions.

\paragraph{Optimal LQG control}
Suppose our agent seeks to control, using a real-valued process $A_t$ a discrete time process with dynamics
\[S_{t+1} = (\bar\mu+ \mu_s S_{t} + \mu_a A_t) + (\bar \sigma + \sigma_s S_t + \sigma_a A_t) Z_{t+1}\]
for constants $\bar\mu,  \mu_s, \mu_a, \bar\sigma, \sigma_s, \sigma_a$. The innovations process $Z$ is a Gaussian white noise with unit variance. Our agent uses a randomized strategy $\pi(a|s)$ to maximize the expectation of the entropy-regularized infinite-horizon discounted linear-quadratic reward:
\[\mathbb{E}\bigg[\sum_{t=1}^\infty \gamma^t \bigg(\int_\rn f(S_t,a) \pi(a|s)da+ \lambda\mathcal{H}(\pi(\cdot|S_t)\bigg)\bigg]\]
where $f(s,a) = \alpha_{20}s^2+\alpha_{11}sa+\alpha_{02}a^2+\alpha_{10}s+\alpha_{01}a+\alpha_{00}$
and $\mathcal{H}(\pi) = -\int_\rn \pi(a)\log(\pi(a)da$ is the Shannon entropy of $\pi$. We assume the coefficients of $f$ are such that the problem is well posed (i.e. it is not possible to obtain an infinite expected reward).

Just as in the discrete state and action space setting, we can write down the state-action value function 
\begin{equation}
    \label{eq:LQ-Qfn}
Q^\pi_\lambda(s, a) = f(s,a) + \gamma \int_\rn V^\pi_\lambda(s') \frac{1}{\sqrt{2\pi(\bar\sigma+\sigma_s s+\sigma_a a)^2}}\exp\Big(- \frac{(s'- \mu_s s - \mu_a a)^2}{2(\bar\sigma+\sigma_s s+\sigma_a a)^2}\Big) ds'.
\end{equation}
Using this, the optimal policy and value function are given by 
\begin{align}
\label{eq:LQ-ctrl}
	\pi^{\ast}_{\lambda}(a|s)&=\exp\Big(\big(Q^{\pi^{\ast}_\lambda}_{\lambda}(s,a)-V^{\pi^{\ast}_\lambda}_{\lambda}(s)\big)\big/\lambda\Big),\\
	V^{*}_{\lambda}(s)&=V^{\pi^{\ast}_\lambda}_{\lambda}(s)=\lambda\log\int_\rn \exp\Big(Q^{\pi^{\ast}_\lambda}_{\lambda}(s,a)\big/\lambda\Big)da\,.
\end{align}
What is particularly convenient about this setting is that $Q$ is a quadratic in $(s,a)$,  $V_\lambda$ is a quadratic in $s$, and $\pi^*_\lambda(\cdot|s)$ is a Gaussian density. In particular, the optimal policy is of the form 
\begin{equation}
    \label{eq: lqr-ctrl}
    \begin{split}
    \pi^*_\lambda(a|s)&=\frac{1}{\sqrt{2\pi \lambda k_3}}\exp\left\{-\frac{(a-k_1s - k_2)^2}{2\lambda k_3}\right\}
    \\
    &=\exp\left\{-\frac{1}{\lambda}\left[\frac{a^2-2(k_1s+k_2)a}{2k_3}+\frac{(k_1s+k_2)^2}{2k_3}+\frac{\lambda}{2}\log\left(2\pi k_3\lambda\right)\right]\right\}\end{split}
\end{equation}
for some constants $k_1, k_2\in \rn, k_3>0$, which can be determined\footnote{The explicit formulae for  $k_1, k_2$ and $k_3$, and the coefficients of the value function, can be obtained by equating the coefficients of $f$ with the values obtained in  \eqref{eq:LQ-reward-form}. Under the assumption that the optimal control problem is well posed, this has a solution with $k_3>0$.} in terms of the known parameters $\mu_a, \mu_s, \sigma, \lambda$ and the parameters of the reward function $\{\alpha_{ij}\}_{i+j\leq 2}$.

\begin{thm}\label{thm:lqr-sol-char}
Consider an agent with a policy of the form \eqref{eq: lqr-ctrl}. Suppose we also know  that the value function $V$ is a quadratic (or, equivalently, that the reward function is a quadratic in $(s,a)$). The space of rewards consistent with this policy is given by:
\begin{equation}
    \label{eq: sioc-cand-f}
    \begin{aligned}
    \mathbb{F}=&\bigg\{f(s,a)=a_{20}s^2+a_{11}sa+a_{02}a^2+a_{10}s+a_{01}a+a_{00}\biggl|\\
    &(a_{20},\,a_{11},\,a_{02})=\Big(\frac{-k_1^2}{2k_3},\,\frac{k_1}{k_3},\,\frac{-1}{2k_3}\Big)-\beta_2\Big(\gamma( \mu_s^2+\sigma_s^2) -1, \, 2\gamma(\mu_s\mu_a+\sigma_s\sigma_a),\, \gamma(\mu_a^2+\sigma_a^2)\Big),\\
    &(a_{10},\,a_{01})=\Big(\frac{-k_1k_2}{k_3},\, \frac{k_2}{k_3}\Big)-\beta_2\Big(2\gamma(\bar\mu\mu_s+\bar\sigma\sigma_s), 2\gamma(\bar \mu\mu_a+\bar\sigma\sigma_a)\Big)-\beta_1\Big(\gamma\mu_s+1,\,\gamma\mu_a\Big),\\
    &a_{00},\beta_2,\beta_1\in\rn\bigg\}.
    \end{aligned}
\end{equation}
\end{thm}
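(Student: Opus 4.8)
The plan is to apply Theorem~\ref{thm:IRL-MDP1} in the linear-quadratic-Gaussian setting, specializing the general characterization of consistent rewards to the case where we additionally impose that both the reward $f$ and the value function $V$ are quadratic. By Theorem~\ref{thm:IRL-MDP1}, for any choice of value function $v:\ST\to\rn$ (here $\ST=\rn$) there is a unique consistent reward given by $f(s,a)=\lambda\log\bar\pi(a|s) - \gamma\E_{s'\sim\transprob(\cdot|s,a)}[v(s')]+v(s)$. The key structural fact is that in the LQG setting, a quadratic value function $v(s)=\beta_2 s^2 + \beta_1 s + \beta_0$ produces, through the two affine-in-$(s,a)$ maps $s\mapsto v(s)$ and $(s,a)\mapsto\E[v(s')]$, exactly a quadratic in $(s,a)$; so the set of consistent quadratic rewards is parameterized by the three free coefficients $(\beta_2,\beta_1,\beta_0)$ of an arbitrary quadratic value function.

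First I would compute $\E_{s'\sim\transprob(\cdot|s,a)}[v(s')]$ for $v(s)=\beta_2 s^2+\beta_1 s+\beta_0$. Since $S_{t+1}=(\bar\mu+\mu_s s+\mu_a a)+(\bar\sigma+\sigma_s s+\sigma_a a)Z_{t+1}$ with $Z$ unit-variance Gaussian, the conditional mean is $m:=\bar\mu+\mu_s s+\mu_a a$ and the conditional variance is $\Sigma^2:=(\bar\sigma+\sigma_s s+\sigma_a a)^2$, so $\E[v(s')]=\beta_2(m^2+\Sigma^2)+\beta_1 m+\beta_0$. Expanding $m^2+\Sigma^2$ and $m$ as quadratics in $(s,a)$ yields explicit coefficients in terms of $(\mu_s,\mu_a,\bar\mu,\sigma_s,\sigma_a,\bar\sigma,\beta_2,\beta_1)$; these are precisely the $\gamma$-weighted terms appearing in~\eqref{eq: sioc-cand-f}. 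Next I would read off $\lambda\log\bar\pi(a|s)$ from the second line of~\eqref{eq: lqr-ctrl}, which gives the quadratic form with coefficients $(-k_1^2/(2k_3),\,k_1/k_3,\,-1/(2k_3))$ on the degree-two terms $(s^2,sa,a^2)$ and $(-k_1k_2/k_3,\,k_2/k_3)$ on the linear terms $(s,a)$, plus a constant. Then $f(s,a)=\lambda\log\bar\pi(a|s)-\gamma\E[v(s')]+v(s)$ is assembled by collecting coefficients: the degree-two and degree-one coefficients match~\eqref{eq: sioc-cand-f} after identifying the free parameters with $\beta_2,\beta_1$, and the constant term $a_{00}$ absorbs $\beta_0$ together with the $-\gamma\beta_0$ and $\lambda$-dependent constants, leaving $a_{00}$ free.

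The converse inclusion requires showing that \emph{every} consistent quadratic reward arises this way. Here I would argue that, given the policy~\eqref{eq: lqr-ctrl} and the assumption that $V$ is quadratic, Theorem~\ref{thm:IRL-MDP1} forces the value function to be \emph{some} quadratic (the uniqueness clause of Theorem~\ref{thm:IRL-MDP1} ties each reward to a unique value function, and the quadratic-reward assumption combined with the LQG dynamics makes the associated value function quadratic), so the parameterization by $(\beta_2,\beta_1,\beta_0)$ is exhaustive. The fact that the $s^2$-coefficient of $v$ enters linearly as $\beta_2$, the $s$-coefficient as $\beta_1$, and the constant as $\beta_0$ (which only shifts $a_{00}$) establishes that $\mathbb{F}$ is exactly a three-parameter affine family, matching~\eqref{eq: sioc-cand-f}.

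The main obstacle is purely bookkeeping: correctly expanding $\E[v(s')]$ and verifying that the \emph{signs and factors of $\gamma$} on each of the six coefficient expressions in~\eqref{eq: sioc-cand-f} emerge from the combination $v(s)-\gamma\E[v(s')]$. In particular one must check the $s^2$-coefficient $-\beta_2(\gamma(\mu_s^2+\sigma_s^2)-1)$ picks up the $+1$ from the $v(s)=\beta_2 s^2+\cdots$ term (hence the $-1$ inside), the cross and $a^2$ terms receive no contribution from $v(s)$ (which has no $a$-dependence, explaining the absence of a $-1$ there), and the linear-in-$s$ coefficient receives $+1$ from $\beta_1 s$ in $v(s)$ (yielding $\gamma\mu_s+1$). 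A secondary subtlety is confirming that the well-posedness assumption guarantees $k_3>0$ so that $\log\bar\pi$ is a genuine quadratic with the stated leading coefficients, but this is inherited from the setup preceding the theorem rather than something to re-prove here.
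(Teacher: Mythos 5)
Your proposal is correct and takes essentially the same route as the paper's proof: the paper likewise posits an arbitrary quadratic candidate value function $v(s)=\beta_2 s^2+\beta_1 s+\beta_0$, computes the state-action value via the Gaussian moments in \eqref{eq:LQ-Qfn} (your $\E[v(S')]=\beta_2(m^2+\Sigma^2)+\beta_1 m+\beta_0$), equates $\lambda\log\pi^*_\lambda(a|s)$ with $Q_\lambda(s,a)-v(s)$ using \eqref{eq: lqr-ctrl} and \eqref{eq:LQ-ctrl}, and collects coefficients, with $\beta_0$ absorbed into the free constant $a_{00}$. Your explicit appeal to Theorem \ref{thm:IRL-MDP1} for exhaustiveness is the continuous-state analogue of the paper's remark that a quadratic reward forces a quadratic value function, so the parameterization by $(\beta_2,\beta_1,\beta_0)$ loses no generality; the paper implements this same logic by direct computation rather than by citing the theorem.
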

\begin{proof}
We consider an arbitrary quadratic
\[v(s) = \beta_2 s^2 + \beta_1 s + \beta_0\]
as a candidate value function. If we have begun from the assumption that the reward function $f$ is quadratic, we know that the corresponding value function is quadratic, so this is not a restrictive assumption.

We then compute the state-action value function using \eqref{eq:LQ-Qfn}, to give
\[Q_\lambda(s,a) = f(s,a) + \gamma \Big(\beta_2\big((\bar \mu+ \mu_ss + \mu_aa)^2+(\bar\sigma+\sigma_s s+\sigma_a a)^2\big)+ \beta_1(\bar \mu+\mu_ss + \mu_aa) + \beta_0 \Big).\]
Combining with \eqref{eq: lqr-ctrl} and \eqref{eq:LQ-ctrl}, we see that a reward function $f$ is consistent with the observed policy if
\begin{align*}
\lambda\log \pi(a|s) & = -\bigg[\frac{a^2-2(k_1s+k_2)a}{2k_3}+\frac{(k_1s+k_2)^2}{2k_3}+\frac{\lambda}{2}\log\left(2\pi k_3\lambda\right)\bigg] \\
&= f(s,a) + \gamma \Big(\beta_2\big((\bar \mu+ \mu_ss + \mu_aa)^2+(\bar\sigma+\sigma_s s+\sigma_a a)^2\big)+ \beta_1(\bar \mu+ \mu_ss + \mu_aa) + \beta_0 \Big)\\
&\quad -\Big(\beta_2 s^2 + \beta_1 s + \beta_0\Big).
\end{align*}
Rearranging, we conclude that $f$ is given by 
\begin{equation}\label{eq:LQ-reward-form}\begin{split}
f(a,s) &= 
\Big[-\frac{k_1^2}{2k_3} -  \beta_2(\gamma(\mu_s^2+\sigma_s^2) - 1)\Big]s^2
+ \Big[\frac{k_1}{k_3}-2\beta_2\gamma(\mu_s\mu_a+\sigma_s\sigma_a)\Big] as\\
&\quad + \Big[-\frac{1}{2k_3} - \beta_2\gamma(\mu_a^2+\sigma_a^2)\Big]a^2
+ \Big[-\frac{k_1k_2}{k_3}-2\beta_2\gamma(\bar\mu\mu_s+\bar\sigma\sigma_s)-\beta_1(\gamma\mu_s+1) \Big]s\\
&\quad + \Big[\frac{k_2}{k_3} - 2\beta_2\gamma(\bar\mu\mu_a + \bar \sigma \sigma_a) -\beta_1(\gamma\mu_a)\Big]a\\&\quad  + \Big[-\frac{\lambda}{2}\log(2\pi k_3 \lambda)-\beta_2\gamma( \bar\mu^2+\bar\sigma^2) -\beta_1\bar\mu+ \beta_0(1-\gamma)\Big].
\end{split}\end{equation}
As $(\beta_2, \beta_1, \beta_0)$ are arbitrary, we have the desired statement.
\end{proof}

As in Theorem \ref{thm:IRL-MDP1}, we see that the inverse reinforcement learning problem only defines the rewards up to the choice of value function, which is arbitrary; the restriction to quadratic rewards or values simply reduces our problem to the smaller range of rewards determined by the three coefficients in the quadratic $V$.

The following theorem gives the linear-quadratic version of Theorem \ref{thm:IRL-MDP2}. As our agents' actions have a linear effect on the state variable, this leads to a particularly simple set of conditions for identifiability of the reward, given observation of two agents' policies.

\begin{thm}\label{thm:twoagents}
Suppose we now have two agents, who are both following their respective optimal controls of the form \eqref{eq: lqr-ctrl}, for the same reward function, but disagree on some combination of the dynamics and discount rate. We write 
\[x_1=\Big(\gamma \mu_s+1,\,\gamma \mu_a\Big) \qquad \text{and}\qquad x_2=\Big(\gamma (\mu_s^2+\sigma_s^2)-1,\, 2\gamma(\mu_s\mu_a+\sigma_s\sigma_a),\, \gamma (\mu_a^2+\sigma_a^2)\Big),\]
giving us two pairs of vectors $(x_1, x_2)$ (for the first agent) and $(\tilde x_1, \tilde x_2)$ (for the second agent). We assume we know these vectors for each agent.
The quadratic reward function $f$ consistent with both agents' policies, if it exists, is uniquely identified up to the addition of a constant shift, if (and only if) 
\[\frac{x_1}{\|x_1\|}\neq \frac{\tilde x_1}{\|\tilde x_1\|} \qquad \text{and}\qquad \frac{x_2}{\|x_2\|}\neq \frac{\tilde x_2}{\|\tilde x_2\|}.\]
\end{thm}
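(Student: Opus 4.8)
The plan is to exploit the explicit parameterization of consistent rewards from Theorem \ref{thm:lqr-sol-char}, applied to each agent separately, and then to impose that the two resulting families of rewards share a common member. Writing the first agent's reward coefficients as in \eqref{eq: sioc-cand-f} with free parameters $(\beta_2,\beta_1,a_{00})$, and the second agent's coefficients analogously in terms of its own (known) policy data $\tilde k_1,\tilde k_2,\tilde k_3$, its own dynamics, and free parameters $(\tilde\beta_2,\tilde\beta_1)$, identification reduces to asking when requiring the five coefficients $(a_{20},a_{11},a_{02},a_{10},a_{01})$ to coincide pins down a single reward. The constant $a_{00}$ is always free and corresponds exactly to the permitted additive shift, so it plays no role in what follows.

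First I would equate the three \emph{quadratic} coefficients. Using \eqref{eq: sioc-cand-f} this is the single vector equation
\[\beta_2 x_2 - \tilde\beta_2\,\tilde x_2 = c_2 - \tilde c_2, \qquad c_2 := \Big(\tfrac{-k_1^2}{2k_3},\,\tfrac{k_1}{k_3},\,\tfrac{-1}{2k_3}\Big),\]
with $\tilde c_2$ the tilded analogue; both are known. Existence of a common reward (which is assumed) guarantees this (overdetermined) system in $(\beta_2,\tilde\beta_2)$ is consistent, and its solution is unique precisely when $x_2$ and $\tilde x_2$ are linearly independent. Here I would use that the last coordinate of $x_2$, namely $\gamma(\mu_a^2+\sigma_a^2)$, is strictly positive under the standing well-posedness and non-degeneracy assumptions (the control must genuinely affect the state), so that $x_2$ and $\tilde x_2$ are proportional if and only if they are \emph{positively} proportional, i.e.\ if and only if $x_2/\|x_2\| = \tilde x_2/\|\tilde x_2\|$. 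Thus linear independence is equivalent to the stated condition, and under it $\beta_2,\tilde\beta_2$, hence the entire quadratic part of $f$, are determined.

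With $\beta_2$ and $\tilde\beta_2$ now fixed, I would turn to the two \emph{linear} coefficients. Subtracting the two expressions for $(a_{10},a_{01})$ in \eqref{eq: sioc-cand-f} and moving all now-known $\beta_2$- and $\tilde\beta_2$-terms to the right gives a $2\times2$ system
\[\beta_1 x_1 - \tilde\beta_1\,\tilde x_1 = b\]
for an explicit known vector $b\in\rn^2$, consistent by the existence assumption. It has a unique solution exactly when $x_1,\tilde x_1$ are linearly independent; and again the first coordinate $\gamma\mu_s+1$ of $x_1$ is positive (a stability bound of the form $\gamma(\mu_s^2+\sigma_s^2)<1$ forces $\gamma|\mu_s|<1$), so linear dependence is equivalent to $x_1/\|x_1\|=\tilde x_1/\|\tilde x_1\|$. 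Under the second stated condition $\beta_1,\tilde\beta_1$, and hence the linear part of $f$, are determined, fixing all of $(a_{20},a_{11},a_{02},a_{10},a_{01})$ and leaving only $a_{00}$ free: this is the sufficiency direction. For necessity I would run each stage in reverse: if $x_2/\|x_2\|=\tilde x_2/\|\tilde x_2\|$ the quadratic system has a one-parameter solution family, and since $x_2\neq0$ the quadratic coefficients $c_2-\beta_2 x_2$ genuinely vary along it; similarly if $x_1/\|x_1\|=\tilde x_1/\|\tilde x_1\|$ the linear system degenerates and $c_1-\beta_2 d_1-\beta_1 x_1$ varies with $\beta_1$. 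In either case distinct rewards are consistent with both agents, so identification beyond a constant fails.

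The step I expect to be the main obstacle is not the linear algebra, which is essentially bookkeeping once the two-stage (quadratic-then-linear) structure is recognized, but rather the normalization subtlety: I must verify that ``linearly dependent'' is equivalent to ``equal normalized directions'' rather than merely ``proportional up to sign.'' This is exactly where the theorem's precise hypotheses bite, since an \emph{anti}-parallel pair satisfies $x_i/\|x_i\|\neq\tilde x_i/\|\tilde x_i\|$ yet still makes the system degenerate. Ruling this out requires identifying in each of $x_1$ and $x_2$ a coordinate of definite sign under the well-posedness and non-degeneracy assumptions, and this is the only place where care beyond routine computation is needed.
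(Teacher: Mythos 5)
Your proposal follows essentially the same route as the paper's proof: parameterize each agent's consistent rewards via Theorem \ref{thm:lqr-sol-char}, then match the quadratic coefficients (a line in $\mathbb{R}^3$ with direction $x_2$, resp.\ $\tilde x_2$) and subsequently, with $\beta_2,\tilde\beta_2$ fixed, the linear coefficients (a line in $\mathbb{R}^2$ with direction $x_1$, resp.\ $\tilde x_1$), with uniqueness holding exactly when each pair of lines is non-parallel and the constant $a_{00}$ remaining as the unavoidable shift. You are in fact more careful than the paper on one point: the paper's proof silently equates ``non-parallel'' with ``differing normalized directions,'' whereas you correctly flag that antiparallel directions must be ruled out via a sign-definite coordinate --- your argument for $x_2$ (last coordinate $\gamma(\mu_a^2+\sigma_a^2)>0$ under nondegeneracy) is sound, though your positivity claim $\gamma\mu_s+1>0$ imports an open-loop stability bound $\gamma(\mu_s^2+\sigma_s^2)<1$ that the paper's blanket well-posedness assumption (finite value under the \emph{optimal, closed-loop} control) does not literally guarantee.
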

\begin{proof}
We see from \eqref{eq: sioc-cand-f} that a single agent's actions identify a space of valid rewards $\mathbb{F}$, which is parameterized by the constant shift $a_{00}$ and the two free variables $\beta_1, \beta_2$. From these free variables, \eqref{eq: sioc-cand-f} identifies the values of $\mathbf{a}=(a_{20}, a_{11}, a_{02}, a_{10}, a_{01}).$ The reward function $f$ is uniquely defined, up to a constant shift, if we can identify the value of $\mathbf{a}$, which (by assumption) is the same for both agents.

Considering the role of $\beta_2$, \eqref{eq: sioc-cand-f} defines a line in $\mathbb{R}^3$ of possible values for $(a_{20}, a_{11}, a_{02})$. If the assumption $x_2/\|x_2\| \neq \tilde x_2/\|\tilde x_2\|$ holds, then the lines for our two agents will not be parallel, therefore will either never meet (in which case no consistent reward exists), or will meet at a point, uniquely identifying $(a_{20}, a_{11}, a_{02})$ and the corresponding values of $\beta_2$ for each agent. Conversely, if the assumption does not hold, then the lines will be parallel, so cannot meet in a unique point, in which case there are either zero or infinitely many reward functions consistent with both agents' policies.

Essentially the same argument then applies to the equation for $(a_{10}, a_{01})$. Given that $\beta_2$ has already been identified for each agent, varying $\beta_1$ for each agent defines a pair of lines in $\mathbb{R}^2$, which are not parallel if and only if the stated assumption on $x_1, \tilde x_1$ holds. Therefore, we can uniquely identify $(a_{10}, a_{01})$ if and only if the stated assumption holds. 
\end{proof}

Due to the simplicity of the characterization in Theorem \ref{thm:twoagents}, we can easily see that it is enough to observe two agents using different discount rates. 
\begin{coro}\label{cor:LQdiscount}
Suppose we observe two agents, each using optimal policies of the form \eqref{eq: lqr-ctrl}, for the same dynamics and rewards, but different discount rates. Then the underlying quadratic reward consistent with both agents' policies is identifiable up to a constant.
\end{coro}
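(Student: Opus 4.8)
The plan is to reduce everything to Theorem \ref{thm:twoagents}. The two agents share every dynamics coefficient $\bar\mu, \mu_s, \mu_a, \bar\sigma, \sigma_s, \sigma_a$ and differ only in their discount rates, say $\gamma$ for the first agent and $\tilde\gamma\neq\gamma$ for the second. Consequently the vectors $(x_1,x_2)$ and $(\tilde x_1,\tilde x_2)$ are built from the \emph{same} underlying dynamics data, evaluated at two different discount rates. It therefore suffices to verify the two non-collinearity conditions $x_1/\|x_1\|\neq \tilde x_1/\|\tilde x_1\|$ and $x_2/\|x_2\|\neq \tilde x_2/\|\tilde x_2\|$, i.e.\ that $x_i$ and $\tilde x_i$ are not scalar multiples of one another.

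The key observation I would exploit is that each $x_i$ is affine in the discount rate. Writing
\[
y_1 = (\mu_s,\,\mu_a), \qquad z_1 = (1,\,0),
\]
and
\[
y_2 = \big(\mu_s^2+\sigma_s^2,\; 2(\mu_s\mu_a+\sigma_s\sigma_a),\; \mu_a^2+\sigma_a^2\big), \qquad z_2 = (-1,\,0,\,0),
\]
all of which depend only on the (shared) dynamics, we have $x_i = \gamma\, y_i + z_i$ and $\tilde x_i = \tilde\gamma\, y_i + z_i$. If $x_i$ and $\tilde x_i$ were collinear, so that $\tilde x_i = c\,x_i$ for some $c\in\rn$, then $(\tilde\gamma - c\gamma)\,y_i = (c-1)\,z_i$. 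When $y_i$ and $z_i$ are linearly independent this forces $\tilde\gamma = c\gamma$ and $c = 1$, hence $\tilde\gamma = \gamma$, contradicting $\gamma\neq\tilde\gamma$. Thus linear independence of $y_i$ and $z_i$ is exactly what I need to guarantee the required non-collinearity.

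It then remains to check that $y_1,z_1$ and $y_2,z_2$ are each linearly independent. Since $z_1 = (1,0)$ has vanishing second coordinate while the second coordinate of $y_1$ is $\mu_a$, the pair $y_1,z_1$ is independent precisely when $\mu_a\neq 0$; similarly $z_2=(-1,0,0)$ has vanishing last coordinate while the last coordinate of $y_2$ is $\mu_a^2+\sigma_a^2$, so $y_2,z_2$ is independent whenever $\mu_a\neq 0$ (and more generally whenever $(\mu_a,\sigma_a)\neq(0,0)$). Since a genuine control problem requires the action to influence the state dynamics, we have $\mu_a\neq 0$, both conditions of Theorem \ref{thm:twoagents} hold, and the reward is identifiable up to a constant. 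The one subtlety I would treat most carefully is this degenerate boundary: if the action affects neither drift nor volatility the problem is vacuous, and if it affects only the volatility ($\mu_a=0$, $\sigma_a\neq 0$) then $x_1$ and $\tilde x_1$ remain collinear, so the linear-in-action reward coefficients cannot be separated by varying the discount rate alone.
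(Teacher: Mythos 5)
Your proof is correct and follows the same route as the paper: reduce everything to the non-collinearity hypotheses of Theorem \ref{thm:twoagents}. The paper's entire proof, however, is the one-line assertion that the value of $\gamma$ ``introduces a non-scaling change'' in $x_1,x_2$, whereas you actually verify this via the affine decomposition $x_i=\gamma y_i+z_i$ and reduce non-collinearity to linear independence of $y_i$ and $z_i$. This buys two things. First, a clean quantitative criterion: for $x_1$ the $2\times 2$ determinant with rows $x_1,\tilde x_1$ equals $\mu_a(\tilde\gamma-\gamma)$, so collinearity occurs exactly when $\mu_a=0$. Second, and more importantly, it exposes that the paper's assertion is not unconditionally true: if $\mu_a=0$ (the action drives only the volatility, $\sigma_a\neq0$), then $x_1=(\gamma\mu_s+1,\,0)$ and $\tilde x_1=(\tilde\gamma\mu_s+1,\,0)$ are collinear for \emph{every} pair of discount rates, the first condition of Theorem \ref{thm:twoagents} fails, and by the ``only if'' direction of that theorem the reward is not identifiable up to a constant. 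So the corollary as stated carries an implicit non-degeneracy hypothesis $\mu_a\neq0$, which your proof makes explicit. Your justification of that hypothesis (``a genuine control problem requires the action to influence the state dynamics'') is slightly too quick, since $\mu_a=0$, $\sigma_a\neq0$ is a perfectly genuine control problem; but you correctly flag this boundary case rather than sweeping it away, which is more than the paper's proof does.

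One small correction to that caveat: when $\mu_a=0$ and $\sigma_a\neq0$ it is the linear-in-\emph{state} coefficient $a_{10}$, not the linear-in-action coefficient $a_{01}$, that is lost. With $\mu_a=0$ the $\beta_1$-direction in \eqref{eq: sioc-cand-f} is $(\gamma\mu_s+1,\,0)$, so varying $\beta_1$ moves only $a_{10}$, while $a_{01}=k_2/k_3-2\beta_2\gamma\bar\sigma\sigma_a$ is pinned once $\beta_2$ is identified --- and the $x_2$-condition still identifies $\beta_2$, because $\sigma_a\neq0$ makes $y_2$ and $z_2$ independent. A final fine point in your favor: you test genuine collinearity ($\tilde x_i=c\,x_i$ for some $c\in\rn$), which is the correct condition for the lines in the proof of Theorem \ref{thm:twoagents} to be non-parallel; the theorem's literal condition on normalized vectors would be satisfied by antiparallel directions (possible here when $\mu_s<-1$ flips the sign of $\gamma\mu_s+1$), even though antiparallel direction vectors still give parallel lines. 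Your formulation avoids that slip in the theorem's statement.
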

\begin{proof}
Simply observe that the value of $\gamma$ introduces a non-scaling change in the vectors $x_1, x_2$ defined in Theorem \ref{thm:twoagents}.
\end{proof}

We can also easily determine the identifiability of action-independent rewards.
\begin{coro}
For an agent with a policy of the form \eqref{eq: lqr-ctrl}, there exists an action-independent reward function corresponding to this policy if and only if 
\[k_1 = - \frac{\mu_s\mu_a +\sigma_s\sigma_a}{\mu_a^2 + \sigma_a^2}\]
and this case, the action-independent reward is unique.
\end{coro}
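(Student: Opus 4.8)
The plan is to read off, directly from the parameterization \eqref{eq: sioc-cand-f} in Theorem \ref{thm:lqr-sol-char}, exactly when the free parameters $(\beta_1,\beta_2)$ can be chosen so that the resulting quadratic reward has no dependence on $a$. Since $f(s,a)=a_{20}s^2+a_{11}sa+a_{02}a^2+a_{10}s+a_{01}a+a_{00}$, the reward is action-independent precisely when the three coefficients $a_{11}$, $a_{02}$ and $a_{01}$ all vanish; the remaining coefficients $a_{20},a_{10},a_{00}$ are unconstrained, as they multiply only powers of $s$ and the constant.

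First I would impose $a_{02}=0$ in \eqref{eq: sioc-cand-f}, which gives $\beta_2=-1/\big(2k_3\gamma(\mu_a^2+\sigma_a^2)\big)$. This value is well-defined and unique because $k_3>0$ by well-posedness of the control problem, $\gamma\neq0$, and the action genuinely influences the state, so $\mu_a^2+\sigma_a^2>0$. Substituting this value of $\beta_2$ into the equation for $a_{11}$ and setting $a_{11}=0$ eliminates $\beta_2$ entirely and collapses the constraint to the single scalar identity $k_1=-(\mu_s\mu_a+\sigma_s\sigma_a)/(\mu_a^2+\sigma_a^2)$. This is precisely the claimed condition, and the argument is reversible: the stated condition on $k_1$ holds if and only if the unique $\beta_2$ forced by $a_{02}=0$ simultaneously yields $a_{11}=0$. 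This establishes the ``if and only if''.

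It then remains to argue uniqueness. With $\beta_2$ already pinned down, the equation for $a_{01}$ in \eqref{eq: sioc-cand-f} is affine in the single remaining free parameter $\beta_1$, with coefficient $\gamma\mu_a$; setting $a_{01}=0$ determines $\beta_1$ uniquely. Once both $\beta_1$ and $\beta_2$ are fixed, every coefficient $(a_{20},a_{11},a_{02},a_{10},a_{01})$ in \eqref{eq: sioc-cand-f} is determined, so the action-independent reward is unique up to the additive constant $a_{00}$, which, as throughout, has no effect on the optimal policy.

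I expect the main obstacle to be bookkeeping rather than conceptual: one must carefully verify that substituting the $\beta_2$ from $a_{02}=0$ into the expression for $a_{11}$ cancels the $\beta_2$-dependent term and reproduces exactly the stated ratio, and confirm that the relevant non-degeneracy conditions ($k_3>0$, $\gamma\neq0$, $\mu_a^2+\sigma_a^2>0$, and $\mu_a\neq0$ for the step fixing $\beta_1$) follow from the standing well-posedness assumption on the linear-quadratic problem. The only delicate case is when the action enters solely through the volatility ($\mu_a=0$), in which the step determining $\beta_1$ degenerates; I would handle this by appealing to well-posedness to guarantee the required non-degeneracy, treating the volatility-only control as a separate degenerate subcase if needed.
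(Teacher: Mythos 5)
Your proposal matches the paper's proof essentially step for step: both impose $a_{11}=a_{02}=a_{01}=0$ in the parameterization \eqref{eq: sioc-cand-f}, solve $a_{02}=0$ (and $a_{11}=0$) for $\beta_2$, observe that consistency of these is exactly the stated condition on $k_1$, and then fix $\beta_1$ uniquely from $a_{01}=0$, with the reward determined up to the free constant $a_{00}$. Your explicit flagging of the degenerate subcases ($\mu_a=0$, $\mu_a^2+\sigma_a^2=0$) is slightly more careful than the paper, which leaves these implicit in the well-posedness assumption, but the argument is the same.
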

\begin{proof}
From Theorem \ref{thm:lqr-sol-char}, in order to have an action independent reward we must have $a_{11} = a_{02} = a_{01}=0$. From \eqref{eq: sioc-cand-f}, we know
\begin{align*}
    a_{11}=0 &\quad\Rightarrow\quad \beta_2 = \frac{k_1}{2k_3\gamma(\mu_s\mu_a+\sigma_s\sigma_a)},\\
    a_{02}=0 &\quad\Rightarrow\quad \beta_2 = \frac{-1}{2k_3\gamma(\mu_a^2 + \sigma_a^2)}.
\end{align*}
The statement $k_1 = - (\mu_s\mu_a +\sigma_s\sigma_a)/(\mu_a^2 + \sigma_a^2)$ is easily seen to be equivalent to stating that these equations are consistent.

The value of $\beta_1$ can then always be chosen in a unique way to guarantee $a_{01}=0$, as required.
\end{proof}

\section{Numerical examples of inverse reinforcement learning}
In this section, we present a regularized MDP as in Section \ref{sec:IRLMDP} to illustrate numerically the identifiability issue associated with inverse RL. In particular, we consider a state space $\ST$ with $10$ states and an action space $\AC$ with $5$ actions, with $\lambda = 1$. We compute optimal policies as in Section \ref{sec:MDP} and reconstruct the underlying rewards. As discussed in Section \ref{sec:IRL} the optimal policies and the transition kernel can be inferred from state-action trajectories, so will assumed known. We identify the state and action spaces with the basis vectors in $\rn^{10}$ and $\rn^5$ respectively, so can write $f(a,s) = a^\top R s$ for the reward function, and $\transprob(s'|s,a)= s^\top P_a(s')$ for the transition function. The true reward $R_{{\rm tr}}$ and transition matrices $\{P_a\}_{a\in \AC}$ are randomly generated and fixed; see Figures \ref{fig: true-R} and \ref{fig: true-A}.

\subsection{Non-uniqueness of infinite-sample IRL}
We first look at inverse RL starting from a single optimal policy $\pi_1$ with discount factor $\gamma_1=0.95$. We represent $\pi_1$ as a matrix $\Pi_1$ in $\rn^{5\times 10}$, where each column gives the probabilities of each action when in the corresponding state.

\begin{figure}[!htp]
    \centering
    \includegraphics[width = .5\textwidth]{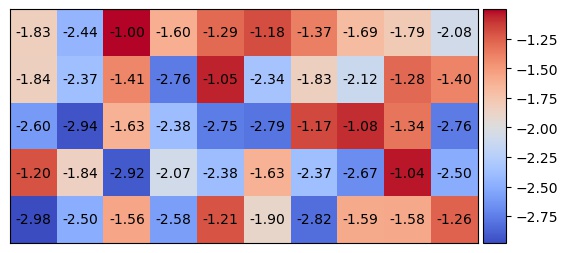}
    \caption{Underlying true reward matrix $R_{{\rm tr}}$}
    \label{fig: true-R}
\end{figure}

\begin{figure}[!htp]
    \centering
    \begin{subfigure}[b]{0.3\textwidth}
        \centering
        \includegraphics[width=\textwidth]{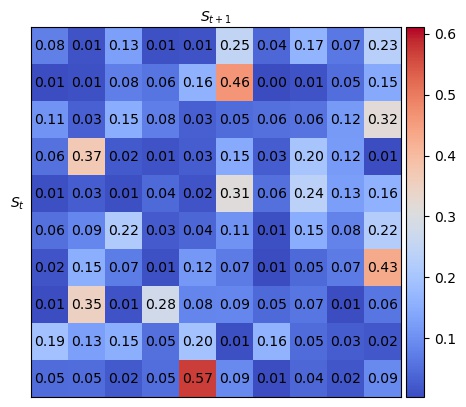}
        \caption{$P_{\mathbf{e}^5_1}$}
        %\transprob(S_{t+1}|S_t,A_t=\mathbf{e}^5_1)$}
        \label{subfig: A-1}
    \end{subfigure}
    \begin{subfigure}[b]{0.3\textwidth}
        \centering
        \includegraphics[width=\textwidth]{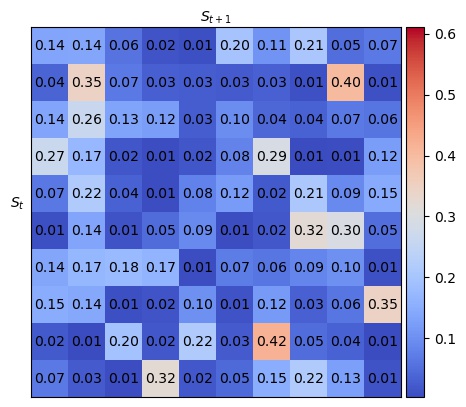}
        \caption{$P_{\mathbf{e}^5_2}$}
        %$\transprob(S_{t+1}|S_t,A_t=\mathbf{e}^5_2)$}
        \label{subfig: A-2}
    \end{subfigure}
    \begin{subfigure}[b]{0.3\textwidth}
        \centering
        \includegraphics[width=\textwidth]{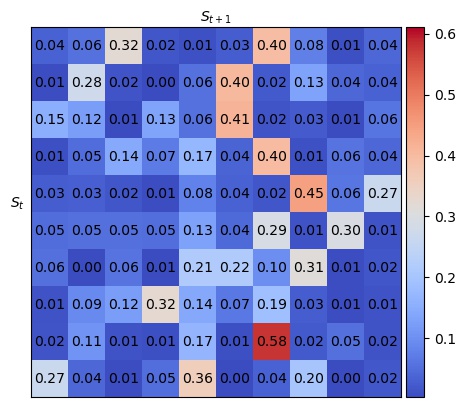}
        \caption{$P_{\mathbf{e}^5_3}$}
        %$\transprob(S_{t+1}|S_t,A_t=\mathbf{e}^5_3)$}
        \label{subfig: A-3}
    \end{subfigure}
    \\
    \begin{subfigure}[b]{0.3\textwidth}
        \centering
        \includegraphics[width=\textwidth]{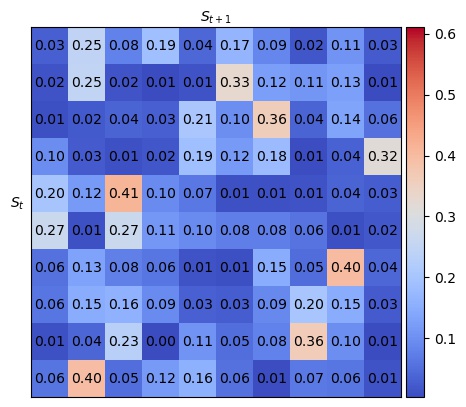}
        \caption{$P_{\mathbf{e}^5_4}$}
        %$\transprob(S_{t+1}|S_t,A_t=\mathbf{e}^5_4)$}
        \label{subfig: A-4}
    \end{subfigure}
    \begin{subfigure}[b]{0.3\textwidth}
        \centering
        \includegraphics[width=\textwidth]{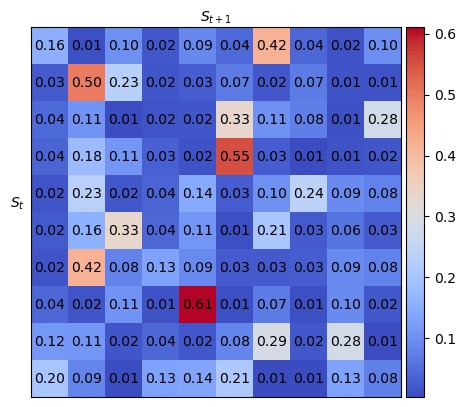}
        \caption{$P_{\mathbf{e}^5_5}$}
        %$\transprob(S_{t+1}|S_t,A_t=\mathbf{e}^5_5)$}
        \label{subfig: A-5}
    \end{subfigure}
    \caption{Underlying true transition kernel}
    \label{fig: true-A}
\end{figure}

To solve the inverse RL problem, we numerically find $R,v$ to minimize the loss  
\[L_{{\rm sing}}(R,v)=\sum_{a\in\AC}\sum_{s\in\ST}\left[a^\top \Pi_1s-\exp\left\{a^\top Rs+\gamma_1 s^\top P_a v-v^\top s\right\}\right]^2.\]
An Adam optimizer is adopted with $\alpha=0.002$, $(\beta_1,\beta_2)=(0.5,0.9)$ with overall 2000 minimization steps. The experiments are conducted over 6 different random initializations, sampled from the same distribution as was used to construct the ground truth model. The training loss $L_{{\rm sing}}$ decays rapidly to close to 0, as shown in Figure \ref{fig: sing-trloss}. This indicates that, after the minimization procedure comes to an end, the learnt reward matrix $\hat R$ reveals a corresponding optimal policy $\hat \Pi_1$ close to the true optimal policy $\Pi_1$; see also Figure \ref{fig: sing-char} for a direct comparison.

However, when comparing the learnt reward $\hat R$ and the underlying reward $R_{{\rm tr}}$, as in Figures \ref{fig: sing-C} and \ref{subfig: sing-comp-c}, as well as the comparison between the corresponding value vectors as in Figure \ref{subfig: sing-comp-v}, we can see that the true reward function $R_{{\rm tr}}$ has not been correctly inferred. Here this is not an issue of statistical error, as we assume full information on the optimal policy and the Markov transition kernel.

\begin{figure}[!ht]
\centering
    \begin{subfigure}[b]{0.47\textwidth}
        \centering
        \includegraphics[width=\textwidth]{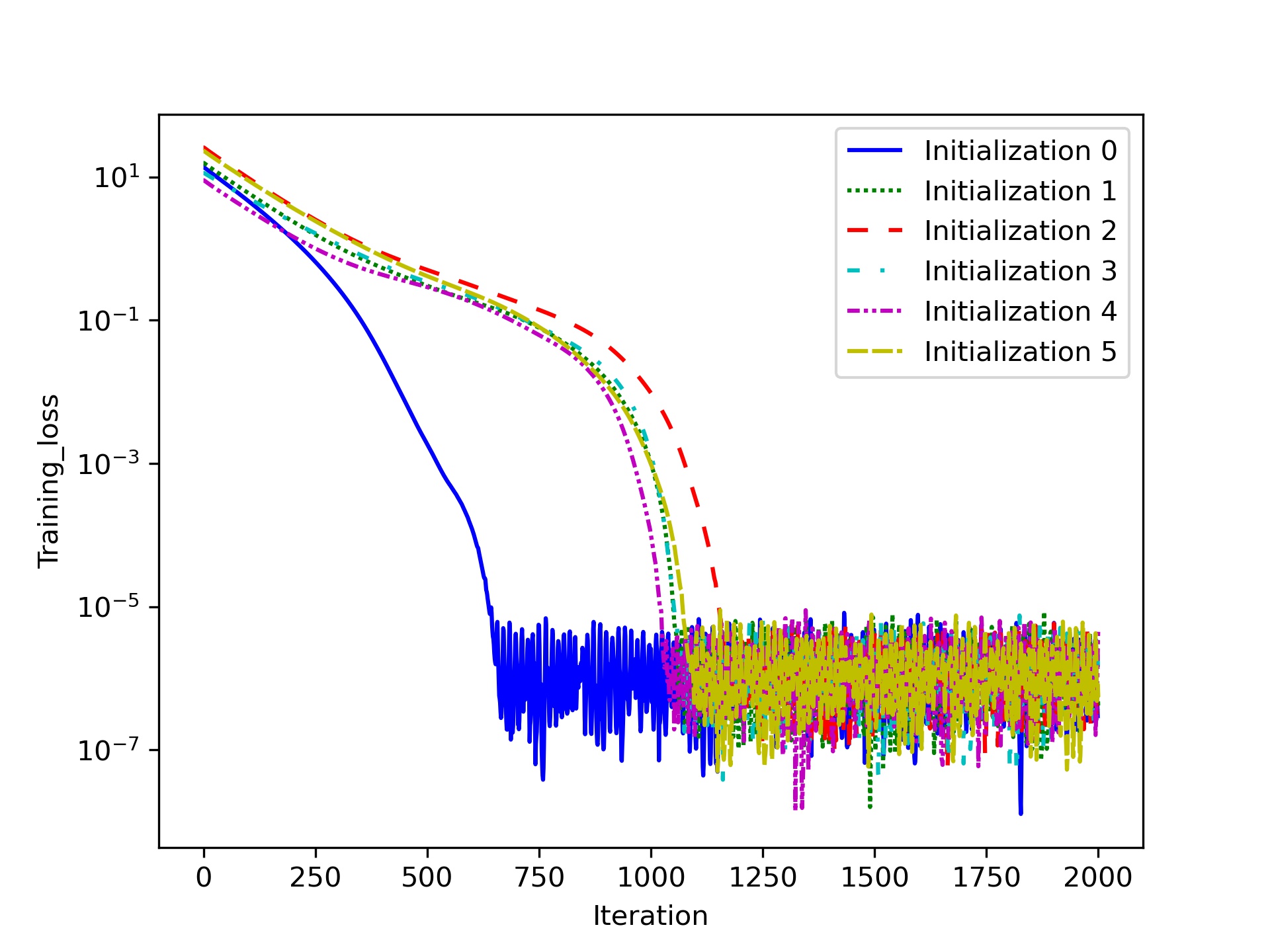}
        \caption{Loss with one optimal policy, under $\gamma_1$}
        \label{fig: sing-trloss}
    \end{subfigure}
    \qquad
    \begin{subfigure}[b]{0.47\textwidth}
        \centering
        \includegraphics[width=\textwidth]{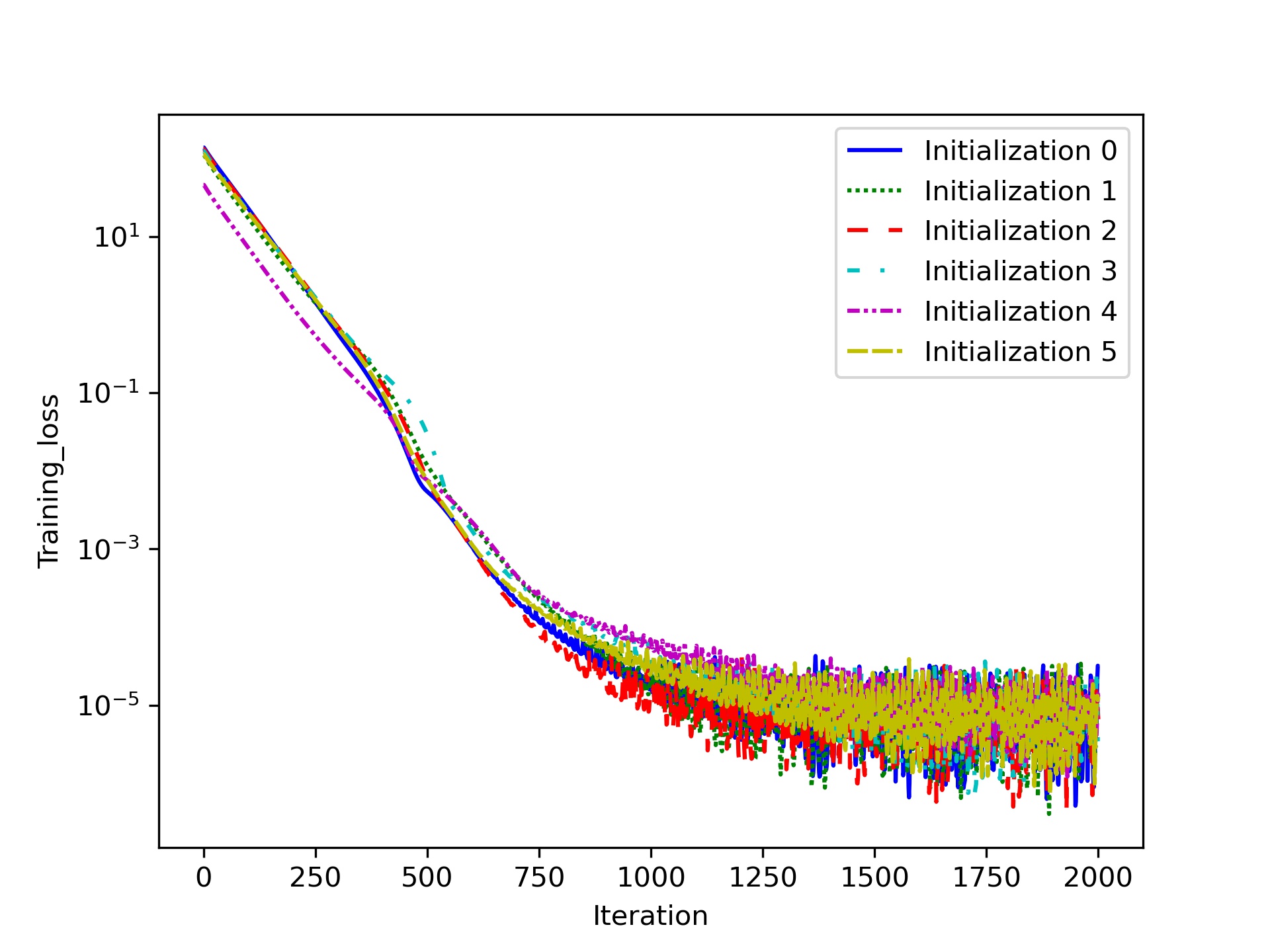}
        \caption{Loss with two optimal policies, under $\gamma_1$ and $\gamma_2$}
        \label{fig: doub-trloss}
    \end{subfigure}
    \caption{Training Losses}
    \label{fig: tr-loss}
\end{figure}

\begin{figure}[!ht]
    \centering
    \begin{subfigure}[b]{0.3\textwidth}
        \centering
        \includegraphics[width=\textwidth]{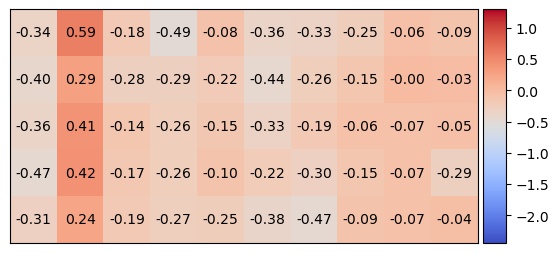}
        \caption{Initialization 0}
        \label{subfig: R0-0}
    \end{subfigure}
    \begin{subfigure}[b]{0.3\textwidth}
        \centering
        \includegraphics[width=\textwidth]{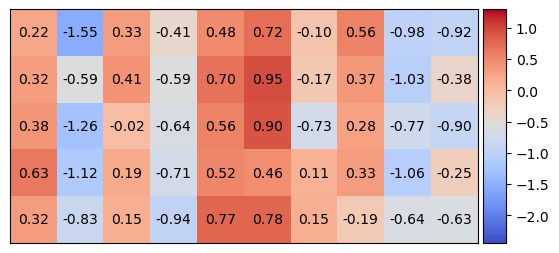}
        \caption{Initialization 1}
        \label{subfig: R0-1}
    \end{subfigure}
    \begin{subfigure}[b]{0.3\textwidth}
        \centering
        \includegraphics[width=\textwidth]{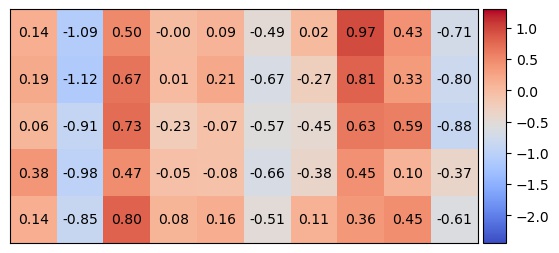}
        \caption{Initialization 2}
        \label{subfig: R0-2}
    \end{subfigure}
    \\
    \begin{subfigure}[b]{0.3\textwidth}
        \centering
        \includegraphics[width=\textwidth]{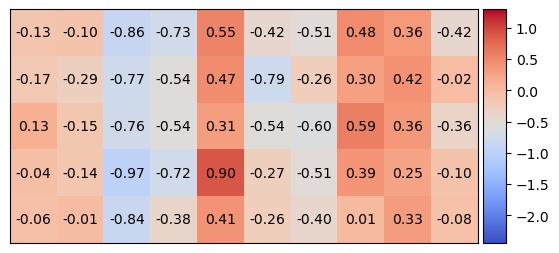}
        \caption{Initialization 3}
        \label{subfig: R0-3}
    \end{subfigure}
    \begin{subfigure}[b]{0.3\textwidth}
        \centering
        \includegraphics[width=\textwidth]{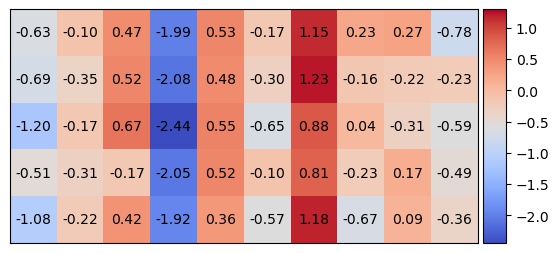}
        \caption{Initialization 4}
        \label{subfig: R0-4}
    \end{subfigure}
    \begin{subfigure}[b]{0.3\textwidth}
        \centering
        \includegraphics[width=\textwidth]{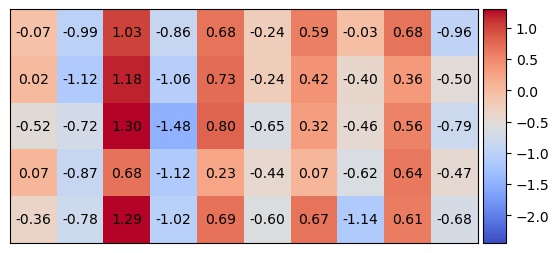}
        \caption{Initialization 5}
        \label{subfig: R0-5}
    \end{subfigure}
    \caption{Learning from one optimal policy, under $\gamma_1$: difference $\hat R - R_{{\rm tr}}$ between learnt and true reward matrices}
    \label{fig: sing-C}
\end{figure}

\begin{figure}[!htp]
    \centering
    \begin{subfigure}[t]{0.45\textwidth}
        \centering
        \includegraphics[width=\textwidth]{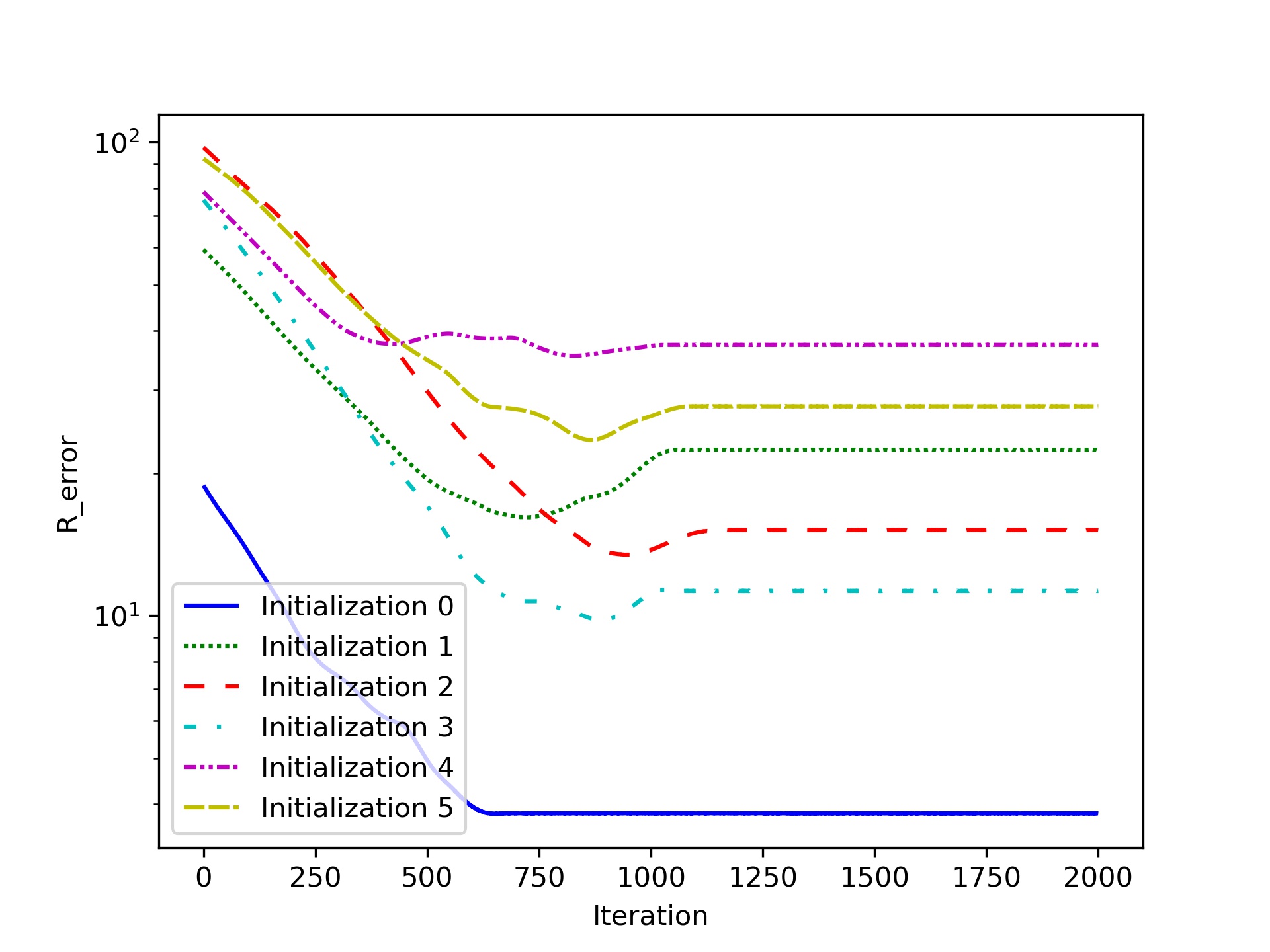}
        \caption{$\ell_2$ error of learnt reward.}
        \label{subfig: sing-comp-c}
    \end{subfigure}
    \begin{subfigure}[t]{0.45\textwidth}
        \centering
        \includegraphics[width=\textwidth]{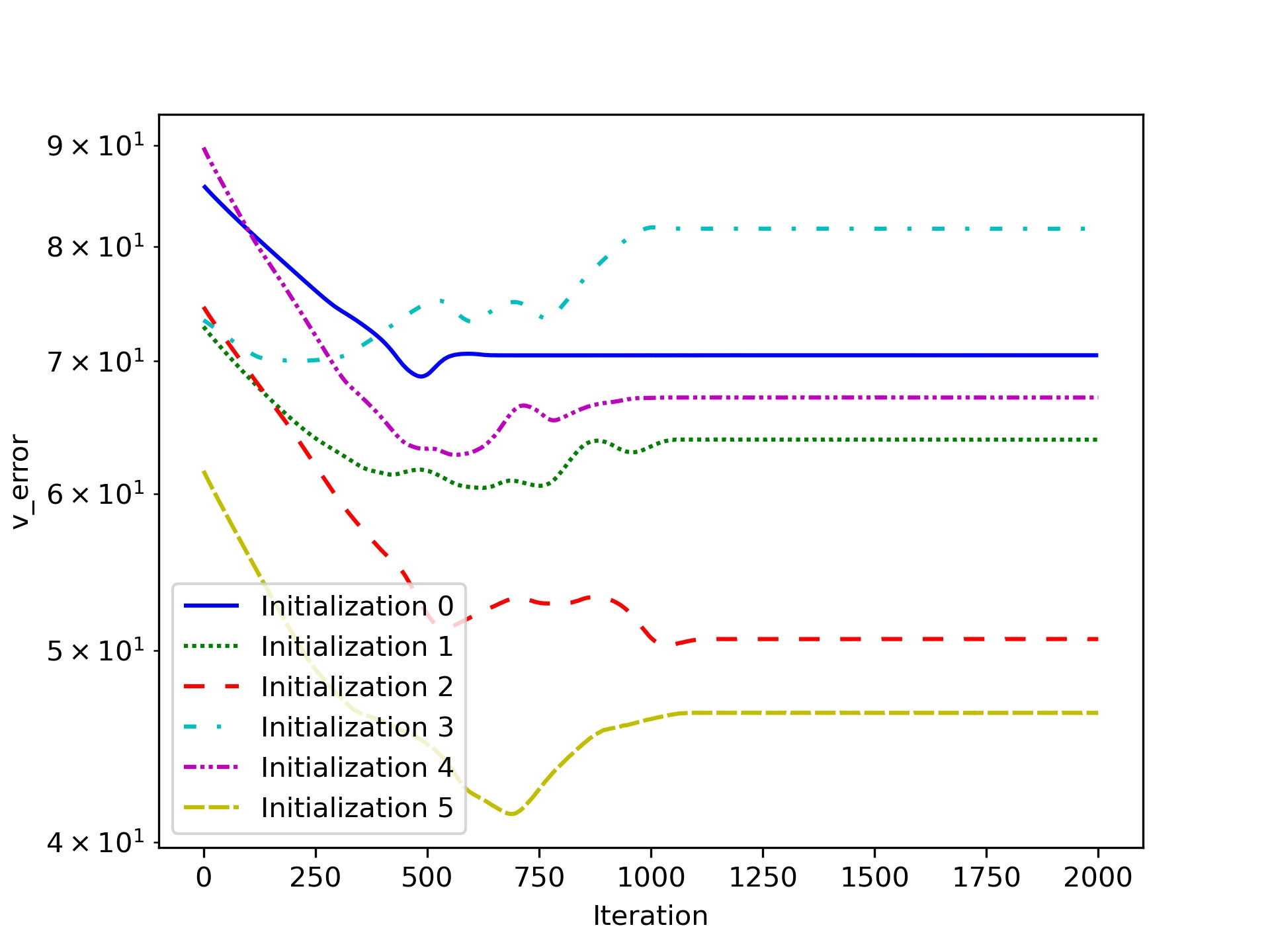}
        \caption{$\ell_2$ error of learnt value function.}
        \label{subfig: sing-comp-v}
    \end{subfigure}
    \caption{Learning from one optimal policy, under $\gamma_1$: comparisons}
    \label{fig: sing-comp}
\end{figure}

\subsection{Uniqueness of IRL with multiple discount rates}
We now demonstrate that the issue of identifiability can be resolved if there is additional information on an optimal policy under the same reward matrix $R_{{\rm tr}}$ but different environment. Here, we assume we are given the  policy $\Pi_1$ optimal with discount factor $\gamma_1=0.95$, and the policy $\Pi_2$ optimal with discount factor $\gamma_2=0.25$. Correspondingly, the loss function for the minimization is adjusted to
\[\begin{aligned}
L_{{\rm doub}}(R, v_1, v_2)&=\frac{1}{2}\sum_{a\in\AC}\sum_{s\in\ST}\left[a^\top \Pi_1s-\exp\left\{a^\top Rs+\gamma_1 s^\top P_a v_1-v_1^\top s\right\}\right]^2\\
&+\frac{1}{2}\sum_{a\in\AC}\sum_{s\in\ST}\left[a^\top \Pi_2s-\exp\left\{a^\top Rs+\gamma_2 s^\top P_a v_2-v_2^\top s\right\}\right]^2.
\end{aligned}\]
An Adam optimizer is adopted with $\alpha=0.005$, $(\beta_1,\beta_2)=(0.5,0.9)$ with overall 2000 minimization steps. With the same set of 6 random initializations for the minimization procedure, the training loss $L_{{\rm doub}}$ also decays rapidly to close to 0. This again suggests that the learnt reward matrix $\tilde R$ can lead to policies $\tilde \Pi_1$ and $\tilde \Pi_2$, each optimal when using the corresponding discount factor $\gamma_1$ and $\gamma_2$, that are close to the given policies $\Pi_1$ and $\Pi_2$; see Figures \ref{fig: doub-char-1} and \ref{fig: doub-char-2}. What differs from the single optimal policy case is that, with the additional information $\Pi_2$, we are able to consistently recover $R_{{\rm tr}}$ up to a constant shift; see Figures \ref{fig: doub-C} and \ref{fig: doub-comp}. Some numerical error remains, due to the optimization algorithm used, as seen by the fact the graphs in  Figure \ref{fig: doub-C} do still vary, and the error in the value function $v_1$ in \ref{fig: doub-comp}(a). Nevertheless, the errors are an order of magnitude less than was observed in Figure \ref{fig: sing-comp} when using observations under a single discount rate.

\begin{figure}[!ht]
    \centering
    \begin{subfigure}[b]{0.3\textwidth}
        \centering
        \includegraphics[width=\textwidth]{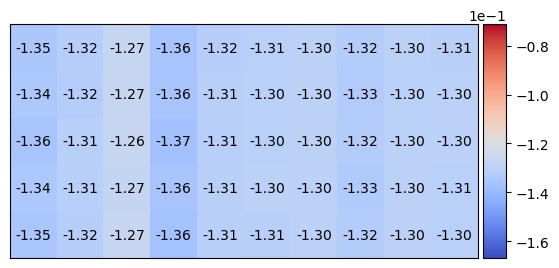}
        \caption{Initialization 0}
        \label{subfig: R1-0}
    \end{subfigure}
    \begin{subfigure}[b]{0.3\textwidth}
        \centering
        \includegraphics[width=\textwidth]{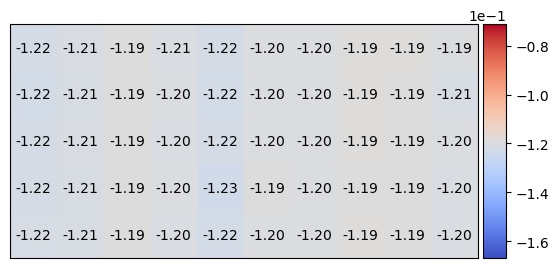}
        \caption{Initialization 1}
        \label{subfig: R1-1}
    \end{subfigure}
    \begin{subfigure}[b]{0.3\textwidth}
        \centering
        \includegraphics[width=\textwidth]{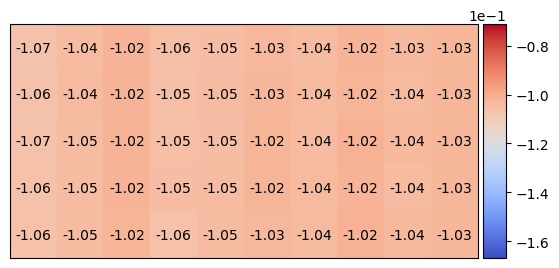}
        \caption{Initialization 2}
        \label{subfig: R1-2}
    \end{subfigure}
    \\
    \begin{subfigure}[b]{0.3\textwidth}
        \centering
        \includegraphics[width=\textwidth]{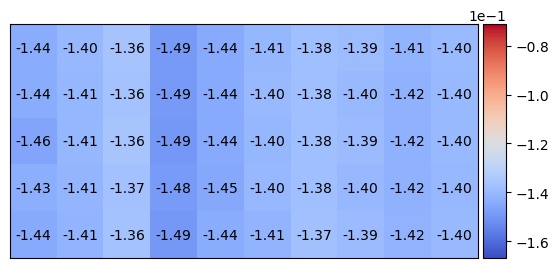}
        \caption{Initialization 3}
        \label{subfig: R1-3}
    \end{subfigure}
    \begin{subfigure}[b]{0.3\textwidth}
        \centering
        \includegraphics[width=\textwidth]{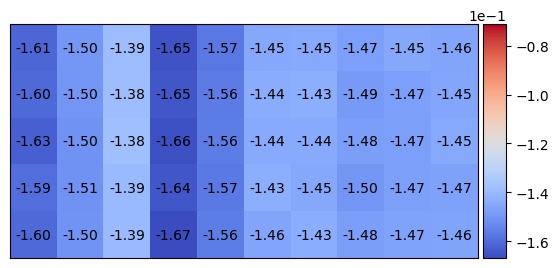}
        \caption{Initialization 4}
        \label{subfig: R1-4}
    \end{subfigure}
    \begin{subfigure}[b]{0.3\textwidth}
        \centering
        \includegraphics[width=\textwidth]{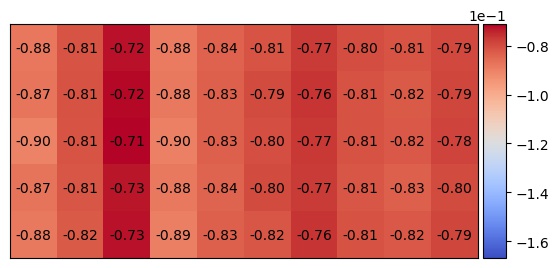}
        \caption{Initialization 5}
        \label{subfig: R1-5}
    \end{subfigure}
    \caption{Learning from two optimal policies, under $\gamma_1$ and $\gamma_2$: difference $\tilde R - R_{\rm tr}$ between learnt and true $R$ matrices. Note scale of $10^{-1}.$}
    \label{fig: doub-C}
\end{figure}

\begin{figure}[!htp]
    \centering
    \begin{subfigure}[t]{0.32\textwidth}
        \centering
        \includegraphics[width=\textwidth]{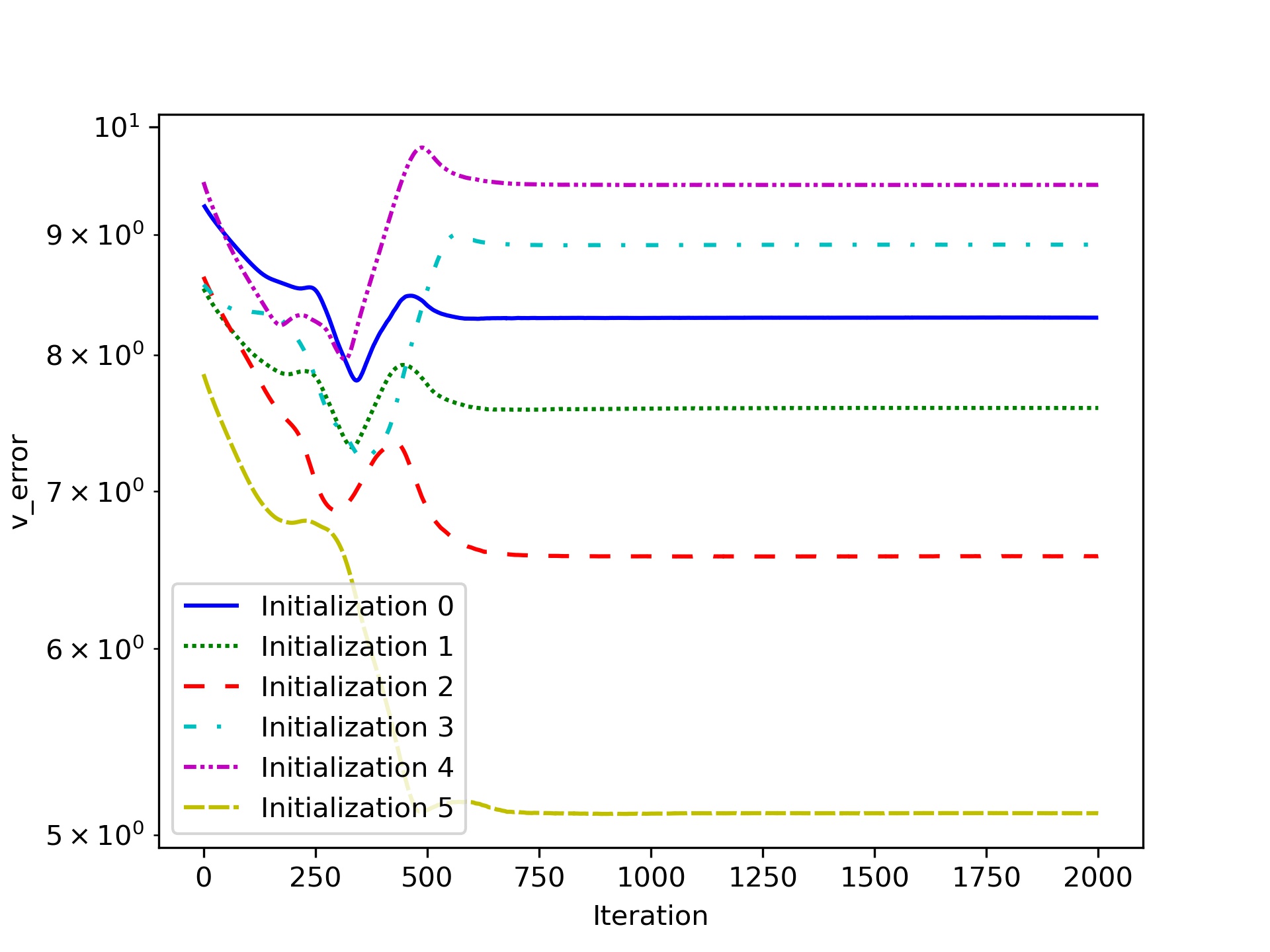}
        \caption{$\ell_2$ error of learnt value\\ function $v_1$ with discount $\gamma_1$}
        \label{subfig: doub-comp-v1}
    \end{subfigure}
    \begin{subfigure}[t]{0.32\textwidth}
        \centering
        \includegraphics[width=\textwidth]{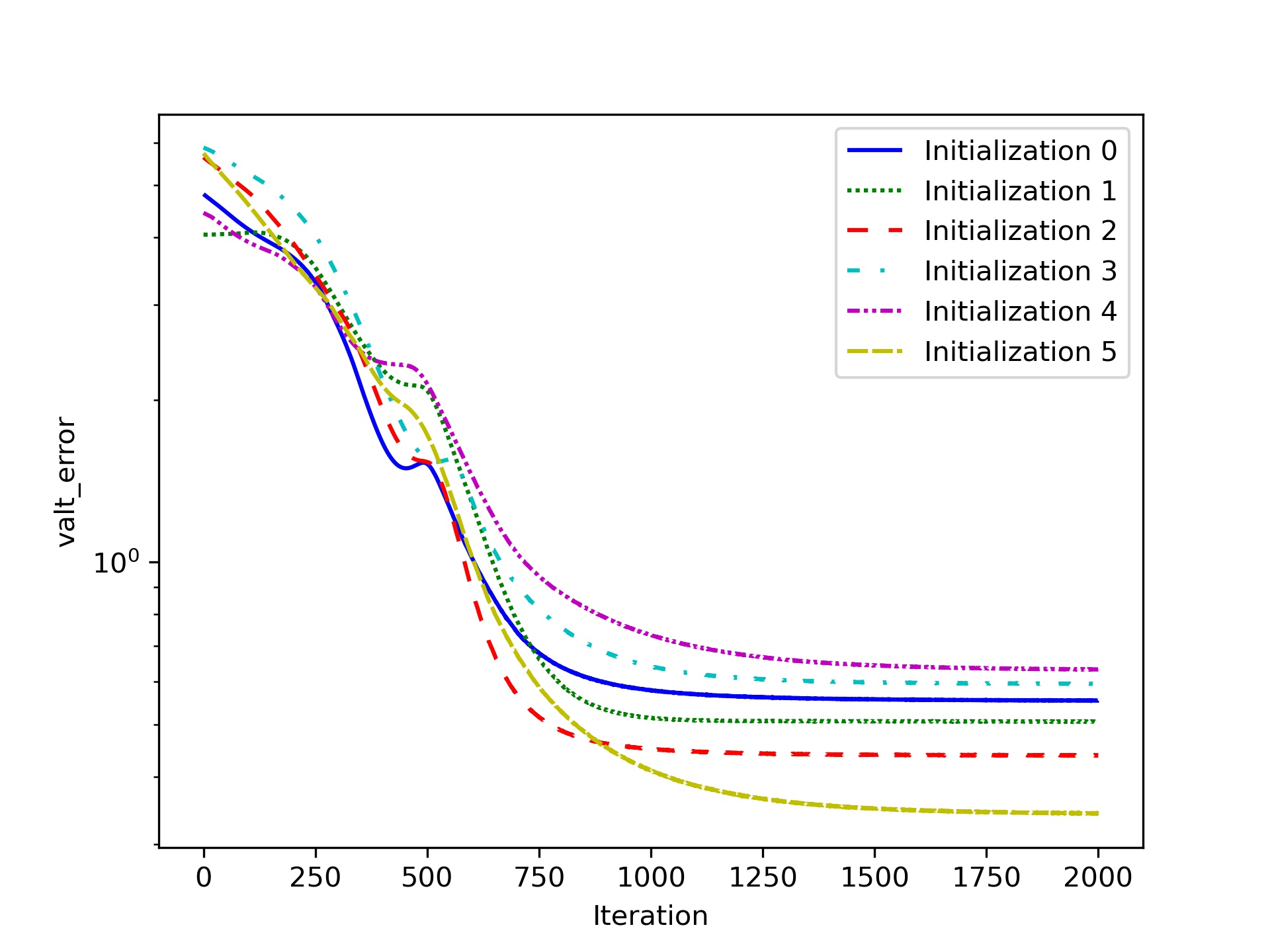}
        \caption{$\ell_2$ error of learnt value\\ function $v_2$  with discount  $\gamma_2$}
        \label{subfig: doub-comp-v2}
    \end{subfigure}
    \begin{subfigure}[t]{0.32\textwidth}
        \centering
        \includegraphics[width=\textwidth]{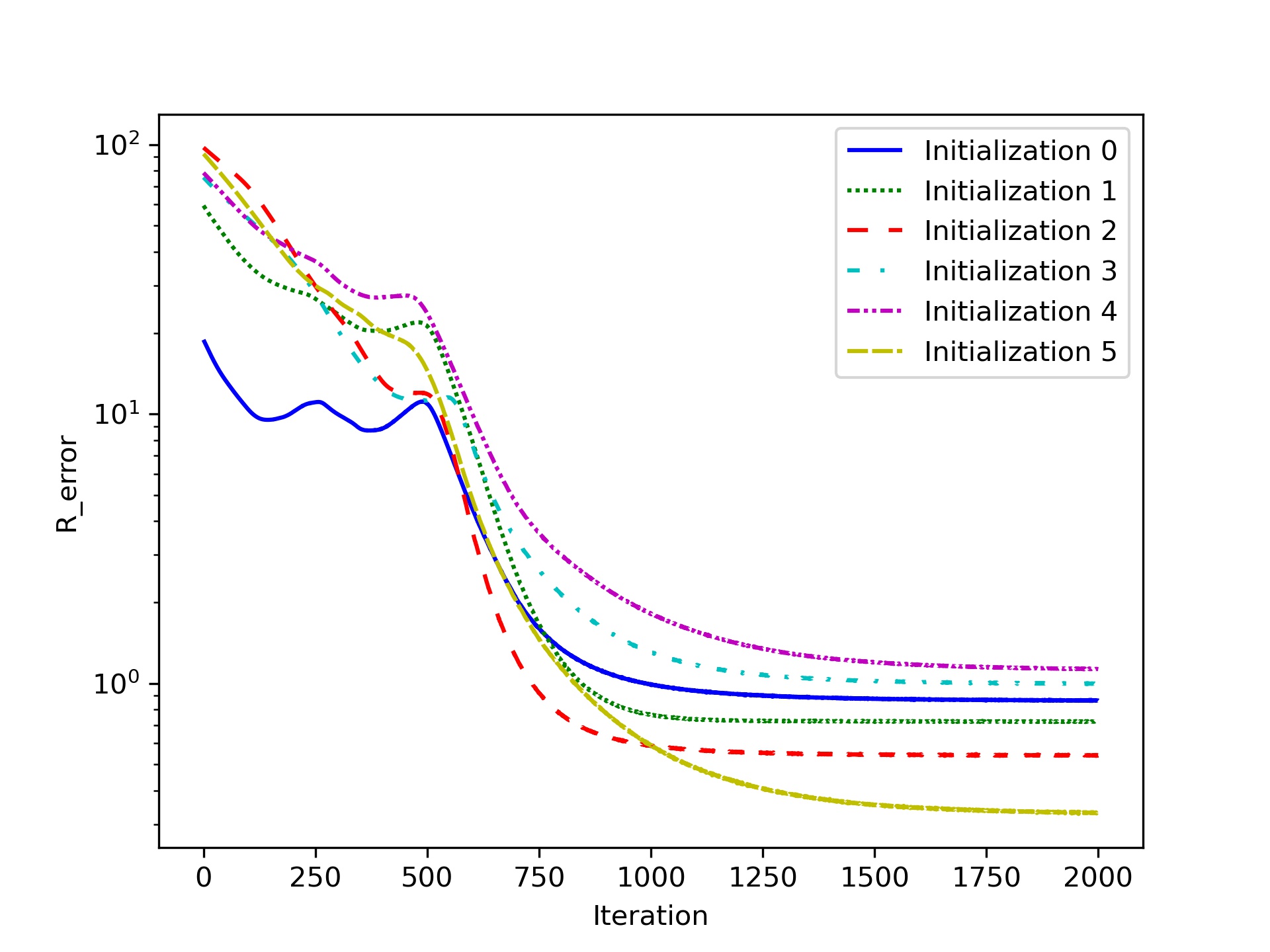}
        \caption{$\ell_2$ error of learnt reward matrix}
    \end{subfigure}
    \caption{Two optimal policies under $\gamma_1$ and $\gamma_2$: comparisons}
    \label{fig: doub-comp}
\end{figure}
\begin{figure}[!htp]
    \centering
    \begin{subfigure}[b]{0.3\textwidth}
        \centering
        \includegraphics[width=\textwidth]{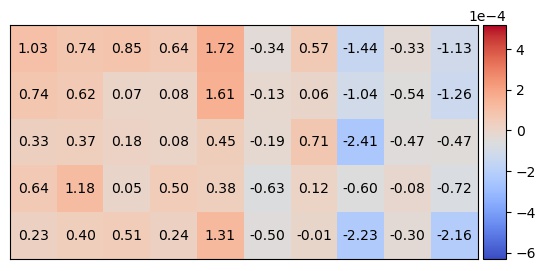}
        \caption{Initialization 0}
        \label{subfig: csst-0}
    \end{subfigure}
    \begin{subfigure}[b]{0.3\textwidth}
        \centering
        \includegraphics[width=\textwidth]{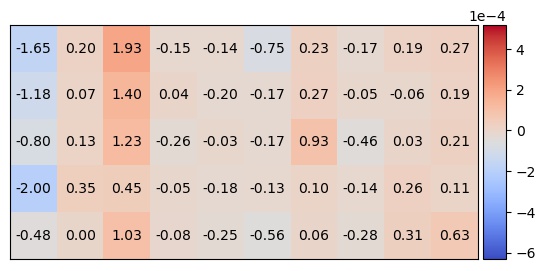}
        \caption{Initialization 1}
        \label{subfig: csst-1}
    \end{subfigure}
    \begin{subfigure}[b]{0.3\textwidth}
        \centering
        \includegraphics[width=\textwidth]{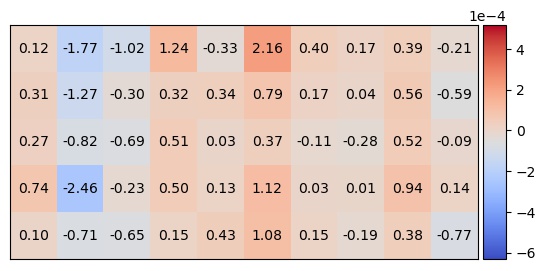}
        \caption{Initialization 2}
        \label{subfig: csst-2}
    \end{subfigure}
    \\
    \begin{subfigure}[b]{0.3\textwidth}
        \centering
        \includegraphics[width=\textwidth]{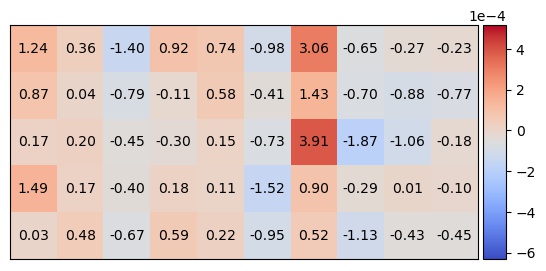}
        \caption{Initialization 3}
        \label{subfig: csst-3}
    \end{subfigure}
    \begin{subfigure}[b]{0.3\textwidth}
        \centering
        \includegraphics[width=\textwidth]{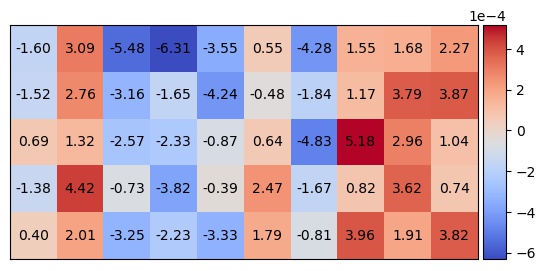}
        \caption{Initialization 4}
        \label{subfig: csst-4}
    \end{subfigure}
    \begin{subfigure}[b]{0.3\textwidth}
        \centering
        \includegraphics[width=\textwidth]{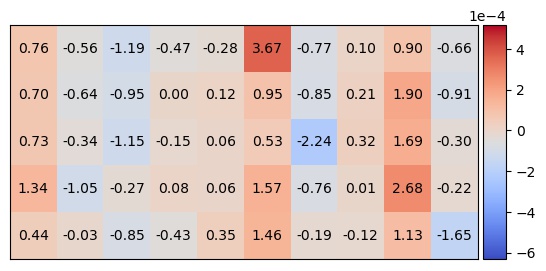}
        \caption{Initialization 5}
        \label{subfig: csst-5}
    \end{subfigure}
    \caption{Learning from optimal policy under $\gamma_1$: difference $\hat\Pi_1-\Pi_1$ between optimal policy under the learnt model and the true optimal policy. Note scale of $10^{-4}$.}
    \label{fig: sing-char}
\end{figure}
\begin{figure}[!htp]
    \centering
    \begin{subfigure}[b]{0.3\textwidth}
        \centering
        \includegraphics[width=\textwidth]{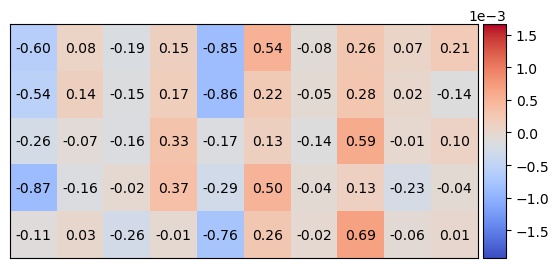}
        \caption{Initialization 0}
        \label{subfig: csst-1-0}
    \end{subfigure}
    \begin{subfigure}[b]{0.3\textwidth}
        \centering
        \includegraphics[width=\textwidth]{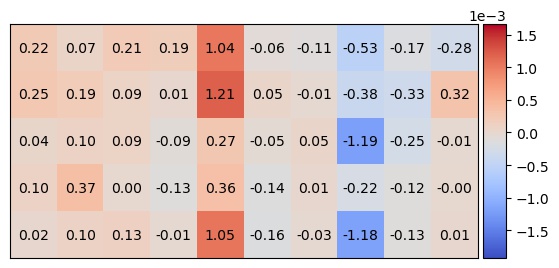}
        \caption{Initialization 1}
        \label{subfig: csst-1-1}
    \end{subfigure}
    \begin{subfigure}[b]{0.3\textwidth}
        \centering
        \includegraphics[width=\textwidth]{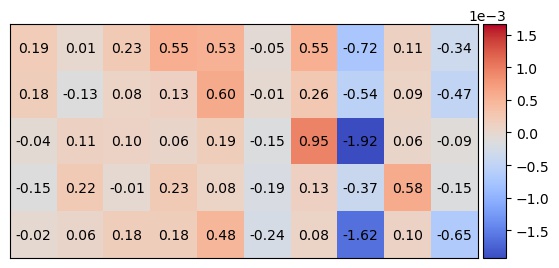}
        \caption{Initialization 2}
        \label{subfig: csst-1-2}
    \end{subfigure}
    \\
    \begin{subfigure}[b]{0.3\textwidth}
        \centering
        \includegraphics[width=\textwidth]{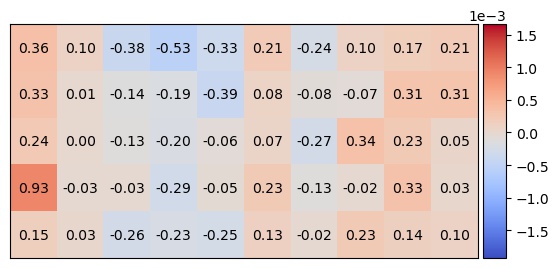}
        \caption{Initialization 3}
        \label{subfig: csst-1-3}
    \end{subfigure}
    \begin{subfigure}[b]{0.3\textwidth}
        \centering
        \includegraphics[width=\textwidth]{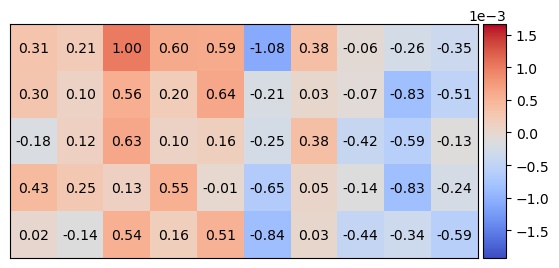}
        \caption{Initialization 4}
        \label{subfig: csst-1-4}
    \end{subfigure}
    \begin{subfigure}[b]{0.3\textwidth}
        \centering
        \includegraphics[width=\textwidth]{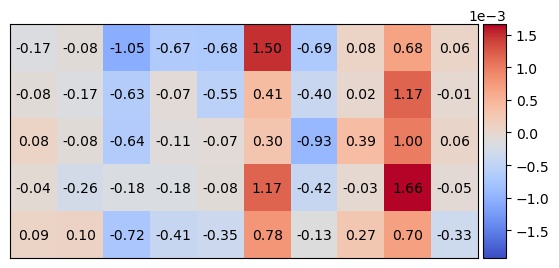}
        \caption{Initialization 5}
        \label{subfig: csst-1-5}
    \end{subfigure}
    \caption{Learning from  optimal policies under $\gamma_1$ and $\gamma_2$: difference  $\tilde\Pi_1 - \Pi_1$ between learnt and true policies under $\gamma_1$. Note scale of $10^{-3}$.}
    \label{fig: doub-char-1}
\end{figure}
\begin{figure}[!htp]
    \centering
    \begin{subfigure}[b]{0.3\textwidth}
        \centering
        \includegraphics[width=\textwidth]{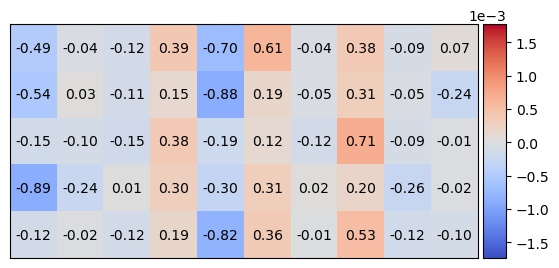}
        \caption{Initialization 0}
        \label{subfig: csst-2-0}
    \end{subfigure}
    \begin{subfigure}[b]{0.3\textwidth}
        \centering
        \includegraphics[width=\textwidth]{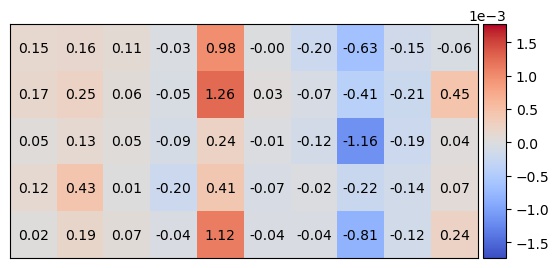}
        \caption{Initialization 1}
        \label{subfig: csst-2-1}
    \end{subfigure}
    \begin{subfigure}[b]{0.3\textwidth}
        \centering
        \includegraphics[width=\textwidth]{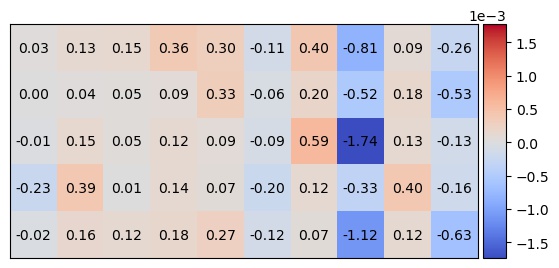}
        \caption{Initialization 2}
        \label{subfig: csst-2-2}
    \end{subfigure}
    \\
    \begin{subfigure}[b]{0.3\textwidth}
        \centering
        \includegraphics[width=\textwidth]{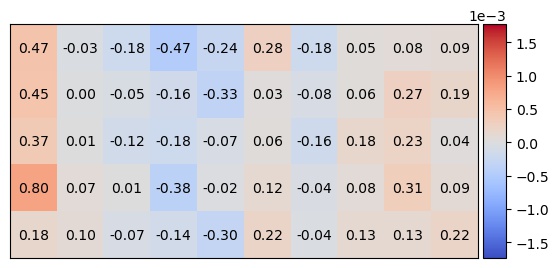}
        \caption{Initialization 3}
        \label{subfig: csst-2-3}
    \end{subfigure}
    \begin{subfigure}[b]{0.3\textwidth}
        \centering
        \includegraphics[width=\textwidth]{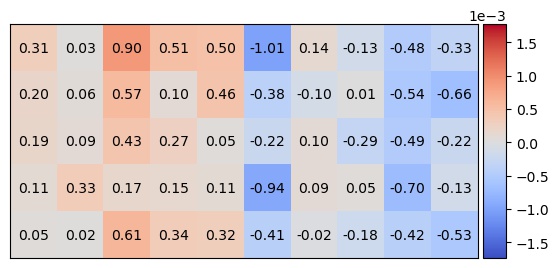}
        \caption{Initialization 4}
        \label{subfig: csst-2-4}
    \end{subfigure}
    \begin{subfigure}[b]{0.3\textwidth}
        \centering
        \includegraphics[width=\textwidth]{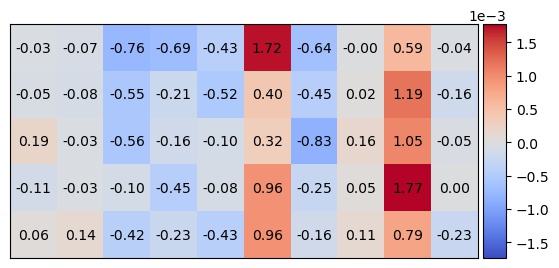}
        \caption{Initialization 5}
        \label{subfig: csst-2-5}
    \end{subfigure}
    \caption{Learning from optimal policies under $\gamma_1$ and $\gamma_2$: differences $\tilde\Pi_2 - \Pi_2$ between learnt and true policies under $\gamma_2$. Note scale of $10^{-3}$.}
    \label{fig: doub-char-2}
\end{figure}

\section*{Acknowledgements}
The authors acknowledge the support of the Alan Turing
Institute under the Engineering and Physical Sciences Research Council grant EP/N510129/1. Samuel Cohen also acknowledges the support of the Oxford-Man Institute
for Quantitative Finance. As a visiting scholar, Haoyang Cao also appreciates the support provided by the Mathematical Institute at University of Oxford.

\bibliographystyle{plainnat}
\bibliography{irl}

\newpage
\appendix

\section*{Appendix: A discussion of guided cost learning and related maximum entropy inverse reinforcement learning models}\label{sec:GCL}

The guided cost learning algorithm was proposed in \citet{finn2016guided} to solve an (undiscounted) inverse reinforcement learning problem over a finite time horizon with a finite state-action space $(\ST,\AC)$. In \citet{finn2016guided}, instead of directly modelling the optimal feedback policy, the optimal trajectory distribution is taken as the starting point for inference. Adopting the idea of the maximum casual entropy model in \citet{ziebart2010modeling} (phrased in terms of rewards rather than costs) a common interpretation of the algorithm assumes we observe trajectories $\tau$ sampled from the distribution \begin{equation}\label{eq:gcl_pf}
p^f(\tau=(s_0^\tau,a_0^\tau,\dots,s_{T-1}^\tau,a_{T-1}^\tau,s_T^\tau))=\frac{1}{Z^f}\exp\bigg\{\sum_{t=0}^{T-1}f(s_t^\tau, a_t^\tau)\bigg\},
\end{equation}
where the partition factor \[Z^f=\sum_{\tau}\exp\bigg\{\sum_{t=0}^{T-1}f(s_t^\tau, a_t^\tau)\bigg\}=\E_{\tau\sim q}\bigg[\exp\bigg\{\sum_{t=0}^{T-1}f(s_t^\tau, a_t^\tau)\bigg\}\bigg/q(\tau)\bigg]\]
is estimated through importance sampling with the `ambient distribution' $q(\tau)$, which can be chosen arbitrarily\footnote{
An additional complexity in the guided cost learning algorithm is that the reward function and the ambient distribution are updated iteratively. Numerically, this can be seen as a variance reduction technique, rather than a conceptual change to the algorithm. First, the reward function $f$ is updated by alternately maximizing the log likelihood $\log p^f(\tau)$ over the demonstrator's trajectories $\{\tau^*_i\}_{i=1}^N$, which is equivalent to solving $\hat f=\argmin_fD_{\rm{KL}}(q^*\|p^f)$. Secondly, the ambient distribution $q$ is updated by minimizing the KL divergence $D_{KL}(q\|p^f)$ using the trajectories $\{\tau^q_j\}_{j=1}^M$ sampled from $q(\tau)=\mu_0(s^\tau_0)\prod_{t=0}^{T-1}\pi_t(a^\tau_t|s^\tau_t)\transprob(s^\tau_{t+1}|s^\tau_t,a^\tau_t)$. Using this method, the transition probabilities $\transprob$ can also be estimated, and $q$ can be seen as closely related to the law $\bar p^f$ in \eqref{eq:gcl_p2f}.}. 

As mentioned above, and discussed further by \citet{ziebart2008maximum} and \citet{levine2018reinforcement}, this is consistent with our entropy regularized MDP when transitions are deterministic, but differs for stochastic problems. An alternative maximum entropy model, which incorporates knowledge of $\transprob$, assumes trajectories  are sampled from
\begin{equation}\label{eq:gcl_p2f}
\bar p^f(\tau)=\frac{\mu_0(s^\tau_0)}{Z^f}\prod_{t=0}^{T-1}\exp\Big\{f(s^\tau_t,a^\tau_t)\Big\}\transprob(s^\tau_{t+1}|s^\tau_t,a^\tau_t).
\end{equation}

To see how this connects to the entropy regularized MDP, we observe that a entropy-regularized optimizing agent will generate trajectories with distribution 
\begin{equation}\label{eq:gcl_q}
q^*(\tau)=\mu_0(s^\tau_0)\prod_{t=0}^{T-1}\pi^*_t(a^\tau_t|s^\tau_t)\transprob(s^\tau_{t+1}|s^\tau_t,a^\tau_t),\
\end{equation}
where $\pi^*=\{\pi^*_t\}_{t=0}^{T-1}$ solves the problem discussed in Section \ref{sec:finitehorizon}.

Given that we do not have an infinite-horizon time-homogenous system, the optimal policy $\pi^*$ is typically time-dependent and this is reflected in the density $q^*$, and hence in the trajectories we observe. Using $\bar p^f$ in \eqref{eq:gcl_p2f} as the basis of the guided cost learning algorithm, the demonstrator's optimal trajectory distribution $q^*$ can be written in the desired form (i.e. for some choice of $f$ in \eqref{eq:gcl_p2f}, which may or may not correspond to the agent's rewards), provided the underlying $f_{\mathrm{true}}$ and $g_{\mathrm{true}}$ lead to a time-invariant optimal policy $\pi^*$. Otherwise, one should further adjust the guided cost learning model $\bar p^f$ to include time-dependent rewards $f$, that is,
\begin{equation}\label{eq:gcl_p3f}
\tilde p^f(\tau)=\frac{\mu_0(s^\tau_0)}{Z^f}\prod_{t=0}^{T-1}\exp\Big\{f(t,s^\tau_t,a^\tau_t)\Big\}\transprob(s^\tau_{t+1}|s^\tau_t,a^\tau_t).
\end{equation}

With the addition of time-dependent rewards, it is interesting to consider what this variation of guided cost learning will output. Suppose we observe a large number of trajectories and estimate a reward $f_{\mathrm{est}}$ to maximize the likelihood \eqref{eq:gcl_p3f}, or equivalently to minimize the KL divergence $D_{\mathrm{KL}}(q^*\|\tilde p^f)$. Comparing $\tilde p^f$ in \eqref{eq:gcl_p3f} with $q^*$ in \eqref{eq:gcl_q}, we see that the minimum KL divergence is $D_{\mathrm{KL}}(q^*\|\tilde p^f)=0$, which is achieved when, for each $t\in \{0\dots,T-1\},$
\begin{align*}
    f_{\mathrm{est}}(t,s,a)+c_t &= Q^*_t(s,a) - V^*_t(s) \\
    &= f_{\mathrm{true}}(t, s,a)+\E_{S'\sim\transprob(\cdot|s,a)}\left[V^*_{t+1}(S')\right] - V^*_t(s) 
\end{align*}
where $c_t\in \rn$ is a constant (which may depend on $t$, but not on $s$). This will yield $f_{\mathrm{true}}= f_{\mathrm{est}}$ provided $(t,s)\mapsto V^*_t(s)$ is a deterministic function of time (i.e. it is independent of $s$), and this is a necessary condition for nontrivial $\transprob$.

In other words, the identifiability issue discussed in the main body of this paper remains, as the demonstrator's trajectory distribution will depend on the state-action value function $Q_t^*$, rather than directly on the reward. Furthermore, this variation of guided cost learning generally corresponds to finding a reward which generates the observed policy, \emph{and yields a value function $V^*$ which does not vary with the state of the system}. Of course, this reward will not usually be the same as that faced by the demonstrator, and so the results of guided cost learning are not guaranteed to generalize to agents with different transition probabilities.

We note that \cite{balakrishnan2020efficient} discuss the non-identifiability of costs in a MaxEntIRL approach. Their work focuses on building a projection under which rewards resulting in similar policies are mapped together, and then build a Bayesian estimation method for this projected data. What we have seen is that this approach is consistent (after the modifications discussed above), and will identify \emph{some} cost function which gives the corresponding policy. For the entropy-regularized problem, our results precisely describe the kernel of this projection -- it must correspond to different choices of the value function for the system.

\end{document}